\xpatchcmd{\algorithmic}{\itemsep\z@}{\itemsep=12ex plus2pt}{}{}
\newtheorem{theorem}{Theorem}
\newtheorem{lemma}{Lemma}
\newtheorem{proposition}{Proposition}
\newtheorem{definition}{Definition}
\newtheorem{remark}{Remark}
\begin{document}

\runninghead{Fouad et al.}

\title{Energy Sufficiency in Unknown Environments via Control Barrier Functions}

\author{Hassan Fouad, Vivek Shankar Varadharajan, Giovanni Beltrame}

\affiliation{Polytechnique Montr\'eal, Canada\\
}

\corrauth{Hassan Fouad, MISTLab,
Polytechnique Montr\'eal,
Quebec,
Canada}

\email{hassan.fouad@polymtl.ca}

\begin{abstract}
  Maintaining energy sufficiency of a battery-powered robot system is a
  essential for long-term missions. This capability should be flexible enough to
  deal with different types of environment and a wide range of missions, while
  constantly guaranteeing that the robot does not run out of energy. In this
  work we present a framework based on Control Barrier Functions (CBFs) that
  provides an energy sufficiency layer that can be applied on top of any path
  planner and provides guarantees on the robot's energy consumption during mission
  execution. In practice, we smooth the output of a generic path planner using
  double sigmoid functions and then use CBFs to ensure energy sufficiency along
  the smoothed path, for robots described by single integrator and unicycle
  kinematics. We present results using a physics-based robot simulator, as well
  as with real robots with a full localization and mapping stack to show the
  validity of our approach.
\end{abstract}

\keywords{Energy sufficiency, long-term autonomy, path smoothing, path planning, mission planning, robot exploration}

\maketitle

\section{Introduction}
\label{sec:intro}
Current advances in robotics and its applications play a key role in extending
human abilities and allowing humans to handle arduous workloads and deal with
dangerous and uncertain environments. For instance, search and rescue
missions~\citep{balta2017integrated}, construction~\citep{yang2021hallway}, and
mining~\citep{thrun2004autonomous} put a strain on the human body as well as
being inherently dangerous. Moreover, tasks with a high degree of uncertainty like
terrestrial~\citep{best2022resilient} and
extraterrestrial~\citep{bajracharya2008autonomy} exploration benefit
immensely from using robots, especially with the current quest for planetary
exploration and the need to discover locations to host
humans~\citep{cushing2012candidate,titus2021science}.

To this end, endowing robots with the ability to recharge during a mission is of
vital importance to enable long term autonomy and successful execution of
missions over extended periods of time. This gives rise to a crucial need for
methods that guarantee that no robot runs out of energy mid-mission, i.e. energy
sufficiency, while at the same time having the needed flexibility to adapt to
various types of missions and environments. Many methods exist in literature to
achieve this goal: using static charging
stations~\citep{notomista2018persistification,ravankar2021multi,liu2014optimal,fouad2022energy},
using moving charging stations that do rendezvous with the robots during their
mission~\citep{mathew2015multirobot,kundu2021mobile,kamra2017combinatorial} or
deposit full batteries along robot's mission path~\citep{ding2019decentralized}.
The main shortcomings of these methods are that they either do not provide formal
guarantees on performance, or they have limited ability to deal with scenarios
involving unstructured and uncertain environments, e.g. in exploration missions
where maps are not known beforehand.

\begin{figure}[!htb]
	\centering
	\includegraphics[width=\columnwidth]{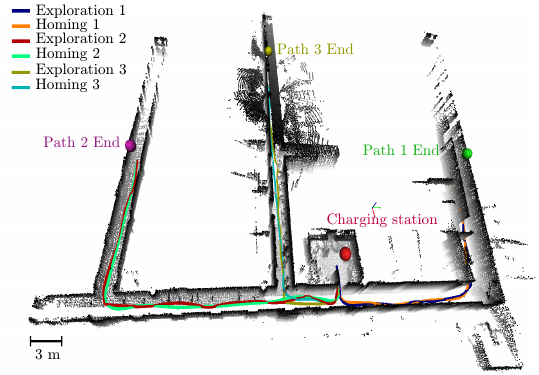}
	\caption{Maintaining energy sufficiency during the exploration of a corridor
    environment.}
	\label{fig:rover_explore}
\end{figure}

One way to tackle the issue of unstructured environments in light of energy
sufficiency is to perform path planning that incorporates energy cost as one of
its metrics. As an examples of this energy-aware path planning,
\citealt{alizadeh_optimized_2014,fu_targeted_2019,schneider2014electric}
formulate energy sufficiency as a combinatorial optimization problem with the
environment modelled as a weighted graph encoding energy costs, travel times,
and distances. One issue with these methods is the rapid increase in
computational complexity for large environments. Other methods emerged to deal
with this issue with heuristics like Genetic
Algorithms~\citep[GA,][]{li2018planning},
Tabu-Search~\citep[TS,][]{wang_staying-alive_2008} and Monte Carlo Tree
Search~\citep[MCTS][]{warsame2020energy}. However, one fundamental problem with
these methods mentioned so far is their need to know the map beforehand, which
may not be available for missions with unknown or dynamic environments such as
exploration tasks.

Tackling the issue of unknown and unstructured environments calls for the use of
exploration planners. Such planners use the collected sensor information over
time and provide two types of trajectories: exploration paths that maximize
environmental coverage, and homing paths from robot's current position to any
desired point in the map that is being incrementally built as the robot keeps
exploring. Several well-designed exploration planners exist in literature, many
of which developed within the scope of the DARPA SubTerranian
Challenge~\citep{subt_challenge}: the Graph-Based exploration planner
\citep[GBPlanner,][]{dang2019graph}, the Next-Best-View planner
\citep{bircher2016receding}, the motion primitives-based planner
\citep[MbPlanner,][]{dharmadhikari2020motion}), the Dual-Stage Viewpoint Planner
\citep{zhu2021dsvp}, and the TARE planner~\citep{cao2021tare}.

In this work, we present a modular and mission-agnostic framework that uses a
Control Barrier Function (CBF)~\citep{ames2019control} to guarantee energy sufficiency
when applied alongside an arbitrary exploration planner. The approach builds
upon our previous work \citep{fouad2020energy,fouad2022energy}, which provides
energy-sufficiency guarantees for robots in obstacle-free environments. We thus
leverage the ability of an exploration planner to deal with unstructured and
unknown environments and extend previous formulations to validate guarantees on
energy sufficiency over paths generated by this planner, allowing for more
realistic mission execution. The modular nature of our framework makes it
suitable for a wide range of applications that employ a path planner, especially
the exploration of unknown subterranean environments~\citep{dang2019graph}, ans
well a navigation in urban~\citep{mehta2015human,ramana2016motion,fu2015path},
and indoor environments~\citep{zhao2013path}.

In essence, the contribution of this paper is a CBF-based mission-agnostic
modular framework that can be applied in conjunction with any path planner to
ensure energy sufficiency of a robot in unknown and unstructured environment.
The framework applies to robots modelled as single integrator points or using
unicycle kinematics. The framework is validated through physics-based simulation
and on a physical AgileX Scout Mini rover, with a detailed description of our
hardware setup and software stack (which is also available as open-source).

The paper is organized as follows: Section ~\ref{sec:related_work} reviews the
literature around energy sufficiency, energy awareness in path planning, and some
relevant topics to our frameworks like path smoothing and control barrier
functions (CBFs); Section~\ref{sec:background} presents some preliminaries,
followed by the problem statement we are addressing; in Section~\ref{sec:static}
we lay out the main building blocks of our framework by addressing a case in
which a robot, modelled as a single integrator point, is stationary with
a non-changing path; in Section~\ref{sec:dynamic} we extend the results of the
previous Section to the case of a moving robot with a varying path due to
robot's motion and path updates; we then present a method for applying our
proposed framework with robots described by unicycle kinematics in
Section~\ref{sec:corner_power}; Section~\ref{sec:results} shows simulation and
hardware results; then we conclude the paper and provide a discussion along with
future work in Section~\ref{sec:conc}.

\section{Related work}
\label{sec:related_work}
Path planning methods for autonomous robots have been an active area of research
for a long time \citep{souissi2013path,patle2019review}. Different families of
path planning methods can be found in literature that vary in their purpose
(e.g., local planning vs. global planning), the way they encode the environment
(grid maps, visibility graphs, voronoi diagrams...), the type of systems
they plan for (e.g., holonomic, non holonomic, kinodynamic) and the way the path
is created (sampling the space, graph searching, potential fields...).

Endowing path planning with energy awareness has been treated in literature in
different forms that vary by purpose. For example, some works find energy
efficient paths within an environment so as to increase a mission's life span as
presented by \citet{jaroszek2014model} for four wheeled robots and by
\citet{gruning2020energy} for robots in hilly terrains. Other works focus on
ensuring robot's ability to carry out missions within certain energy capacity
and return to a charging station. For example, \citet{wang_staying-alive_2008}
use a graph with nodes representing tasks with energy costs and edges indicating
spatial connectivity with distances, then use Tabu search to solve a Traveling
Salesperson Problem (TSP) on this graph to minimize cost.
\citet{warsame2020energy} use a probabilistic roadmap method to generate a graph
with routes to different goals and charging nodes, then they use Monte Carlo
Tree Search (MCTS) to create a tour vising all goals, while having a utility
function that diverts the robot from its tour to recharge when needed.
\citet{hao2021automatic} proposes something similar by creating an idealized
version of the environment in the form of a MAKLINK graph, then use the Dijkstra's
algorithm for finding paths to charging station. \citet{li2018planning} consider
the problem of UAV coverage of an area while needing to recharge, and uses a mix
of grid maps and genetic algorithms (GA) to produce trajectories that minimize
mission time and cost while penalizing energy loss. In the electric vehicle
literature, similar graph representations of environment are typically used, and
an optimization problem is solved over the graph. \citet{schneider2014electric}
formulate the problem as a variation of the vehicle routing problem and use
mixed integer programming to find the optimal paths. Similarly,
\citet{fu_targeted_2019} formulate an integer program that aims to find the
best path with least cost to go from destination to goal while charging at a
station. The problem is then solved in two stages: building a meta graph
of best paths from destination to goal passing through stations, and the then
using Dijkstra's algorithm to find the best path of this meta graph. \emph{It is
  worth noting that these methods typically do not provide performance guarantees.}

The output of sampling based path planners is often in the form of waypoints.
There is often a need to smooth the resulting piecewise linear paths to reduce
the effect of sharp turns, which gives rise to a significant body of work
pertaining to path smoothing \citep{ravankar2018path}: using Bezier curves
\citep{cimurs2017bezier,simba2016real}, B-splines \citep{noreen2020collision},
among many others. \citet{cimurs2017bezier} provide a framework for
interpolating a set of waypoints with cubic Bezier segments in a way that
maintains curvature limits and ensures no collision between obstacles and the
interpolated path. \citet{noreen2020collision} uses clamped B-splines to produce
$C^2$ continuous paths, and they provide a scheme for point insertion in
segments where there is collision with obstacles to iteratively rebuild the path
till no collision takes place.

Another body of work attempts to merge path planning and smoothing: for example,
\citet{elhoseny2018bezier} propose a method for finding shortest Bezier paths in
a cluttered environment, where Bezier control points are searched for to
minimize path length using Genetic Algorithms (GA). \citet{satai2021bezier}
provide a method for smoothing the output of variants of an $A^*$ planner by
considering the waypoints as Bezier curve control points and then introducing
insertion points between every two of these control points, then use quadratic
Bezier segments with the inserted points as control points to produce a smooth
path. \citet{wu2014bezier} describe a method for creating smooth paths in robot
soccer, where the authors use a 4th-order Bezier curve with control points
comprised of the robot's position and goal as ends, and the other robots'
positions as rest of control points to produce a dynamically changing and smooth
path.

We use Control Barrier Functions \citep[CBFs,][]{ames2019control} as the base of
our framework. Barrier functions have been used in optimization problems to
penalize solutions in unwanted regions of the solution space
\citep{forsgren2002interior}. This concept has been later exploited to certify
the safety of nonlinear systems \citep{prajna2004safety}, in the sense that
finding such functions guarantees a system's state does not to wander to unsafe
regions of the state space. The notion of Control Barrier Function was
introduced by \citet{wieland2007constructive} to express values of a system's
control input that ensures safety for a control affine system, and
\citet{ames2014control} introduced the popular method of using quadratic
programs to merge system tracking, encoded by a desired system input, and the
safe control input dictated by CBF constraints. Other methods use Control
Lyapunov Barrier Functions (CLBF) \citep{romdlony2014uniting} to achieve
tracking and safety simultaneously.


\section{Background}
\label{sec:background}
\subsection{Control barrier functions}
A control barrier function is a tool that has gained much attention lately as a
way of enforcing set forward invariance to achieve safety in control affine
systems of the form
\begin{equation*}
\dot{\mathbf{x}} = f(\mathbf{x})+g(\mathbf{x})u
\end{equation*}
where $u\in U\subset \mathbb{R}^m$ is the input, $U$ is the set of admissible
control inputs, $\mathbf{x}\in\mathbb{R}^n$ is the state of the system, and $f$
and $g$ are both Lipschitz continuous. In this context, what is meant by safety
is achieving set forward invariance of some safe set $\mathcal{C}$, meaning that
if the states start in $\mathcal{C}$ at $t=t_0$, they stay within $\mathcal{C}$
for all $t>t_0$. This safe set $\mathcal{C}$ is defined as the superlevel set of
a continuously differentiable function $h(x)$ in the following manner
\citep{ames2019control}:
\begin{equation}
\begin{split}
\mathcal{C} &= \{x\in\mathbb{R}^n:h(x)\geq 0\}\\
\partial\mathcal{C} &= \{x\in\mathbb{R}^n:h(x)=0\}\\
Int(\mathcal{C}) &= \{x\in\mathbb{R}^n:h(x)>0\}.
\end{split}
\end{equation}
This condition can be achieved by finding a value of control input that
satisfies $\dot{h}\geq -\alpha(h)$, with $\alpha(h)$ being an extended class
$\mathcal{K}$ function \citep{khalil2002nonlinear}.

\begin{definition}{\citep{ames2019control}}
	For a subset $\mathcal{W}\subset \mathcal{C}$, a continuously differentiable
  function $h(x)$ is said to be a zeroing control barrier function (ZCBF) if
  there exists a function $\alpha(h)$ s.t.
	\begin{equation}
	\sup_{u\in U}L_fh+L_ghu\geq -\alpha(h), \quad \forall x\in \mathcal{W}
	\end{equation} 
	where $L_fh$ and $L_gh$ are the Lie derivatives of $h(x)$ in direction of $f$
  and $g$ respectively.
\end{definition}

Supposing that we define the set of all safe inputs $U_{s} = \{u\in U:
L_fh+L_ghu\geq -\alpha(h)\}$, then any Lipschitz continuous controller $u\in
U_s$ guarantees that $\mathcal{C}$ is forward invariant \citep{ames2019control}.
Since the nominal control input $u_{nom}\in U$ for a mission may not belong to
$U_s$, there should be a way to enforce safety over the nominal mission input.
This could be done by the following quadratic program
(QP)~\citep{ames2019control}

\begin{equation}
\begin{aligned}
&  u^*=\underset{u}{\text{min}}
& & ||u-u_{nom}||^2 \\
& \quad \quad \quad \text{s.t.}
& & L_fh(x)+L_ghu \geq -\alpha(h) \\
\end{aligned}
\label{eqn:QP}
\end{equation}
noting that $u^*$ tries to minimize the difference from $u_nom$, as long as
safety constraints are not violated.

\subsection{Problem definition} 
We adopt single integrator dynamics to describe the robot's position in 2D.
Moreover, we consider the energy consumed by the robot as the integration of its
consumed power, which in turn is a function of the robot's velocity
\begin{equation}
\begin{split}
\dot{x} &= u\\
\dot{E} &= \mathcal{P}(u)
\end{split}
\label{eqn:robot_dynamics}
\end{equation}
with $x\in\mathbb{R}^2$ being the robot's position ,
$u\in\mathcal{U}\in\mathbb{R}^2$ is the robot's velocity control action, $E > 0$
is the energy consumed and $\mathcal{P}(u) > 0$ is the power consumed by the
robot as a function of its input velocity. The power consumption follows the
following parabolic relation
\begin{equation}
\mathcal{P}(u) = m_0 + m_1 ||u|| + m_2||u||^2
\label{eqn:power_parabolic}
\end{equation}
for $m_0,m_1,m_2 > 0$. We consider a charging station at $x_c\in\mathbb{R}^2$
and that the robot starts a fast charge or a battery swap sequence as soon as it
is at a distance $\delta$ away from $x_c$, i.e. $||x-x_c||\leq \delta$.

Assume that there exists a path between a robot and a charging station described
by a set of waypoints $\mathcal{W}= \{w_1,w_2,\dots,w_{n_w}\} $, with
$w_i\in\mathbb{R}^{2}$ and the charging station at $w_{n_w}$, produced by a path
planner every $\mathcal{T}$ seconds. Provided that such robot is carrying out a
mission encoded by a desired control action $u_{d}$ and a nominal energy budget
$E_{nom}$, our objective is to ensure energy sufficiency for this robot, i.e.
$E_{nom} - E(t) \geq 0 \quad \forall t > t_0$, while taking into account the
path defined by $\mathcal{W}$ back to the charging station. Such scenario is
relevant in cases where ground robots are doing missions in complex or unknown
environments, or for flying robots in areas with no fly zones.

In this work we assume
  that the environment is static, i.e. obstacles don't change their positions
  during the mission.

\section{Energy sufficiency over a static Bezier path}
\label{sec:static}

In this section we discuss the foundational ideas of our approach. We start by
considering a static scenario where the path does not change, and the robot is stationary and lies
at one end of the path, while the charging station lies on the other end.

Briefly, we construct a continuous parametric representation of the piecewise
linear path described by waypoints $\mathcal{W}$. We define a reference point
along the path that depends on a path parameter value, then we modify the
energy sufficiency framework in \citep{fouad2022energy} to manipulate the
location of the reference point in a manner proportional to available energy,
and we make the robot follow this reference point. This way we can generalize
the method in \citep{fouad2022energy} to environments with obstacles.

\subsection{Smooth path construction}
To ensure that the CBFs we are using are Lipschitz continuous, we use a smooth
parametric description of the piecewise linear path we receive from a path
planner as a set of waypoints $\mathcal{W}$. 

We define $p(s)$ to be a point on the path that corresponds to a parameter
$s\in[0,1]$, such that $p(0) = w_1$ and $p(1) = w_n$, i.e. $s=0$ at the
beginning of the path and $s = 1$ at its end. Such concept is common for
describing parametric splines like Bezier curves. We seek an expression for
$p(s)$ that closely follows a given piecewise linear path with waypoints
$\mathcal{W}$.

For some point $p$ lying on the path, we define the path parameter $s$ as being
the ratio of the path length from $w_1$ to $p$ to the total path length.
Figure~\ref{fig:pw_path} shows an illustrative example of five waypoints. We adopt a smooth representation for $p(s)$ using double sigmoid activation functions as follows
\begin{equation}
p(s) = \sum_{i=1}^{n_w-1}\sigma_{i}(s)\bar{w}_i(s)
\label{eqn:path_point}
\end{equation}
where $\bar{w}_i$ is expressed as 
\begin{equation}
\bar{w}_i(s) = \tfrac{s_{i+1}-s}{s_{i+1}-s_i} w_i + \tfrac{s-s_i}{s_{i+1}-s_i}w_{i+1}
\end{equation}
Here $s_i = \tfrac{L_i}{L}$ where $L_i = \sum_{k = 1
}^{i-1}||w_{k+1}-w_{k}||$ and $L = \sum_{k = 1 }^{n_w-1}||w_{k+1}-w_{k}||$. We
note that the relation between the path length $l(s)$ from path start (at $w_1$) to
point $p(s)$ is $l(s) = Ls$ (by definition of $s$).
\begin{figure}[!htb]
	\centering
	\fontsize{20pt}{20pt}\selectfont
	\def\svgwidth{5.333in}
	\scalebox{0.5}{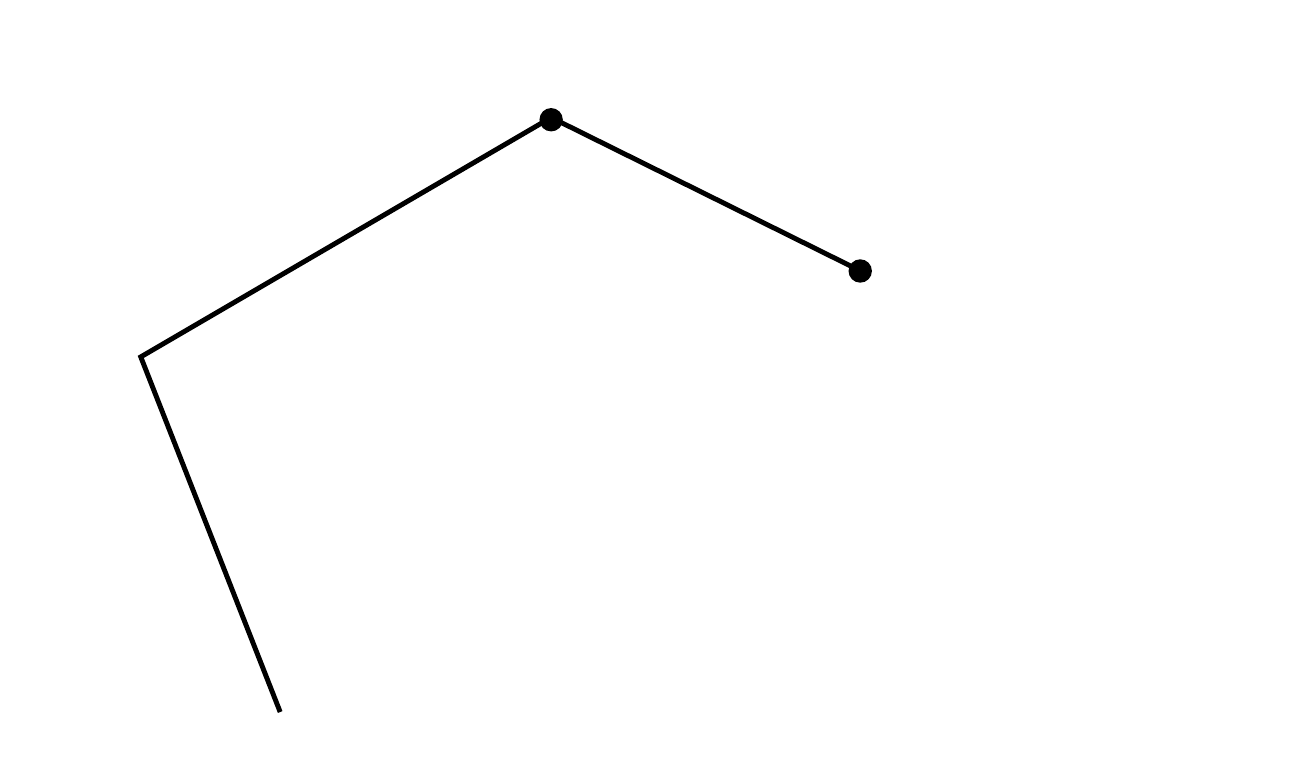}
	\caption{Am illustrative example of a path consisting of five waypoints. For a point $p(s)$ on the path $s$ is defined to be the ratio of the length of the orange segment to the total path length. In this illustration $L = \sum_{k=1}^{n_w-1}L_k$}
	\label{fig:pw_path}
\end{figure}

In \eqref{eqn:path_point}, $\sigma_i(s)$ is a double sigmoid function defined as 
\begin{equation}
\begin{split}
\sigma_i(s) &= \sigma_{i}^{r}(s)\sigma_{i}^{f}(s)\\ \sigma_{i}^{r}(s) &= \frac{1}{1+e^{-\beta(s-(s_i-\epsilon_1))}}\\
\sigma_{k}^{f}(s)&= \frac{1}{1+e^{\beta(s-(s_{i+1}+\epsilon_2))}}
\\\epsilon_1 &= \begin{cases}
\epsilon, \quad i = 1\\
0, \quad \text{otherwise}
\end{cases}\\ 
\epsilon_2 &= \begin{cases}
\epsilon, \quad i = n_w-1\\
0, \quad \text{otherwise}
\end{cases}
\end{split}
\label{eqn:sigmoids}
\end{equation}
where $\epsilon > 0$ and the superscripts $r$ and $f$ denote rising and falling
edges. The introduction of $\epsilon_1$ and $\epsilon_2$ to the first and last
segments in the previous relations is to emphasize that $\sigma_1(0) = 1$ and
$\sigma_{n_w-1}(1) = 1$, thus ensuring that $p(0) = w_1$ and $p(1) = w_{n_w}$,
otherwise $p(0) = p(1) \approx 0$ which is against the definition of $p(s)$.
This idea is illustrated in Figure~\ref{fig:sigmoids}. We also note that in any
transition region around $s=s_i$ there are two double sigmoid functions
involving $s_i$, namely $\sigma_{i-1}(s)$ and $\sigma_i(s)$. Furthermore, the
summation of these functions in the local neighbourhood of $s = s_i$ is equal to
one, which follows directly from adding $\sigma_{i-1}^{f}(s)$ and
$\sigma_{i}^{r}(s)$
\begin{equation}
\sigma_{i-1}^{f}+\sigma_{i}^{r}=\frac{2+e^{\beta(s-s_i)}+e^{-\beta(s-s_i)}}{2+e^{\beta(s-s_i)}+e^{-\beta(s-s_i)}} = 1
\label{eqn:trans_sigs}
\end{equation}
This idea is highlighted in Figure~\ref{fig:sigmoids}. The derivative $\tfrac{\partial p}{\partial s}$ is 
\begin{equation}
\begin{split}
\cfrac{\partial p}{\partial s} = \sum_{i=1}^{n_w-1}&\left(\sigma_{i}(s)\left(\frac{w_{i+1}-w_i}{s_{i+1}-s_i}\right)\right. \\&\left.+ \beta\sigma_i(s)(\sigma_{i}^{f}(s)-\sigma_{i}^{r}(s))\bar{w}_i(s)\right)
\end{split}
\label{eqn:diff_sigmoid}
\end{equation} 
\begin{figure}[!htb]
	\centering
	\includegraphics[trim=80 120 90 150,clip,width=\columnwidth]{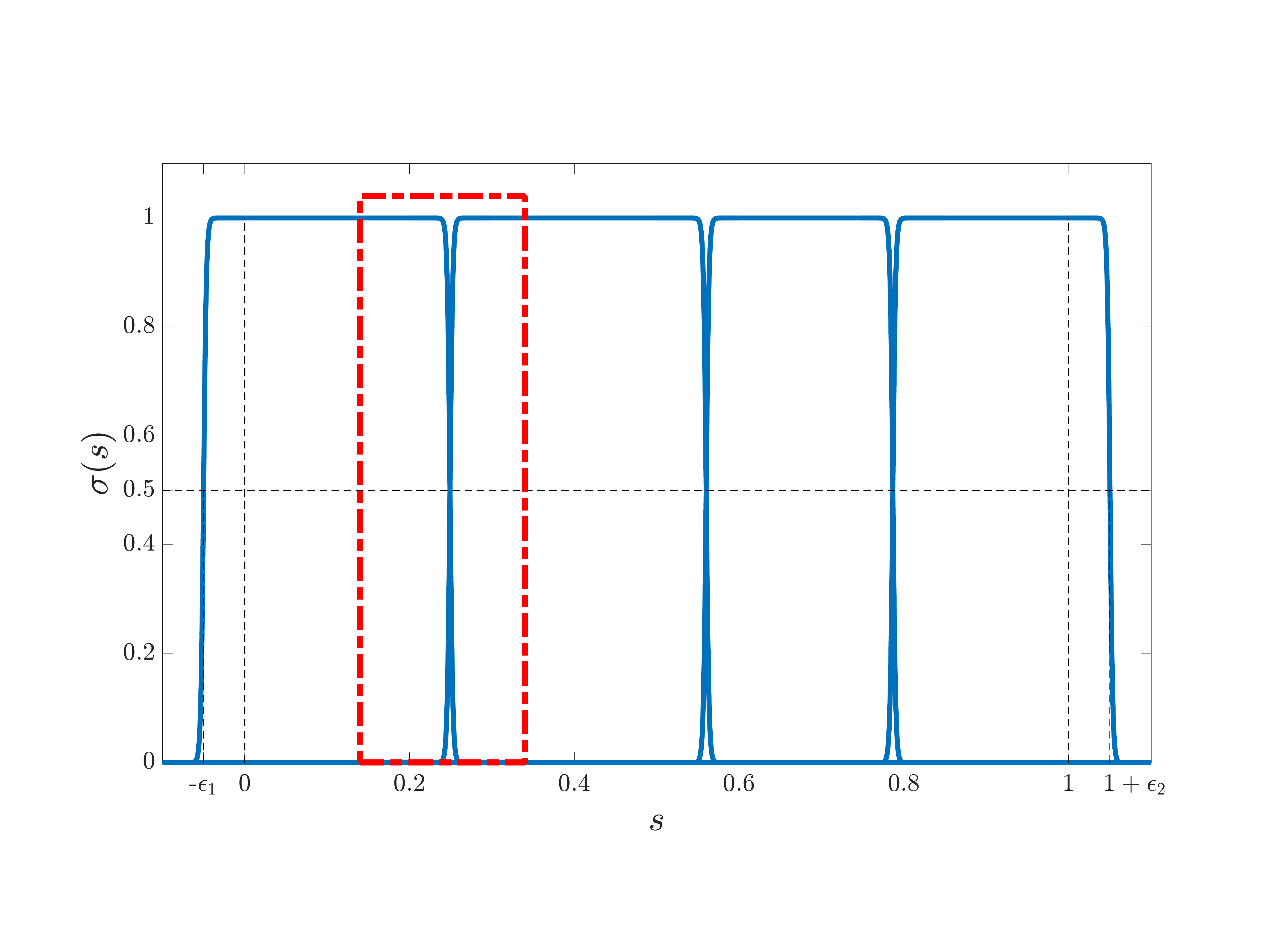}
	\caption{Example of double sigmoid functions $\sigma_k(s)$ for the set of five waypoints shown in Figure~\ref{fig:pw_path}. The use of $\epsilon_1$ and $\epsilon_2$ the way described in \eqref{eqn:sigmoids} leads to $\sigma_{1}^{r}(-\epsilon_1) = 0.5$ and $\sigma_{n_w}^{f}(1+\epsilon_2) = 0.5$, thus ensuring that $\sigma_1(0) = 1$ and $\sigma_{n_w-1}(1) = 1$. The red rectangle highlights a transition region, and it can be shown that the sum of the two sigmoids involved in this transition is equal to one.}
	\label{fig:sigmoids}
\end{figure}
We note that the larger the value of $\beta$ in \eqref{eqn:sigmoids} is the more
closely the smooth path described by \eqref{eqn:path_point} follows the
piecewise linear path between waypoints in $\mathcal{W}$.
Figure~\ref{fig:smooth_paths} shows examples of paths at different values of
$\beta$ for the same path depicted in Figure~\ref{fig:pw_path}.

\begin{figure}[!htb]
	\centering
	\includegraphics[trim=99 120 120 120,clip,width=\linewidth]{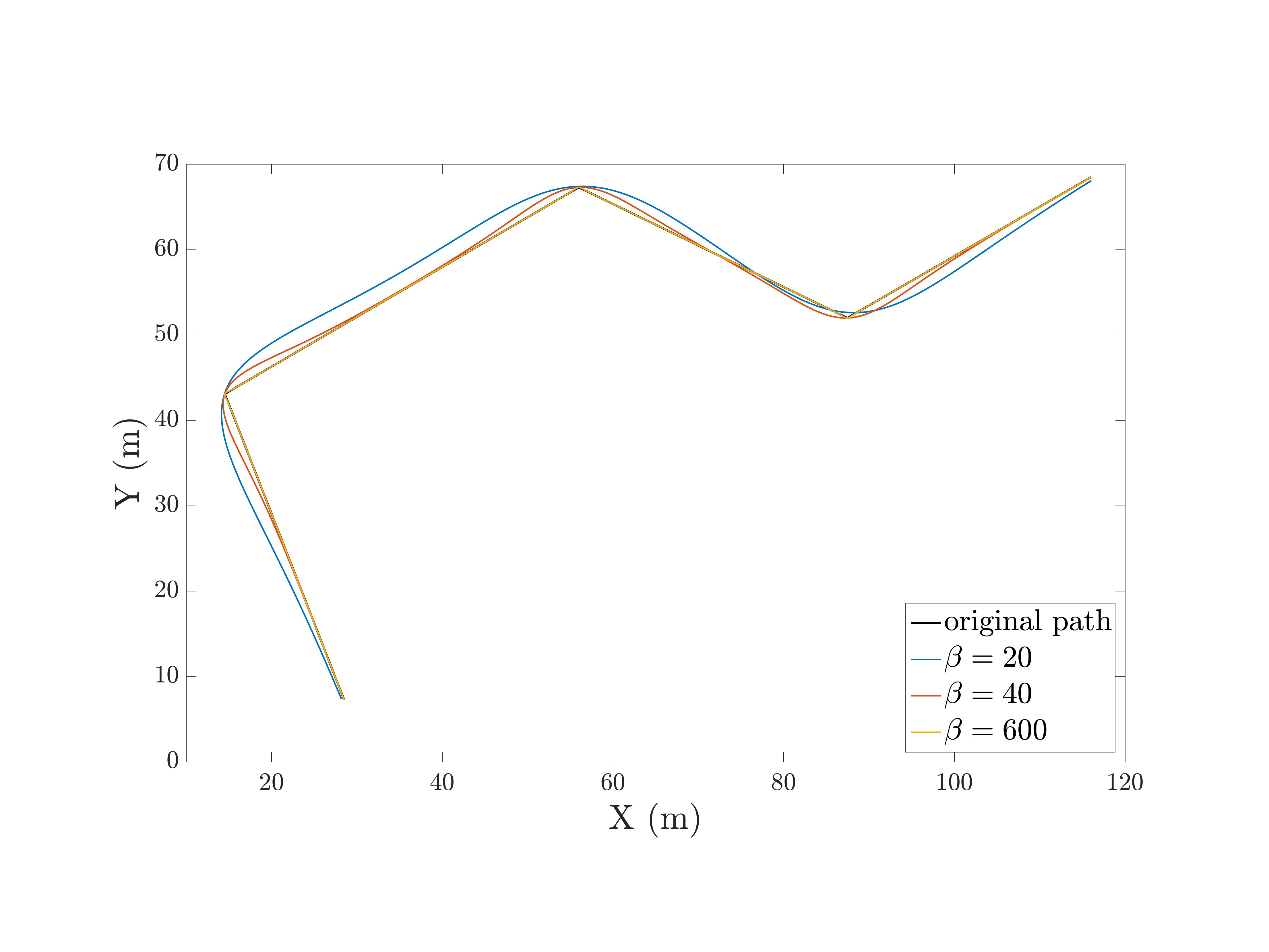}
	\caption{Demonstration of the effect of changing the value of $\beta$ in \eqref{eqn:sigmoids} on how closely \eqref{eqn:path_point} follows the original piecewise linear path.}
	\label{fig:smooth_paths}
\end{figure}

\begin{lemma}\label{lemma:diff_sum}
	For a path described by $p(s)$ in \eqref{eqn:path_point}, with the double
  sigmoid functions as described in \eqref{eqn:sigmoids} and provided that
  $\beta \gg 1$ then the following statement holds
	\begin{equation*}
	\sum_{i=1}^{n_w-1}\Sigma_i(s)\bar{w}_i(s)\approx 0
	\end{equation*}
	where $\Sigma_i(s) = \beta\sigma_i(s)(\sigma_{i}^{f}(s)-\sigma_{i}^{r}(s))$
\end{lemma}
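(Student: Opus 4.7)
The plan is to exploit the bump-function nature of each $\sigma_i$ together with the partition-of-unity identity~\eqref{eqn:trans_sigs} to show termwise cancellations in the sum. I would split $s\in[0,1]$ into \emph{bulk} intervals, strictly inside each segment $[s_i,s_{i+1}]$ and separated from the endpoints by a margin much larger than $1/\beta$, and narrow \emph{transition regions} of width $O(1/\beta)$ around each breakpoint $s_i$, and bound the sum in each regime separately.

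In the bulk the sum is negligible: for $j=i$ both $\sigma_i^{r}(s)$ and $\sigma_i^{f}(s)$ equal $1-O(e^{-c\beta})$ for some $c>0$, so $\sigma_i^{f}-\sigma_i^{r}$ is exponentially small and $\Sigma_i(s)=O(\beta e^{-c\beta})\to 0$, while for $j\neq i$ the factor $\sigma_j(s)$ itself is exponentially small. In a transition region around $s_i$, only $\Sigma_{i-1}$ and $\Sigma_i$ remain non-negligible. Using the approximations $\sigma_{i-1}^{r}(s)\approx 1$ and $\sigma_i^{f}(s)\approx 1$ (valid because $s_i$ is a fixed positive distance from $s_{i-1}$ and $s_{i+1}$), together with the identity $\sigma_{i-1}^{f}+\sigma_i^{r}=1$ from~\eqref{eqn:trans_sigs}, one obtains $\Sigma_{i-1}(s)\approx -\beta\,\sigma_{i-1}^{f}(s)\sigma_i^{r}(s)\approx -\Sigma_i(s)$. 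Rewriting
\[
\Sigma_{i-1}\bar{w}_{i-1}+\Sigma_i\bar{w}_i \;=\; (\Sigma_{i-1}+\Sigma_i)\bar{w}_i \;+\; \Sigma_{i-1}\bigl(\bar{w}_{i-1}-\bar{w}_i\bigr),
\]
the first piece is $\approx 0$ by the coefficient cancellation and only the residual $\Sigma_{i-1}(\bar{w}_{i-1}-\bar{w}_i)$ survives.

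The main obstacle is controlling this residual, since $\Sigma_{i-1}$ is naively of order $\beta$. The rescue comes from the fact that $\bar{w}_{i-1}(s_i)=\bar{w}_i(s_i)=w_i$: because each $\bar{w}_j$ is affine in $s$, the difference $\bar{w}_{i-1}(s)-\bar{w}_i(s)$ is exactly linear in $s-s_i$ and vanishes at the breakpoint. Thus inside the $O(1/\beta)$ transition window one has $\bar{w}_{i-1}-\bar{w}_i=O(s-s_i)=O(1/\beta)$, and multiplying by $\Sigma_{i-1}=O(\beta)$ yields a quantity bounded uniformly in $\beta$ whose magnitude is controlled by the turn at $w_i$ and shrinks as adjacent segments become collinear. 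Summing bulk and transition contributions across all breakpoints, the naively $O(\beta)$-scaling sum $\sum_i \Sigma_i\bar{w}_i$ is in fact bounded independently of $\beta$ and vanishes in the smooth-path limit, which is the sense in which it is ``$\approx 0$''.
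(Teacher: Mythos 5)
Your argument is essentially the paper's own proof: localize to the $O(1/\beta)$ transition bands around each $s_i$, use the partition-of-unity identity \eqref{eqn:trans_sigs} to cancel $\Sigma_{i-1}$ against $\Sigma_i$, and control the residual through $\bar{w}_{i-1}(s_i)=\bar{w}_i(s_i)=w_i$, which is precisely the computation behind the paper's prefactor $\tfrac{\beta\delta_s e^{-\beta\delta_s}}{(1+e^{-\beta\delta_s})^2}$ in \eqref{eqn:sum}. The only difference is the closing gloss: the paper notes this prefactor is zero at $\delta_s=0$ and arbitrarily small for any fixed $\delta_s\neq 0$ as $\beta$ grows, while you stop at uniform boundedness in $\beta$ together with smallness for shallow turns --- the same estimate, read slightly more conservatively.
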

\begin{proof}
	At a waypoint $w_i$ we consider the two functions $\Sigma_{i-1}(s)$ and
  $\Sigma_i(s)$ (both involve $s=s_i$ in their definition) and we note that the
  values of other $\Sigma$ are equal to zero by definition. We want to evaluate
  $D=\Sigma_{i-1}(s)\bar{w}_{i-1}+\Sigma_{i}(s)\bar{w}_{i}$ but we do so within
  a band $\delta_s$ around $s_i$, i.e. at $s^\prime = s + \delta_s$
	\begin{equation}
	\begin{split}
	D &= \beta\sigma_{i-1}(s_i+\delta_s)(\sigma_{i-1}^{f}(s^\prime)-\sigma_{i-1}^{r}(s^\prime))\bar{w}_{i-1}(s^\prime)\\
	&+\beta\sigma_{i}(s_i+\delta_s)(\sigma_{i-1}^{f}(s^\prime)-\sigma_{i}^{r}(s^\prime))\bar{w}_{i}(s^\prime)
	\end{split}
	\end{equation}
	then substituting $\beta\gg 1$ in the last equation we get the following
	\begin{equation}
	D=\frac{\beta\delta_s e^{-\beta\delta_s}}{\left(1+e^{-\beta\delta_s}\right)^2}\left(\frac{w_{i-1}}{s_i-s_{i-1}}+\frac{w_{i+1}}{s_{i+1}-s_i}\right)
	\label{eqn:sum}
	\end{equation} 
	If $\delta_s = 0$ in \eqref{eqn:sum} then $D = 0$, and otherwise the quotient
  $\tfrac{\beta e^{\beta\delta_s}}{\left(1+e^{\beta\delta_s}\right)^2}$ can be
  made arbitrarily small by choosing large $\beta$. We also note that
  $\tfrac{\beta e^{\beta\delta_s}}{\left(1+e^{\beta\delta_s}\right)^2} =
  \tfrac{\beta e^{-\beta\delta_s}}{\left(1+e^{-\beta\delta_s}\right)^2}$ meaning
  the same result follows for $\delta_s > 0$ and $\delta_s < 0$. The statement
  of the lemma follows by applying the same summation for all values of $i$.
\end{proof}

\subsection{Energy sufficiency }

We consider the case in which the robot lies at the beginning of the smooth path
\eqref{eqn:path_point} and moves along this path back to the station (at the
other end of the path). We assume the path is static, i.e. not changing.

We define a reference point along the path as in \eqref{eqn:path_point} 
\begin{equation}
\begin{split}
x_r(s) &= p(s)= \sum_{i=1}^{n_w-1}\sigma_{i}(s)\left(\tfrac{s_{i+1}-s}{s_{i+1}-s_i} w_i + \tfrac{s-s_i}{s_{i+1}-s_i}w_{i+1}\right)\\
\frac{\partial x_r}{\partial s} &= \sum_{i=1}^{n_w-1}\sigma_{i}(s)\frac{w_{i+1}-w_i}{s_{i+1}-s_i}
\end{split}
\label{eqn:ref_pt}
\end{equation} 
noting that the derivative expression follows from Lemma~\ref{lemma:diff_sum}.
We want to control the value of $s$ in a way that makes the reference point
approach the end of path in a manner commensurate to the robot's energy content.
For this purpose, we introduce the following dynamics for $s$
\begin{equation}
\dot{s} = \eta
\end{equation}
with $\eta\in\mathbb{R}$ and $s(0) = 0$. The outline of our strategy is as
follows: we introduce constraints that manipulate the value of $s$ in a way that
makes the reference point $x_r$ approach the end of path as the total energy
content decreases, and use an additional constraint to make the robot follow
$x_r$. The candidate CBF for energy sufficiency is
\begin{equation}
h_e = E_{nom} - E - \tfrac{\mathcal{P}(v_r)}{v_r}(L(1-s) - \delta)
\label{eqn:energy_suff_cbf}
\end{equation}
where $v_r$ is the desired velocity with which the robot moves along the path,
$\delta$ is the distance of the boundary of charging region away from its
center, noting that the center of the charging region is $w_{n_w}$. We note is
the expression $L(1-s)$ expresses the length along the path from point $x_r(s)$ till its end. The
constraint $\dot{h}_e\geq -\alpha(h_e)$ associated with this candidate CBF is
\begin{equation}
\begin{split}
&-\mathcal{P}(u)+\tfrac{\mathcal{P}(v_r)}{v_r}L\eta \geq -\gamma_e h_e
\end{split}
\label{eqn:suff_constraint}
\end{equation}

In \eqref{eqn:energy_suff_cbf} the value of $s$ needs to be maintained above
zero (otherwise the value of $h_e$ can be still positive without having the
reference point $x_r$ moving back towards the end of the path). For this end we
introduce a constraint that lower bounds $s$ with the following candidate CBF
\begin{equation}
h_b = s
\label{eqn:lower_bound}
\end{equation}
with the associated constraint
\begin{equation}
\eta \geq -\gamma_bh_b
\label{eqn:lower_bound_constraint}
\end{equation}
We complement \eqref{eqn:energy_suff_cbf} and \eqref{eqn:lower_bound} with
another candidate CBF that aims at making the robot follow $x_r(s)$ as it
changes, and is defined as follows
\begin{equation}
h_d = \tfrac{1}{2}(d^2 - ||x-x_r(s)||^2)
\label{eqn:tracking_cbf}
\end{equation} 
with $0<d< \delta$. The constraint associated with this candidate CBF is 
\begin{equation}
-(x-x_r(s))^T(u-\dot{x}_r(s))\geq -\gamma_d h_d
\label{eqn:tracking_constraint}
\end{equation}
where $\dot{x}_r=\tfrac{\partial x_r}{\partial s}\eta$.

In the following lemmas we show that the proposed CBFs lead the robot back to
the charging station with $E_{nom} - E \geq 0$. We note that we are not
controlling $u$ in \eqref{eqn:suff_constraint} but rather give this task to
\eqref{eqn:tracking_constraint}, thus partially decoupling the reference point's
movement from the robot's control action. \emph{In other words, we deliberately
  make the system respond to changing energy levels by moving the reference
  point along the path without directly changing the robot's velocity.} This
interplay between energy sufficiency and tracking constraints is highlighted in
the next lemma.

\begin{lemma}
	\label{lemma:base}
	For a robot with dynamics described in \eqref{eqn:robot_dynamics} and power
  consumption as in \eqref{eqn:power_parabolic}, and has a maximum
  magnitude of control action
  $u_{\text{max}}$, the control barrier functions defined in
  \eqref{eqn:energy_suff_cbf} and \eqref{eqn:tracking_cbf} are zeroing control
  barrier functions (ZCBF) provided that
	\begin{equation*}
	v_{r}^{*} = \sqrt{\frac{m_0}{m_2}}\leq u_{\text{max}} 
	\end{equation*}
	where $||u||\leq u_{max}$. Moreover, provided that $L(s)>\delta$, then $E=E_{nom}$ only at  $L(1-s) = \delta$.
\end{lemma}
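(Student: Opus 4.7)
The plan is to unpack the two candidate CBFs $h_e$ and $h_d$ into their differential-inequality form, exhibit admissible inputs $(u,\eta)$ with $\|u\|\leq u_{\max}$ that realise the required suprema, and finally read the ``moreover'' claim off the definition of $h_e$ directly. Using $\dot{x}=u$, $\dot{E}=\mathcal{P}(u)$, $\dot{s}=\eta$ and $\dot{x}_r=\frac{\partial x_r}{\partial s}\eta$, differentiating \eqref{eqn:energy_suff_cbf} and \eqref{eqn:tracking_cbf} along trajectories gives the left-hand sides of \eqref{eqn:suff_constraint} and \eqref{eqn:tracking_constraint} verbatim, so the ZCBF condition for each reduces to finding an admissible $(u,\eta)$ that makes those inequalities hold for every state in the safe set.

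To certify both ZCBF properties, I would exhibit a common admissible input. The natural candidate is $u=\dot{x}_r$, which forces $\dot{h}_d=0\geq -\gamma_d h_d$ automatically. Using \eqref{eqn:ref_pt} together with Lemma~\ref{lemma:diff_sum}, the magnitude $\|\partial x_r/\partial s\|\approx L$ for large $\beta$, so choosing $\eta = v_r/L$ yields $\|u\|\approx v_r$; setting $v_r=v_r^{*}$ and invoking the hypothesis $v_r^{*}\leq u_{\max}$ keeps this input admissible. Substituting into \eqref{eqn:suff_constraint} produces $-\mathcal{P}(v_r)+\tfrac{\mathcal{P}(v_r)}{v_r}\cdot v_r = 0 \geq -\gamma_e h_e$ on the safe set $\{h_e\geq 0\}$, which certifies $h_e$ as a ZCBF, while \eqref{eqn:lower_bound_constraint} is immediate since the chosen $\eta>0$.

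The choice $v_r^{*}=\sqrt{m_0/m_2}$ itself is forced by the parabolic profile \eqref{eqn:power_parabolic}: it is the unique minimiser of the specific energy $\mathcal{P}(v)/v=m_0/v+m_1+m_2 v$, obtained from the first-order condition $-m_0/v^2+m_2=0$. This minimisation makes the coefficient of $L(1-s)$ inside $h_e$ as small as possible, which stretches the available energy budget over the longest possible segment of the path; the assumption $v_r^{*}\leq u_{\max}$ merely says that this optimal cruising speed is attainable. For the equality claim, once $h_e$ is a ZCBF and $h_e(t_0)\geq 0$, forward invariance gives $E_{nom}-E(t)\geq \tfrac{\mathcal{P}(v_r)}{v_r}\bigl(L(1-s(t))-\delta\bigr)$ for all $t$; whenever $L(1-s)>\delta$ the right-hand side is strictly positive, so $E<E_{nom}$, and equality $E=E_{nom}$ can therefore occur only on the set $\{L(1-s)=\delta\}$, i.e.\ when the reference point sits exactly on the boundary of the charging region.

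The step I expect to demand the most care is the feasibility construction: the identification $\|\partial x_r/\partial s\|\approx L$ is only asymptotic in $\beta$ and is sharpest away from the transition regions between waypoints, where Lemma~\ref{lemma:diff_sum} controls only an error of order $\beta e^{-\beta\delta_s}/(1+e^{-\beta\delta_s})^2$. Near waypoints, the candidate $\eta=v_r/L$ would leave $\|u\|$ slightly off $v_r$, so a rigorous version of the argument should instead take $\eta = v_r/\|\partial x_r/\partial s\|$ pointwise in $s$; the bound $\|u\|\leq u_{\max}$ then still follows from $v_r^{*}\leq u_{\max}$ provided $\beta$ is chosen large enough that $\|\partial x_r/\partial s\|$ stays uniformly close to $L$ along the path.
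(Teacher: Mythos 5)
Your construction is a clean joint-feasibility certificate, but it proves a different (stronger-hypothesis) statement than the lemma. By taking $u=\dot{x}_r$ with $\eta=v_r/L$ you demand a tracking speed $\|u\|\approx v_r$, so admissibility requires $v_r\leq u_{\max}$; the lemma only assumes $v_r^{*}=\sqrt{m_0/m_2}\leq u_{\max}$, with $v_r$ an arbitrary design parameter that may well exceed $v_r^{*}$ (and even $u_{\max}$). You paper over this by ``setting $v_r=v_r^{*}$,'' but $v_r$ is fixed in the definition of $h_e$, not something the proof may choose. The paper's argument is built precisely to avoid this: it examines the critical case $h_e\approx 0$, $h_d\approx 0$, where the two active constraints couple into the fixed-point equation $\|u\|=\tfrac{\mathcal{P}(u)}{\mathcal{P}(v_r)}v_r$, whose roots are $\lambda_1=v_r$ and $\lambda_2=m_0/(m_2 v_r)$; starting from rest the robot converges to $\min(\lambda_1,\lambda_2)\leq\sqrt{\lambda_1\lambda_2}=v_r^{*}$, so the speed actually demanded of the robot is bounded by $v_r^{*}$ \emph{for every} $v_r$, which is exactly why $v_r^{*}\leq u_{\max}$ suffices. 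Your interpretation of $\sqrt{m_0/m_2}$ as the minimiser of $\mathcal{P}(v)/v$ is a true fact but is not what makes the hypothesis sufficient; in the paper it arises as the coincidence point of the two roots, i.e.\ the maximum achievable return speed. Your argument can be repaired by choosing the input self-consistently at the smaller root, e.g.\ $\eta=\tfrac{\mathcal{P}(\lambda)}{\mathcal{P}(v_r)}\tfrac{v_r}{L}$ and $u=\tfrac{\partial x_r}{\partial s}\eta$ with $\lambda=\min(v_r,m_0/(m_2 v_r))$, which satisfies both constraints with $\|u\|=\lambda\leq v_r^{*}\leq u_{\max}$ — but that is essentially re-deriving the paper's root analysis.

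Two smaller points. First, your worry about the transition regions is not actually a problem for your certificate: since $\left\lVert\tfrac{\partial x_r}{\partial s}\right\rVert\leq L$ there, the choice $\eta=v_r/L$ gives $\|u\|\leq v_r$, hence $\mathcal{P}(u)\leq\mathcal{P}(v_r)$ and the energy constraint holds with slack. Second, for the ``moreover'' claim, forward invariance of $\{h_e\geq 0\}$ together with $E=E_{nom}$ only yields $L(1-s)\leq\delta$; to conclude equality you need (as the paper implicitly does) that $h_e=0$ at that instant, or a continuity argument from $L(1-s)>\delta$ initially. Finally, note that the root structure you bypass is reused later in the paper (the robustness margin $\Delta_p^{*}$ in Remark~\ref{remark:stability} and the unicycle adaptation), so the shortcut also forfeits those byproducts.
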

\begin{proof}
	The idea of the proof is to show that there is always a value of $\eta$ that
  satisfies \eqref{eqn:suff_constraint} with its $\mathcal{P}(u)$ term, and
  there is always $u$ to satisfy \eqref{eqn:tracking_constraint} at the same
  time. Since $\eta\in\mathbb{R}$ means there is always a value of $\eta$
  that satisfies \eqref{eqn:suff_constraint}, thus \eqref{eqn:energy_suff_cbf}
  is a ZCBF. However, the reference point $x_r$ could be moving with a speed too
  fast for the robot to track depending on the value of $\mathcal{P}(u)$.
	
	We consider the critical case of approaching the boundary of the safe set for
  both $h_d$ and $h_e$, i.e. $h_e\approx 0$ and $h_d\approx 0$, in which case we
  can consider the equality condition of the constraints
  \eqref{eqn:suff_constraint} and \eqref{eqn:tracking_constraint} (i.e. near the
  boundary of the safe set the safe actions should at least satisfy $\dot{h} =
  -\alpha h$ for both \eqref{eqn:suff_constraint} and
  \eqref{eqn:tracking_constraint}). The aforementioned constraints become
	\begin{subequations}
		\begin{equation}
		\eta = \frac{\mathcal{P}(u)}{\mathcal{P}(v_r)}\frac{v_r}{L}
		\label{eqn:eta_constraint_relation}
		\end{equation}
		\begin{equation}
		u = \frac{\partial x_r}{\partial s}\eta = \frac{\partial x_r}{\partial s}\frac{\mathcal{P}(u)}{\mathcal{P}(v_r)}\frac{v_r}{L}
		\label{eqn:speed_constraint_relation}
		\end{equation}
		\label{eqn:const_eq}
	\end{subequations}
	noting that $x-x_r \neq 0$ when $h_d\approx 0$. We also note that
	\begin{equation*}
	||u|| = \frac{\mathcal{P}(u)}{\mathcal{P}(v_r)}\frac{v_r}{L}	\left\lVert\frac{\partial x_r}{\partial s}\right\rVert.
	\end{equation*}
	Assuming $\beta\gg 1$, the derivative $\tfrac{\partial x_r}{\partial s}$ is as described in \eqref{eqn:path_point}. Moreover, 
	\begin{equation}
	s_{i+1}-s_i = \frac{\sum_{k=1}^{i}\ell_k - \sum_{k=1}^{i-1}\ell_k}{L} = \frac{||w_{i+1}-w_i||}{L}
	\end{equation} 
	where $\ell_k = ||w_{k+1}-w_k||$ and consequently $\tfrac{\partial x_r}{\partial s}$ can be expressed as
	\begin{equation}
		\tfrac{\partial x_r}{\partial s} = L\sum_{i = 1}^{n_w-1}\sigma_i(s)\hat{e}_i
		\label{eqn:simple_sig_diff}
	\end{equation}
	where $\hat{e}_i=\tfrac{w_{i+1}-w_i}{||w_{i+1}-w_i||}$ is a unit vector. To
  estimate $\left\lVert\tfrac{\partial x_r}{\partial s}\right\rVert$ it suffices
  to mention that in the range $s_{i}+\epsilon_m < s < s_{i+1}-\epsilon_m$ (for
  $i=1,\dots,n_w-1$ and $\epsilon_m = \tfrac{2}{\beta}$) all the double sigmoid
  functions in \eqref{eqn:simple_sig_diff} will be almost equal to zero except
  for one (by definition) and thus $\left\lVert\tfrac{\partial x_r}{\partial
      s}\right\rVert =L$. Moreover, if $s_i-\epsilon_m<s<s_i+\epsilon_m$, i.e.
  $s$ is transitioning from one segment to the next, the sum of the two sigmoid
  functions locally around $s=s_i$ is equal to one as show in
  \eqref{eqn:trans_sigs}, meaning that \eqref{eqn:simple_sig_diff} will be a
  convex sum of two unit vectors which will have at most a magnitude equal to
  one so $\left\lVert\tfrac{\partial x_r}{\partial s}\right\rVert \leq L$.
  Therefore $||u||$ becomes
	\begin{equation}
	||u|| = \frac{\mathcal{P}(u)}{\mathcal{P}(v_r)}v_r
	\label{eqn:abs_speed}
	\end{equation}
	which is a root finding problem for a polynomial of the second degree, since
  $\mathcal{P}(u)$ is a second order polynomial \eqref{eqn:power_parabolic}.
  Solving for the roots we get
	\begin{equation}
	\lambda_1 = v_r,\quad \lambda_2 = \frac{m_0}{m_2v_r}
	\label{eqn:roots}
	\end{equation}
	and these roots are equal when $v_{r}^{*} = \sqrt{\tfrac{m_0}{m_2}}$. Since we consider a case where the robot is stationary and the path is fixed,
    the robot starts from this stationary state and converges to $||u||=
  \min(\lambda_1,\lambda_2)$ for a given value of $v_r$. This means that the
  maximum achievable return velocity is at $v_r = v_{r}^{*}$ where $\lambda_1 =
  \lambda_2$. If $v_{r}^{*} \leq u_{max}$ then there is always a control action
  $u$ available to satisfy \eqref{eqn:tracking_constraint}, rendering
  \eqref{eqn:tracking_cbf} a ZCBF. If $h_e=0$ then from
  \eqref{eqn:energy_suff_cbf} $E = E_{nom}$ can only happen if $L(1-s) =
  \delta$, meaning the remaining length along the path is equal to delta, which
  only happens at the boundary of the charging region.
\end{proof} 

\begin{remark}
	\label{remark:stability}
	The previous proof assumes the presence of \emph{a-priori} known model for
  power consumption. However, a mismatch between the power model
  $\mathcal{P}(u)$ in \eqref{eqn:power_parabolic} and the actual power
  consumption $\bar{\mathcal{P}}(u)$ will lead to a different solution of \eqref{eqn:abs_speed}. We are
  interested in the case where $\bar{\mathcal{P}}(u) =
  \mathcal{P}(u)+\Delta_p$, with
  $\Delta_p\in\mathbb{R}$. The root finding problem in \eqref{eqn:abs_speed}
  becomes
	\begin{equation}
	||u|| = \frac{\bar{\mathcal{P}}(u)}{\mathcal{P}(v_r)}v_r
	\label{eqn:mod_roots}
	\end{equation}
	and the roots will be
	\begin{equation}
	\bar{\lambda}_{1,2} = \frac{m_0+m_2v_{r}^{2}\pm\mathcal{D}}{2m_2v_r}
	\end{equation}
	where $\mathcal{D}=\sqrt{(m_0-m_2v_{r}^{2})^2-4m_2v_{r}^{2}\Delta_p}$. When
  $\Delta_p=0$, $\bar{\lambda}_{1,2}=\lambda_{1,2}$ as described in
  \eqref{eqn:roots}. If $\Delta_p > 0$, then $\mathcal{D} < (m_0-m_2v_{r}^{2})$
  and as a result $\bar{\lambda}_1 > v_r$ and $\bar{\lambda}_2 <
  \tfrac{m_0}{m_2v_r}$. In other words, the robot will converge to a faster
  speed in case the actual power consumption is more than expected, and the
  converse is true for $\Delta_p < 0$. When
	\begin{equation}
	\Delta_p > \Delta_{p}^{*}= \left(\tfrac{m_0-m_2v_{r}^{2}}{2v_r\sqrt{m_2}}\right)^2
	\label{eqn:stability_margin}
	\end{equation}
	$\mathcal{D}$ becomes undefined and there will be no roots for
  \eqref{eqn:mod_roots}, indicating a point of instability in velocity for power
  disturbances beyond $\Delta_{p}^{*}$. This idea is illustrated in
  Figure~\ref{fig:roots}.
\end{remark}

\begin{figure}[!htb]
	\centering
	\includegraphics[width=\columnwidth]{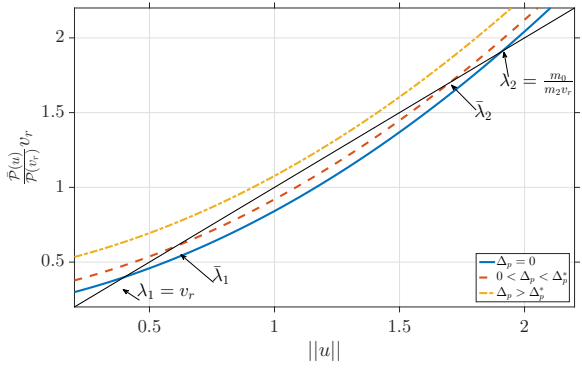}
	\caption{Graphical representation for the roots of \eqref{eqn:mod_roots} for different values of disturbance power $\Delta_p$. The roots are intersections of the straight line $f_1(u)=||u||$ in black and the parabolas $f_2(u)=\tfrac{\bar{\mathcal{P}}(u)}{\mathcal{P}(v_r)}v_r$ (representing RHS and LHS of \eqref{eqn:mod_roots} respectively). }
	\label{fig:roots}
\end{figure}

\begin{lemma}
	\label{lemma:2}
	For a robot with dynamics \eqref{eqn:robot_dynamics}, the candidate CBF
  \eqref{eqn:lower_bound} is a ZCBF.
\end{lemma}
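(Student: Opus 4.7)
The plan is to verify the ZCBF definition directly, since $h_b = s$ depends only on the path parameter and the parameter dynamics $\dot{s} = \eta$ place no restriction on $\eta$ beyond $\eta \in \mathbb{R}$.

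First I would compute the time derivative: $\dot{h}_b = \dot{s} = \eta$. The associated safety condition (as given in \eqref{eqn:lower_bound_constraint}) is
\begin{equation*}
\eta \geq -\gamma_b h_b = -\gamma_b s,
\end{equation*}
where $\gamma_b > 0$ is the class $\mathcal{K}$ gain. To establish that $h_b$ is a ZCBF, per the definition recalled in Section~\ref{sec:background}, it suffices to show that
\begin{equation*}
\sup_{\eta \in \mathbb{R}} \eta \geq -\gamma_b s
\end{equation*}
for every $s$ in the relevant superlevel set. Since $-\gamma_b s$ is a finite scalar and $\eta$ is unconstrained on $\mathbb{R}$, the supremum is $+\infty$, so the inequality holds trivially. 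In particular, for any $s \geq 0$ one can simply select $\eta = 0$ (or any non-negative value) to certify feasibility, establishing that the safe action set is non-empty at every boundary point of $\{s \geq 0\}$.

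The only potential subtlety — which I would flag but not pursue in the main proof since the lemma statement concerns only whether \eqref{eqn:lower_bound} is a ZCBF — is the compatibility of this constraint with the coupled constraints \eqref{eqn:suff_constraint} and \eqref{eqn:tracking_constraint} inside the overall QP. That coupling is already handled by Lemma~\ref{lemma:base}, which determined the $\eta$ that satisfies the tracking and energy-sufficiency constraints simultaneously; here one would just remark that such an $\eta$ is consistent with $\eta \geq -\gamma_b s$ in the regime where $h_b$ is near its boundary (i.e. $s \approx 0$), since the reference point is then required to move forward along the path (giving $\eta \geq 0$). Hence $h_b$ qualifies as a ZCBF, and no step of this argument presents a real obstacle — the proof is essentially an immediate consequence of the unboundedness of the virtual input $\eta$.
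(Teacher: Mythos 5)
Your proposal is correct and follows essentially the same route as the paper: existence of a feasible $\eta$ is immediate from $\eta\in\mathbb{R}$, and the real content is compatibility with \eqref{eqn:suff_constraint}. The paper makes that compatibility check the body of its proof (at $h_e=h_b=0$ the energy-sufficiency constraint forces $\eta \geq \tfrac{\mathcal{P}(u)}{\mathcal{P}(v_r)}\tfrac{v_r}{L} > 0$, which automatically satisfies $\eta \geq 0$), whereas you relegate it to a closing remark --- but your remark contains the same positivity argument, so nothing essential is missing.
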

\begin{proof}
	Since $\eta\in\mathbb{R}$ then there exist a value of $\eta$ that satisfies
  \eqref{eqn:lower_bound_constraint}. We need to show that this constraint does
  not conflict with \eqref{eqn:suff_constraint} when both constraints are on the
  boundary of their respective safe sets, i.e. $h_e = h_b = 0$. From
  \eqref{eqn:suff_constraint}
	\begin{equation}
	\eta \geq \frac{\mathcal{P}(u)}{\bar{\mathcal{P}}(v_r)}\frac{v_r}{L}
	\label{eqn:suff_const_simple}
	\end{equation}
	while \eqref{eqn:lower_bound_constraint} becomes $\eta \geq 0$. Since the
  right hand side of \eqref{eqn:suff_const_simple} is always positive, it means
  there is always a value of $\eta$ that satisfies both
  \eqref{eqn:suff_const_simple} and \eqref{eqn:lower_bound_constraint}, thus
  \eqref{eqn:lower_bound} is a ZCBF.
\end{proof}

Although from Lemma \ref{lemma:base} we show that $E(t)=E_{nom}$ on the boundary
of the charging region, this is a result that concerns the reference point's
position $x_r$ while the robot's actual position tracks $x_r$ through enforcing
the constraint \eqref{eqn:tracking_cbf}. This situation implies the possibility
of $x_r$ reaching a point where $L(s) = \delta$ (boundary of charging region)
while the robot's position is lagging behind. In other words, we need the instant
where $E(t) = E_{nom}$ to happen inside of the charging region or at least on
its boundary.

\begin{proposition}
	\label{proposition:length_error}
	Consider a robot with dynamics \eqref{eqn:robot_dynamics} and applying the
  constraints pertaining to the CBFs \eqref{eqn:energy_suff_cbf},
  \eqref{eqn:lower_bound} and \eqref{eqn:tracking_cbf}. We define a modified
  distance threshold $\delta_m$ as
	\begin{equation}
	\delta_m \leq \delta - d
	\label{eqn:delta_m}
	\end{equation}
	then using $\delta_m$ in \eqref{eqn:energy_suff_cbf} ensures that $E(t)$ will be at most equal to $E_{nom}$.
\end{proposition}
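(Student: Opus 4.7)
The plan is to combine the uniform tracking bound from the CBF \eqref{eqn:tracking_cbf} with the arc-length interpretation of the energy-sufficiency CBF. By shrinking the ``virtual'' charging threshold inside \eqref{eqn:energy_suff_cbf} from $\delta$ to $\delta_m \leq \delta - d$, the reference point $x_r$ is driven to a location deeper inside the charging region than its geometric boundary, leaving exactly enough slack for the lagging robot to physically enter the actual charging region before the mission budget is exhausted.

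First I would invoke the forward invariance of $h_d \geq 0$ from \eqref{eqn:tracking_cbf}, which yields $\|x(t) - x_r(t)\| \leq d$ for all $t \geq t_0$. Second, by construction of $p(s)$ in \eqref{eqn:path_point}, the arc length from $x_r(s) = p(s)$ to the endpoint $w_{n_w} = p(1)$ equals $L(1-s)$, so the Euclidean bound
\[
\|x_r(s) - w_{n_w}\| \leq L(1-s)
\]
follows immediately. Third, applying Lemma~\ref{lemma:base} with $\delta$ replaced by $\delta_m$ in \eqref{eqn:energy_suff_cbf}, the equality $E(t) = E_{nom}$ on the boundary $h_e = 0$ can only occur at the instant $L(1-s) = \delta_m$. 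Assembling these three facts via the triangle inequality,
\[
\|x(t) - w_{n_w}\| \leq \|x(t) - x_r(t)\| + \|x_r(t) - w_{n_w}\| \leq d + \delta_m \leq \delta,
\]
so at the critical instant $E(t) = E_{nom}$ the robot lies inside (or on the boundary of) the physical charging region $\{x : \|x - w_{n_w}\| \leq \delta\}$. The fast-charge or battery-swap sequence is then triggered and mission-energy depletion halts, so $E$ cannot grow strictly beyond $E_{nom}$; the earlier phase $L(1-s) > \delta_m$ is handled directly by forward invariance of $h_e \geq 0$, which forces $E(t) < E_{nom}$ strictly while the reference point is still outside the inner threshold.

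The main technical subtlety will be justifying $\|x_r(s) - w_{n_w}\| \leq L(1-s)$ for the double-sigmoid smoothed path, whose arc length only matches $L(1-s)$ in the limit $\beta \to \infty$. I would resolve this within the same $\beta \gg 1$ regime already invoked in Lemma~\ref{lemma:diff_sum} and in the proof of Lemma~\ref{lemma:base}, where the smooth path hugs the underlying piecewise linear path and the Euclidean-to-arc-length inequality holds segment by segment with a correction localized to the transition neighbourhoods that vanishes as $\beta$ grows; alternatively, \eqref{eqn:delta_m} can be tightened by the same small $\beta$-dependent correction to absorb the residual rigorously without altering the qualitative statement.
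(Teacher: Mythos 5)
Your proposal is correct and follows essentially the same route as the paper: invoke Lemma~\ref{lemma:base} with $\delta_m$ to place the reference point at remaining arc length $\delta_m$ when $E=E_{nom}$, bound the Euclidean distance $\|x_r-x_c\|$ by that arc length, and combine with the tracking bound $\|x-x_r\|\leq d$ from \eqref{eqn:tracking_cbf} via the triangle inequality to conclude $\|x-x_c\|\leq \delta_m+d\leq\delta$ — which is exactly what the paper's proof argues, only with the geometric step delegated to Figure~\ref{fig:charging_region} rather than written out. Your explicit handling of the $\beta\gg 1$ arc-length approximation is a reasonable added caveat but not a departure in method.
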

\begin{proof}
	\begin{figure}[!htb]
		\centering
		\fontsize{30pt}{30pt}\selectfont
		\def\svgwidth{5.333in}
		\scalebox{0.37}{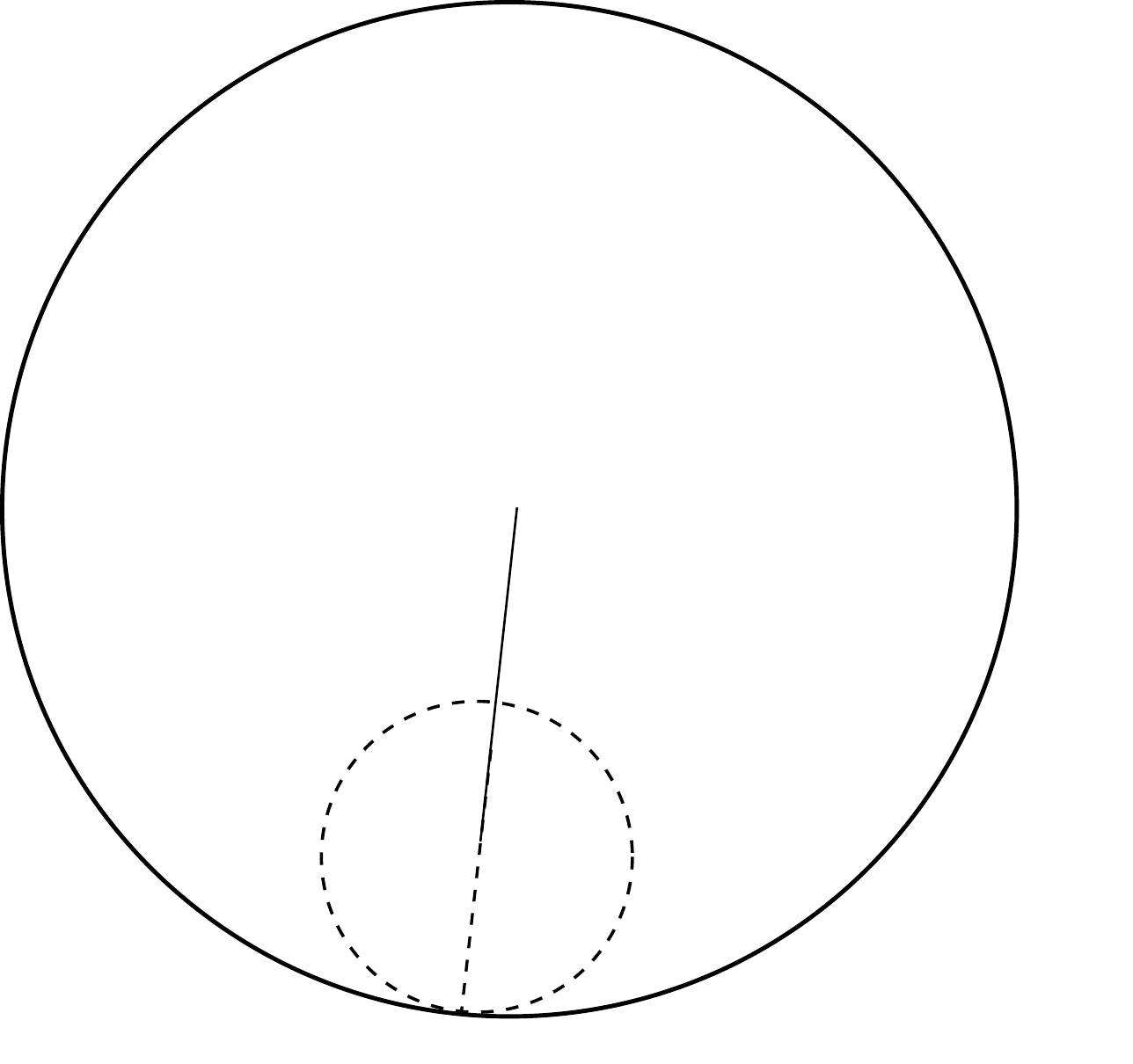}
		\caption{Demonstration of $x_r$ pursuing $\delta_m$ as the boundary of the charging region in \eqref{eqn:energy_suff_cbf} while having a robot following the reference point $x_r$ at a distance $d$ away. Here $x_r$ is the reference point position, $x_c$ is the charging station center position, $\delta$ is the charging region's radius, and $\delta_m$ is a reduced radius to track as described in \eqref{eqn:delta_m}.}
		\label{fig:charging_region}
	\end{figure}
	From Lemma~\ref{lemma:base} $E = E_{nom}$ only at $L(1-s)=\delta_m$, which is
  equal to the length of $(x_c,x_r)$ segment in
  Figure~\ref{fig:charging_region}, i.e. the remaining length along the path
  from $x_r$ to $x_c$. This implies that $||x-x_c||\leq \delta$ as demonstrated
  in Figure~\ref{fig:charging_region}.
	
\end{proof}

\begin{theorem}
	\label{thm:static_suff}
	For a robot with dynamics described by \eqref{eqn:robot_dynamics} and maximum
  magnitude of control action $u_{max}$, applying the QP in
  \eqref{eqn:QP} with constraints \eqref{eqn:suff_constraint} (with $\delta =
  \delta_m$ from \eqref{eqn:delta_m}), \eqref{eqn:tracking_constraint} and
  \eqref{eqn:lower_bound_constraint}, and with a static piecewise linear path
  with waypoints $\mathcal{W}$, then energy sufficiency is maintained , i.e. $E
  < E_{nom}$ if $||x-x_c|| > \delta$.
\end{theorem}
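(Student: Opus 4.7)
The plan is to weave together Lemma~\ref{lemma:base}, Lemma~\ref{lemma:2}, and Proposition~\ref{proposition:length_error}, first showing that the QP in \eqref{eqn:QP} is always feasible, then that forward invariance of the three resulting safe sets jointly implies the energy sufficiency claim. The first step is to verify that the three CBF constraints can be satisfied simultaneously. The key observation is that the decision variables split cleanly: $\eta$ carries \eqref{eqn:suff_constraint} and \eqref{eqn:lower_bound_constraint}, while $u$ is the principal variable in \eqref{eqn:tracking_constraint}. Lemma~\ref{lemma:base} supplies a non-negative $\eta$ satisfying \eqref{eqn:suff_constraint} (provided $v_r^{\ast}=\sqrt{m_0/m_2}\le u_{\max}$), which also fulfils \eqref{eqn:lower_bound_constraint} by Lemma~\ref{lemma:2}; the same lemma yields a $u$ with $\|u\|\le u_{\max}$ meeting \eqref{eqn:tracking_constraint}. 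So the QP is feasible at every admissible state and produces a Lipschitz-continuous controller $u^{\ast}\in U_s$.

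Next I would invoke the standard ZCBF forward-invariance result: because $h_e$, $h_d$, and $h_b$ are all ZCBFs whose defining inequalities are enforced by the QP, each of the sets $\{h_e\ge 0\}$, $\{h_d\ge 0\}$, and $\{h_b\ge 0\}$ is forward invariant under the closed-loop dynamics. In the static scenario the initial conditions $x(0)=w_1$, $x_r(0)=w_1$, $E(0)=0$, $s(0)=0$ make all three inequalities hold (assuming $E_{nom}$ is large enough to render $h_e(0)$ non-negative), so the state remains inside the intersection of the safe sets for all $t\ge 0$.

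Then I would translate the invariance of $\{h_e\ge 0\}$ and $\{h_d\ge 0\}$ into the geometric conclusion using Proposition~\ref{proposition:length_error}. Substituting $\delta_m\le\delta-d$ into \eqref{eqn:energy_suff_cbf}, the inequality $h_e\ge 0$ rewrites as
\begin{equation*}
E_{nom}-E \;\ge\; \tfrac{\mathcal{P}(v_r)}{v_r}\bigl(L(1-s)-\delta_m\bigr).
\end{equation*}
If $\|x-x_c\|>\delta$, the tracking constraint gives $\|x-x_r\|\le d$, so the triangle inequality yields $\|x_r-x_c\|>\delta-d\ge\delta_m$. Since the chord length $\|x_r-x_c\|$ is bounded above by the arc length $L(1-s)$ from $x_r$ to $x_c$ along the smoothed path, this forces $L(1-s)>\delta_m$, making the right-hand side above strictly positive and hence $E<E_{nom}$, exactly as claimed.

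The step I expect to require the most care is the joint satisfaction of all three constraints under input saturation during the transient phase. If $u$ ever saturates at $u_{\max}$, the tracking constraint \eqref{eqn:tracking_constraint} may pull $\eta$ down and threaten \eqref{eqn:suff_constraint}. The condition $v_r^{\ast}\le u_{\max}$ in Lemma~\ref{lemma:base} was derived at the safe-set boundary, so I would spend extra effort checking that the QP's $(u,\eta)$ pair remains feasible strictly inside the safe sets as well, in particular that the arc-length versus chord-length gap introduced by the double-sigmoid smoothing does not open a loophole that lets $h_e$ decay faster than $h_d$ can recover.
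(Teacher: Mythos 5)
Your proposal is correct and follows essentially the same route as the paper: QP feasibility and ZCBF validity from Lemma~\ref{lemma:base} and Lemma~\ref{lemma:2}, forward invariance of the three safe sets, and the geometric containment idea of Proposition~\ref{proposition:length_error}. The only (minor, and arguably cleaner) difference is the closing step: you obtain $E<E_{nom}$ outside the charging region directly from $h_e\ge 0$, $\|x-x_r\|\le d$, the triangle inequality, and the chord-versus-arc bound $\|x_r-x_c\|\le L(1-s)$ (which is consistent with the paper's own estimate $\left\lVert\tfrac{\partial x_r}{\partial s}\right\rVert\le L$), whereas the paper argues that $E=E_{nom}$ can only occur at $L(1-s)=\delta_m$, hence inside $\|x-x_c\|\le\delta$, and then appeals to $E$ being strictly increasing.
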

\begin{proof}
	If we substitute $\delta_m$ from \eqref{eqn:delta_m} in
  \eqref{eqn:energy_suff_cbf} and \eqref{eqn:suff_constraint}, then from lemma
  \ref{lemma:base} $E = E_{nom}$ iff $L(1-s) = \delta_m$, and from proposition
  \ref{proposition:length_error} this implies that $||x-x_c||<\delta$ when $E =
  E_{nom}$, and since $E$ is strictly increasing (due to $\mathcal{P}(u)>0$ by
  definition), we conclude that $E \leq E_{nom}$ in $||x-x_c||>\delta$.
\end{proof}


\section{Energy sufficiency over a dynamic path}
\label{sec:dynamic}
We extend the results from the previous section to consider the case in which
the path is changing with time due to robot's movement and replanning actions.
\subsection{Effect of robot's movement}
Assuming that the path is fixed (i.e., there is no replanning), the main
difference from the static case is that the first waypoint $w_1\in\mathcal{W}$
is the robot's position, leading to a change in the total path length $L$ as the
robot moves. Additionally, the values of $s_i$ at the different waypoints will
change as a result. We therefore consider the following simple proportional
control dynamics for $w_1$:
\begin{equation}
\dot{w}_1 =\xi =-k_w(w_1-x)
\end{equation}
where $K_w \gg 0$ and $\xi\in\mathbb{R}$.
The change in total path length is:
\begin{equation}
\begin{split}
\dot{L} &= \frac{d}{dt}\left(\sum_{i=2}^{n_w-1}||w_{i+1}-w_i|| + ||w_2-w_i||\right)\\
&= -\frac{(w_2-w_1)}{||w_{2}-w_1||}\xi,
\end{split}
\end{equation}
noting that all the waypoints other than $w_1$ are fixed. The change in $s_i$ is
\begin{equation}
\dot{s}_i =\frac{d}{dt}\frac{L_i}{L}= \frac{d}{dt}\left(1-\frac{\bar{L}_i}{L}\right)=\frac{\bar{L}_i}{L^2}\dot{L}
\end{equation}
where $\bar{L}_i$ is the length along the path from waypoint $w_i$ to the end of
the path and is constant for $i=2,\dots,n_w-1$. The derivative
$\tfrac{dx_r}{dt}$ is:
\begin{equation}
\frac{dx_r}{dt} = \frac{\partial x_r}{\partial s}\eta + \frac{\partial x_r}{\partial t}
\label{eqn:xr_dt}
\end{equation}
where $\tfrac{\partial x_r}{\partial t}$ follows from differentiating
\eqref{eqn:path_point} with respect to time:
\begin{equation}
\frac{\partial x_r}{\partial t} =  \sum_{i=1}^{n_w-1}\sigma_i(s)\left(w_{i+1}-w_i\right)\tfrac{\dot{L}}{L^2}\tfrac{\bar{L}_i(s_{i+1}-s)+\bar{L}_{i+1}(s-s_i)}{(s_{i+1}-s_i)^2}.
\end{equation}
Consequently, the energy sufficiency constraint \eqref{eqn:suff_constraint}
becomes
\begin{equation}
\begin{split}
&-\mathcal{P}(u)+\tfrac{\bar{\mathcal{P}}(v_r)}{v_r}(L\eta - \dot{L}(1-s))\geq -\gamma_e h_e
\end{split}
\label{eqn:suff_constraint_dynamic}
\end{equation}
and the tracking constraint \eqref{eqn:tracking_constraint} now uses $\dot{x}_r$ as in \eqref{eqn:xr_dt}.

The results from Theorem~\ref{thm:static_suff} rely on the fact that the path is
static. To use the same result in the dynamic case we
``freeze'' the path when the robot needs to go back to recharge, i.e. we
stop $w_1$ from tracking robot's position when it needs to go back to
recharge:

\begin{proposition}
	\label{proposition:freeze}
	Consider a robot with dynamics \eqref{eqn:robot_dynamics} applying the
  proposed energy sufficiency framework described by the CBFs
  \eqref{eqn:energy_suff_cbf} and \eqref{eqn:lower_bound}. Consider the
  following dynamics for $w_1$
	\begin{equation}
	\dot{w}_1 =\xi= -k_{w}(w_1-x)\left(1-\zeta(s)\right)
	\label{eqn:p_con}
	\end{equation}
	where $\zeta$ is an activation function defined as
	\begin{equation}
	\zeta(s) = \begin{cases}
	0\quad s \leq \epsilon_a\\
	1\quad\text{otherwise}
	\end{cases}
	\label{eqn:switch}
	\end{equation} 
	with   $0<\epsilon_a\ll\bar{\epsilon}_a<1$ and $||w_1-x_r(\bar{\epsilon}_a)||= d$, then \eqref{eqn:energy_suff_cbf} and \eqref{eqn:tracking_cbf} are ZCBF.
\end{proposition}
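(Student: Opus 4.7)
The plan is to partition the analysis into the two regimes delineated by the activation function $\zeta(s)$, verify the ZCBF property in each regime, and then check that the switching instant $s = \epsilon_a$ does not push either CBF out of its safe set. The key observation is that for $s > \epsilon_a$ the dynamics of $w_1$ vanish, reducing the scenario to the static setting already handled by Lemma~\ref{lemma:base}; the only real work is to control what happens for $s \leq \epsilon_a$ and across the transition.

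First I would treat the initial interval $s \leq \epsilon_a$, in which $\zeta(s) = 0$ and $\dot{w}_1 = -k_w(w_1 - x)$. Because $k_w \gg 0$, the waypoint $w_1$ contracts onto the robot position $x$ on a time scale much faster than that of the CBF dynamics, so $||w_1 - x||$ remains negligible. Combined with $s \leq \epsilon_a \ll 1$, the reference point $x_r(s)$ stays close to $w_1$ and hence close to $x$, so the tracking CBF $h_d$ is strictly positive and bounded safely away from zero. The energy CBF $h_e$ is likewise safely positive since $L(1-s) \approx L$ and only a negligible amount of energy has been consumed. Since $\eta \in \mathbb{R}$ is unrestricted and a control $u$ of modest magnitude suffices to track the slowly-drifting $x_r$, feasible values satisfying both \eqref{eqn:suff_constraint_dynamic} and the tracking constraint exist.

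Next I would handle $s > \epsilon_a$, where $\zeta(s) = 1$ and $\dot{w}_1 = 0$. The full waypoint set $\mathcal{W}$ is static, so $\partial x_r / \partial t = 0$ in \eqref{eqn:xr_dt} and $\dot{L} = 0$; the dynamic constraint \eqref{eqn:suff_constraint_dynamic} collapses to its static counterpart \eqref{eqn:suff_constraint}, and Lemma~\ref{lemma:base} applies directly to conclude that \eqref{eqn:energy_suff_cbf} and \eqref{eqn:tracking_cbf} are ZCBFs, provided $v_{r}^{*} \leq u_{\text{max}}$.

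The main obstacle is the matching condition at $s = \epsilon_a$, where one must verify that $h_d$ does not fall out of its safe set as the path freezes and the reference point begins to advance ahead of the now-stationary $w_1$. At the switch, $w_1 \approx x$ and $x_r(\epsilon_a) \approx w_1$, so $||x - x_r|| \approx 0$ and $h_d$ is deep inside the safe set. After the switch, $x_r(s)$ slides along the frozen path while $w_1$ stays put; the design condition $||w_1 - x_r(\bar{\epsilon}_a)|| = d$ identifies the parameter $\bar{\epsilon}_a > \epsilon_a$ at which $x_r$ would reach the edge of the tracking radius relative to the frozen $w_1 \approx x$. This margin over the interval $[\epsilon_a, \bar{\epsilon}_a]$ gives the robot enough room to engage the tracking dynamics of Lemma~\ref{lemma:base} and lock onto $x_r$ while $h_d \geq 0$, so the ZCBF property is preserved across the transition and on the full trajectory.
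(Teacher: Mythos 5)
Your two-regime decomposition and your treatment of the switching instant broadly mirror the paper's structure, but there is a genuine gap in how you handle the regime $s \leq \epsilon_a$. You assert that in this regime $h_e$ is ``safely positive since $L(1-s)\approx L$ and only a negligible amount of energy has been consumed.'' That is not the situation the proposition must cover: $s$ remains at zero for the entire exploration phase, during which the robot drifts along its mission with $w_1$ tracking $x$ and the energy budget steadily depleting. The critical event --- $h_e$ approaching zero --- occurs precisely while $s \leq \epsilon_a$ and while $w_1$ is still moving, so the path is \emph{not} static there, $\dot{L}\neq 0$, and the relevant constraint is the dynamic one \eqref{eqn:suff_constraint_dynamic} containing the $\dot{L}(1-s)$ term. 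The substantive step the paper makes, and which your proposal skips, is to show that near $h_e\approx 0$ the lower bound on $\eta$ implied by \eqref{eqn:suff_constraint_dynamic} is positive: the term $\tfrac{\mathcal{P}(u)}{\mathcal{P}(v_r)}v_r$ is positive by definition, and the potentially troublesome contribution of $\dot{L}(1-s)$ is governed by $\tfrac{d}{dt}\lVert w_1-w_2\rVert$, which can only remain negative until $\lVert w_1-w_2\rVert\approx 0$, after which the bound is positive. This yields $\eta>0$ at the energy boundary, which is what forces $s$ to actually increase past $\epsilon_a$ and trigger the freeze.

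Without that argument, your transition analysis has nothing to stand on: you cannot conclude that the system ever reaches the frozen-path regime where Lemma~\ref{lemma:base} applies, nor that $h_e$ remains a ZCBF while $w_1$ is still moving, because you have assumed away exactly the boundary case ($h_e\approx 0$ with $s$ small and $w_1$ in motion) that the proposition is designed to resolve. The part of your proposal covering the crossing itself --- continuity of $s(t)$ together with $\epsilon_a\ll\bar{\epsilon}_a$ and $\lVert w_1-x_r(\bar{\epsilon}_a)\rVert=d$ ensuring the path freezes while $\lVert x-x_r\rVert<d$, after which the static analysis of Lemma~\ref{lemma:base} takes over --- is consistent with the paper; the missing piece is the positivity of $\eta$ at the energy boundary in the moving-$w_1$ regime, including the sign analysis of $\dot{L}(1-s)$.
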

\begin{proof}
	We start by noting that \eqref{eqn:p_con} achieves tracking of the robot's
  position in case $s=0$, with an error inversely proportional to $k_w$,
  according to the candidate Lyapunov function
	\begin{equation}
	\scriptsize
	V = \tfrac{1}{2}(x-w_1)^T(x-w_1), \quad \dot{V} = (x-w_1)^Tu-k_w||x-w_1||^2
	\end{equation} 
	which has $\dot{V}\leq0$ under high value of $k_\omega$ (which is feasible since
  $\xi$ is an imaginary point with no physical characteristics). Moreover, since
  $\eta\in\mathbb{R}$ then there is a value of $\eta$ capable of satisfying the
  following inequality
	\begin{equation}
	\begin{split}
	&\eta \geq \frac{1}{L}\left(\tfrac{\mathcal{P}(u)-\gamma_eh_e}{\mathcal{P}(v_r)}v_r+\dot{L}(1-s)\right).
	\end{split}
	\label{eqn:suff_const_dyn}
	\end{equation}
  We need to show that if $ 0<s < \epsilon_a$ (when the reference point starts moving
  but the path freezing has not been activated yet, according to
  \eqref{eqn:switch}), tracking and energy sufficiency constraints are not
  violated as well as that $s$ increases so that $s > \epsilon_a$.
  
  To prove the latter, we need to show that the right hand side of
  \eqref{eqn:suff_const_dyn} is positive when $h_e \approx 0$, i.e. near the boundary
  of energy sufficiency safe set. The term
  $\tfrac{\mathcal{P}(u)}{\mathcal{P}(v_r)}v_r>0$ by definition, so the sign of
  the right hand side of \eqref{eqn:suff_const_dyn} depends on sign of $\dot{L}$. It can be
  shown that the sign of $\dot{L}(1-s)$ depends on the sign of
  $\tfrac{d}{dt}||w_1-w_2||$, therefore even if $\tfrac{d}{dt}||w_1-w_2||<0$, it
  will be so until $ ||w_1-w_2||\approx 0$, when the right hand side of
  \eqref{eqn:suff_const_dyn} will be positive. Therefore, when $h_e\approx 0$,
  $\eta > 0$, meaning $s$ will increase even when $0 < s < \epsilon_a$.
  
  As a result, the reference point $x_r$ moves along the path and $h_d$ in
  \eqref{eqn:tracking_cbf} approaches zero (in the limit case $||w_1-x_r|| =
  d$ at $s=\bar{\epsilon}_a$). The fact that $s(t)$ is continuous and
  $\epsilon_a \ll \bar{\epsilon}_a$ implies that $\dot{w}_1 = 0$ before $s(t) =
  \bar{\epsilon}_a$, i.e. the path freezes while $h_d > 0$, so the path freezing
  condition \eqref{eqn:switch} does not violate the tracking CBF $h_d$, nor the energy
  sufficiency CBF $h_e$, and consequently the result of lemma \ref{lemma:base}
  follows (since $h_e\approx0$ and $\eta > 0$ implying $s>0$ leading to
  $h_d\approx0$).
  
\end{proof}


The full QP problem with the constraints discussed so far can be expressed as
\begin{equation}
\begin{aligned}
&  \mathbf{u}^*=\underset{\mathbf{u}}{\text{min}}
& & ||\mathbf{u}-\mathbf{u}_{nom}||^2 \\
& \quad \quad \quad \text{s.t.}
& & \mathbf{A}\mathbf{u} \geq \mathbf{B} \\
\end{aligned}
\label{eqn:QP_suff}
\end{equation}
where 
\begin{equation}
\begin{split}
\mathbf{A} &= \begin{bmatrix}
\tfrac{\mathcal{P}(v_r)}{v_r}L &\mathbf{0}_{1\times 2}\\1&\mathbf{0}_{1\times 2}\\
(x-x_r)^T\tfrac{\partial x_r}{\partial s}&-(x-x_r)^T
\end{bmatrix}\\ 
\mathbf{B} &= \begin{bmatrix}
-\gamma_e h_e + \mathcal{P}(u)+\dot{L}(1-s)\\-\gamma_bh_b\\
-\gamma_dh_d
\end{bmatrix}\\
\mathbf{u}_{nom}&=\begin{bmatrix}
0&u_{nom}
\end{bmatrix}
\end{split}
\label{eqn:defs}
\end{equation}


\begin{theorem}
	\label{thm:suff}
	For a robot described by \eqref{eqn:robot_dynamics} with a set of ordered
  waypoints $\mathcal{W}\in\mathbb{R}^{n_w\times 2}$, \eqref{eqn:QP_suff}
  ensures energy sufficiency.
\end{theorem}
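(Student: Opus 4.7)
My plan is to reduce the dynamic case to the static case already handled by Theorem~\ref{thm:static_suff}, using the freezing mechanism from Proposition~\ref{proposition:freeze} as the bridge. First I would verify that the QP~\eqref{eqn:QP_suff} with decision variable $\mathbf{u}=[\eta,\,u^T]^T$ is simply the standard CBF-QP that jointly enforces the three constraints already analyzed: the dynamic energy sufficiency constraint~\eqref{eqn:suff_constraint_dynamic} (first row of $\mathbf{A},\mathbf{B}$), the lower bound constraint~\eqref{eqn:lower_bound_constraint} on $s$ (second row), and the tracking constraint~\eqref{eqn:tracking_constraint} with $\dot{x}_r$ now taken from~\eqref{eqn:xr_dt} (third row). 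Feasibility of the QP then reduces to the ZCBF property of each individual constraint.

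Next, I would split the mission timeline into two regimes according to the switch $\zeta(s)$ defined in~\eqref{eqn:switch}. In the \emph{mission regime} ($s\leq\epsilon_a$), $w_1$ tracks the robot's position through~\eqref{eqn:p_con}; energy is plentiful so $h_e$ stays well inside its safe set, the energy constraint is inactive, and $u^*\approx u_{nom}$. When $h_e$ shrinks toward zero, the argument in Proposition~\ref{proposition:freeze} shows that the right-hand side of~\eqref{eqn:suff_const_dyn} becomes positive, forcing $\eta>0$ so that $s$ grows and eventually crosses $\epsilon_a$; at that instant $\zeta(s)=1$, the dummy waypoint dynamics yield $\dot{w}_1=0$, and the path becomes static. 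The condition $\epsilon_a\ll\bar\epsilon_a$ with $\|w_1-x_r(\bar\epsilon_a)\|=d$ guarantees that when freezing engages we still have $\|x-x_r\|<d$, so $h_d>0$ is preserved across the switching instant.

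Once the path is frozen, the problem is exactly the static setting of Section~\ref{sec:static}: $\dot L=0$, so~\eqref{eqn:suff_constraint_dynamic} reduces to~\eqref{eqn:suff_constraint}, and~\eqref{eqn:xr_dt} collapses to $\dot x_r=(\partial x_r/\partial s)\eta$, turning the tracking constraint back into~\eqref{eqn:tracking_constraint}. Invoking Theorem~\ref{thm:static_suff} on this frozen path (which already uses $\delta_m=\delta-d$ from Proposition~\ref{proposition:length_error} inside $h_e$) gives $E=E_{nom}$ only when the reference point satisfies $L(1-s)=\delta_m$, at which moment $\|x-x_c\|\leq\delta$. Because $\dot E=\mathcal{P}(u)>0$, $E$ is strictly increasing along the trajectory, so $E<E_{nom}$ on the complement $\|x-x_c\|>\delta$, which is the desired energy sufficiency.

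The main obstacle is not any individual computation but making the two-regime argument watertight at the switching instant: one must show that the QP solution does not produce a jump in $\eta$ or $u$ that drives $h_e$ or $h_d$ negative precisely as $\zeta(s)$ toggles. This is exactly what Proposition~\ref{proposition:freeze} was designed to establish, together with the separation $\epsilon_a\ll\bar\epsilon_a$, so the theorem essentially just assembles Lemma~\ref{lemma:2}, Proposition~\ref{proposition:freeze}, and Theorem~\ref{thm:static_suff} in sequence.
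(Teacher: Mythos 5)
Your proposal is correct and follows essentially the same route as the paper's proof: it assembles Proposition~\ref{proposition:freeze} (validity of $h_e$ and $h_d$ as ZCBFs and the path-freezing bridge), Lemma~\ref{lemma:2} (validity of $h_b$), and Lemma~\ref{lemma:base} with Proposition~\ref{proposition:length_error} via Theorem~\ref{thm:static_suff}, concluding from the strict monotonicity of $E$ that $E<E_{nom}$ whenever $\|x-x_c\|>\delta$. Your write-up merely spells out the two-regime reduction to the static case in more detail than the paper does.
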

\begin{proof}
	Since \eqref{eqn:energy_suff_cbf} and \eqref{eqn:tracking_cbf} are valid ZCBFs
  from Proposition \ref{proposition:freeze} and \eqref{eqn:lower_bound} is a
  valid ZCBF then from Lemma~\ref{lemma:2}, then from
  Proposition~\ref{proposition:length_error} and Lemma~\ref{lemma:base}
  (augmented by Proposition~\ref{proposition:freeze}) $E(t) = E_{nom}$ at
  $||x-x_c|| < \delta$ (inside the charging region), and since $E(t)$ is
  strictly increasing it means $E(t) < E_{nom}$ for $||x-x_c|| > \delta$,
  as shown by Theorem~\ref{thm:static_suff}, i.e. energy sufficiency is
  maintained.
\end{proof}
\subsection{Effect of path planning}
During the course of a mission, the path planner keeps updating the waypoints
back to the charging station every $\mathcal{T}$ seconds, meaning that there are
discrete changes in the number of waypoints and their locations, which can lead
to violating the energy sufficiency constraint. To account for these changes, we
impose some conditions on the output of the path planner so as not to violate
other constraints.

\begin{definition}
	\label{def:spc}
	Assuming there is a path $\mathcal{W}^{(k-1)\mathcal{T}} =
  \{w_1^{(k-1)\mathcal{T}},\dots,w_{n_w}^{(k-1)\mathcal{T}}\}$ at time
  $(k-1)\mathcal{T}$ between a robot at position
  $x(k\mathcal{T})=w_1^{(k-1)\mathcal{T}}$ and the charging station, Sequential
  Path Construction (SPC) is the process of creating a new set of waypoints
  $\mathcal{W}^{k\mathcal{T}}=\{w_1^{k\mathcal{T}},\dots,w_{n_w+1}^{k\mathcal{T}}\}$
  at time $k\mathcal{T}$ provided that $\zeta(s) = 0$, where $\zeta(s)$ is
  defined in \eqref{eqn:switch}, such that
	\begin{equation}
	\begin{split}
	w_1^{k\mathcal{T}} &= x(k\mathcal{T})\\
	w_2^{k\mathcal{T}}&=\kappa w_1^{k\mathcal{T}}+(1-\kappa)w_3^{k\mathcal{T}}\\
	w_{i+1}^{k\mathcal{T}}&=w_i^{(k-1)\mathcal{T}}, \quad i=2,\dots,n_w
	\end{split}
	\label{eqn:spc}
	\end{equation}
	where $0\ll\kappa<1$.
\end{definition}
\begin{lemma}
	\label{lemma:length_invariance}
	Sequential Path Construction is path length and path angle invariant, meaning
  the following two equations are satisfied
	\begin{equation}
	\begin{split}
	\sum_{i=1}^{n_w-1}||w_i^{(k-1)\mathcal{T}}-w_{i+1}^{(k-1)\mathcal{T}}|| &= \sum_{i=1}^{n_w}||w_i^{k\mathcal{T}}-w_{i+1}^{k\mathcal{T}}||\\
	\sum_{i=2}^{n_w-1}|\psi_i^{(k-1)\mathcal{T}}|&=\sum_{i=2}^{n_w}|\psi_i^{k\mathcal{T}}|
	\end{split}
	\end{equation}
	where 
	\begin{equation}
	\psi_i = \cos^{-1}\frac{\Delta w_{i-1}^{i}.\Delta w_{i}^{i+1}}{||\Delta w_{i-1}^{i}||.||\Delta w_{i}^{i+1}||}
	\label{eqn:psi}
	\end{equation}
	and $\Delta w_{i}^{i+1} = w_{i+1}-w_i$
\end{lemma}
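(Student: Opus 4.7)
The plan is to exploit the geometric interpretation of SPC: the update simply inserts a point that lies on the first segment of the old path and re-indexes every other waypoint by one. Once this observation is made precise, both invariances reduce to short bookkeeping arguments, so no deep analytical tool is needed.

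The first step is to substitute the SPC rule \eqref{eqn:spc} into the two new edges adjacent to $w_2^{k\mathcal{T}}$. Using $w_1^{k\mathcal{T}} = x(k\mathcal{T}) = w_1^{(k-1)\mathcal{T}}$ and $w_3^{k\mathcal{T}} = w_2^{(k-1)\mathcal{T}}$, a direct calculation yields
\begin{equation*}
w_2^{k\mathcal{T}} - w_1^{k\mathcal{T}} = (1-\kappa)\bigl(w_2^{(k-1)\mathcal{T}} - w_1^{(k-1)\mathcal{T}}\bigr), \qquad w_3^{k\mathcal{T}} - w_2^{k\mathcal{T}} = \kappa\bigl(w_2^{(k-1)\mathcal{T}} - w_1^{(k-1)\mathcal{T}}\bigr).
\end{equation*}
Both are positive scalar multiples of the same vector, which is the backbone of the argument: $w_2^{k\mathcal{T}}$ sits on the first old segment and splits it in the ratio $(1-\kappa):\kappa$. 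For $i \geq 3$, the rule $w_{i+1}^{k\mathcal{T}} = w_i^{(k-1)\mathcal{T}}$ makes every subsequent new edge a verbatim copy of the corresponding old edge.

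Path length invariance then follows immediately: the norms of the two new edges around $w_2^{k\mathcal{T}}$ sum to $((1-\kappa)+\kappa)\|w_2^{(k-1)\mathcal{T}} - w_1^{(k-1)\mathcal{T}}\|$, exactly the length of the old first edge, while the remaining edges are merely relabelled, and a single index shift recovers the old sum. Path angle invariance is similarly direct: the collinearity above forces $\psi_2^{k\mathcal{T}} = \cos^{-1}(1) = 0$, so the newly inserted waypoint contributes nothing to the angle sum. For $i \geq 3$, the pair of edge vectors entering \eqref{eqn:psi} at $w_i^{k\mathcal{T}}$ matches the pair defining $\psi_{i-1}^{(k-1)\mathcal{T}}$, up to the positive scaling $\kappa$ in the $i=3$ case, which cancels in the cosine. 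Hence $\psi_i^{k\mathcal{T}} = \psi_{i-1}^{(k-1)\mathcal{T}}$, and an index shift again recovers the old angle sum.

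The only mild obstacle I anticipate is the index bookkeeping at the seam between the inserted point and the shifted tail, specifically verifying that the positive scaling $\kappa$ appearing in $w_3^{k\mathcal{T}} - w_2^{k\mathcal{T}}$ does not perturb the direction, so that $\psi_3^{k\mathcal{T}}$ equals $\psi_2^{(k-1)\mathcal{T}}$ exactly. Everything else is routine substitution.
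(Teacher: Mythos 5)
Your proposal is correct and follows essentially the same route as the paper: collapse the two edges adjacent to the inserted point $w_2^{k\mathcal{T}}$ into the old first edge via the $(1-\kappa)$ and $\kappa$ collinear decomposition, note $\psi_2^{k\mathcal{T}}=0$, and relabel the remaining edges and angles by an index shift. If anything, you are slightly more careful than the paper at the seam (verifying that the scaling $\kappa$ cancels in the cosine so $\psi_3^{k\mathcal{T}}=\psi_2^{(k-1)\mathcal{T}}$, a step the paper dismisses as holding "by definition"), and your $\kappa$ versus $(1-\kappa)$ factors are assigned correctly.
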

\begin{proof}
	The path length at time $k\mathcal{T}$ is 
	\begin{equation}
	\begin{split}
	&L^{k\mathcal{T}}=\sum_{i=1}^{n_w}||w_i^{k\mathcal{T}}-w_{i+1}^{k\mathcal{T}}||\\
	&=||w_1^{k\mathcal{T}}-w_2^{k\mathcal{T}}||+||w_2^{k\mathcal{T}}-w_3^{k\mathcal{T}}||+\sum_{i=3}^{n_w}||w_i^{k\mathcal{T}}-w_{i+1}^{k\mathcal{T}}||\\
	&=||w_1^{k\mathcal{T}}-w_3^{k\mathcal{T}}||+\sum_{i=2}^{n_w}||w_i^{(k-1)\mathcal{T}}-w_{i+1}^{(k-1)\mathcal{T}}||\\
	&=||w_1^{(k-1)\mathcal{T}}-w_2^{(k-1)\mathcal{T}}||+\sum_{i=2}^{n_w}||w_i^{(k-1)\mathcal{T}}-w_{i+1}^{(k-1)\mathcal{T}}||\\&=L^{(k-1)\mathcal{T}}.
	\end{split}
	\end{equation}
  By definition we have:
  $\psi^{(k-1)\mathcal{T}}_i=\psi^{k\mathcal{T}}_{i+1}$ for $i=2,\dots,n_w$.
  Since
  $w_3^{k\mathcal{T}}-w_2^{k\mathcal{T}}=(1-\kappa)(w_3^{k\mathcal{T}}-w_1^{k\mathcal{T}})$
  and
  $w_2^{k\mathcal{T}}-w_1^{k\mathcal{T}}=\kappa(w_3^{k\mathcal{T}}-w_1^{k\mathcal{T}})$,
  then $\cos\psi_2^{k\mathcal{T}}=1$ therefore $\psi_2^{k\mathcal{T}}=0$. In
  conclusion
  $\sum_{i=2}^{n_w-1}|\psi_i^{(k-1)\mathcal{T}}|=\sum_{i=2}^{n_w}|\psi_i^{k\mathcal{T}}|$.
\end{proof}
\begin{proposition}
	\label{proposition:spc}
	Sequential Path Construction does not violate energy sufficiency, provided
  $\mathbf{u}_{nom}$ is Lipschitz.
\end{proposition}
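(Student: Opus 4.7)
The plan is to show that every SPC event is a bounded perturbation that keeps all three CBF values in their safe sets, and that between SPC events the dynamic-path analysis of Theorem~\ref{thm:suff} applies verbatim. Three quantities have to be tracked across the jump at $t=k\mathcal{T}$: $h_e$, $h_b$, and $h_d$; Lipschitz continuity of $\mathbf{u}_{nom}$ is used only at the end, to close the standard Lipschitz-QP loop on each inter-SPC interval.

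The first step is to use Lemma~\ref{lemma:length_invariance}: since $L^{k\mathcal{T}}=L^{(k-1)\mathcal{T}}$, carrying $s$ through SPC as a continuous state variable makes the quantity $L(1-s)-\delta$ in $h_e$ continuous, and the energy $E$ is continuous because it integrates an instantaneous power, so $h_e$ itself does not jump at $t=k\mathcal{T}$. The analogous statement for $h_b=s$ is immediate. This already gives energy sufficiency across the jump; what remains is to show that $h_d$ survives, since otherwise the tracking constraint \eqref{eqn:tracking_constraint} could become infeasible and thereby indirectly violate $h_e$ on subsequent intervals. To bound the jump in $x_r$ (and hence in $h_d$), I would exploit that SPC can only fire when $\zeta(s)=0$, i.e.\ $s\le\epsilon_a\ll 1$, so that $x_r(s)$ sits within $O(\epsilon_a L)$ of $w_1^{(k-1)\mathcal{T}}$ immediately before the update, and by Proposition~\ref{proposition:freeze} one has $\|w_1^{k\mathcal{T}}-w_1^{(k-1)\mathcal{T}}\|=O(1/k_w)$; combined with $w_2^{k\mathcal{T}}$ being a convex combination of $w_1^{k\mathcal{T}}$ and $w_3^{k\mathcal{T}}=w_2^{(k-1)\mathcal{T}}$, so that the first segment of the new path lies on the first segment of the old one, the induced jump in $x_r(s)$ is $O(1/k_w)$. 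Since $h_d$ sits near $d^{2}/2$ throughout the regime $s\le\epsilon_a$, choosing $k_w$ large enough and $\epsilon_a L\ll d$ absorbs this jump with strict margin.

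On each open interval $((k-1)\mathcal{T},k\mathcal{T})$ the waypoints are frozen and Theorem~\ref{thm:suff} applies directly, yielding $E(t)<E_{nom}$ whenever $\|x-x_c\|>\delta$. Lipschitz continuity of $\mathbf{u}_{nom}$ is what makes the QP~\eqref{eqn:QP_suff} return a locally Lipschitz $\mathbf{u}^*$, so the closed-loop ODE is well-posed and the standard forward-invariance argument of \citet{ames2019control} runs on each inter-SPC interval; stitching these intervals through the finitely many bounded jumps established above yields $E(t)\le E_{nom}$ for all $t$. The step I expect to be the main obstacle is the bound on the $x_r$-jump, as it requires careful bookkeeping between the two different parametrisations of the first segment of the smooth path~\eqref{eqn:path_point}; Lemma~\ref{lemma:length_invariance}, by contrast, makes the argument for $h_e$ itself essentially automatic once $s$ is declared to flow continuously through the SPC instant.
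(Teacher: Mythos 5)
Your core argument is the same as the paper's: energy $E$, the parameter $s$, and $\mathcal{P}(\mathbf{u}_{nom})$ are continuous across the SPC instant, and Lemma~\ref{lemma:length_invariance} gives invariance of $L$ (hence of $\dot L$), so $h_e$ and the energy-sufficiency constraint cannot jump out of their safe values at a path update. Where you diverge is in scope and bookkeeping. The paper's proof is deliberately narrow: it only verifies that the two inequalities in \eqref{eqn:keep_} --- the value of $h_e$ and the constraint \eqref{eqn:keep_d_suff_} evaluated at the nominal inputs $\mathbf{u}_{nom},\eta_{nom}$ --- are unchanged under SPC, and it leaves the tracking CBF $h_d$ entirely out of this proposition; the survival of $h_d$ under path admission is handled later, in Theorem~\ref{thm:SPC_suff}, by the observation that SPC fires only when $\zeta(s)=0$, i.e.\ $s\approx 0$, where $w_1$ tracks $x$ and hence $x_r$ tracks the robot. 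You instead fold that tracking issue into this proposition, bounding the jump in $x_r$ by $O(1/k_w)+O(\epsilon_a L)$ using the collinearity of the inserted waypoint ($w_2^{k\mathcal{T}}$ lies on the segment from the new $w_1$ to the old $w_2$), and then you replace the paper's explicit check of \eqref{eqn:keep_d_suff_} by an interval-wise forward-invariance argument (Lipschitz $\mathbf{u}_{nom}$ $\Rightarrow$ Lipschitz QP solution $\Rightarrow$ Theorem~\ref{thm:suff} on each inter-update interval, stitched through the jumps). Both routes are sound: yours yields a self-contained hybrid-trajectory statement at the cost of extra parameter conditions ($k_w$ large, $\epsilon_a L\ll d$, which are in fact implied by the standing assumptions $k_w\gg 0$ and $\epsilon_a\ll\bar\epsilon_a$ with $\|w_1-x_r(\bar\epsilon_a)\|=d$), while the paper's proof is leaner precisely because it checks only what Algorithm~\ref{algo} needs (preservation of \eqref{eqn:keep_}) and defers the $h_d$ question to the theorem where it is resolved by the $s=0$ tracking argument rather than by a perturbation bound.
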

\begin{proof}
	We need to show that changing the path with SPC does not affect the following
  two inequalities if they are satisfied for the original path:
	\begin{subequations}
		\begin{equation}
		h_e = E_{nom} - E - \tfrac{\mathcal{P}(v_r)}{v_r}(L(1-s)-\delta_m)\geq0
		\label{eqn:keep_suff_}
		\end{equation}
		\begin{equation}
		-\mathcal{P}(u_{nom}) + \tfrac{\mathcal{P}(v_r)}{v_r}\left(L\eta_{nom}-\dot{L}(1-s)\right)\geq -\gamma_e h_e
		\label{eqn:keep_d_suff_}
		\end{equation}
		\label{eqn:keep_}
	\end{subequations}
	Since $\eta_{nom}$ and $u_{nom}$ are continuous, then $\mathcal{P}(u_{nom}),
  E$ and $s$ are all continuous with no jumps (i.e., discrete changes). Since
  path length is invariant under SPC by virtue of
  Lemma~\ref{lemma:length_invariance}, then $L$ in \eqref{eqn:keep_} does not
  change as well as $\dot{L}$, meaning that \eqref{eqn:keep_} is not violated
  under SPC.
\end{proof}
Based on the proposition above, we introduce the Algorithm~\ref{algo} that
admits new paths produced by the path planner as long as they do not violate
\eqref{eqn:keep_}, and otherwise switches to SPC.
\begin{algorithm}[!h]
	\caption{Admission of new path at time $k\mathcal{T}$}
	\label{algo}
	\begin{algorithmic}
		\Require $\mathcal{W}^{(k-1)\mathcal{T}}$,$\mathcal{W}^{k\mathcal{T}}_{candidate}$, $x$,$\mathbf{u}_{nom}$
		\State $L,h_e\gets$ EVALUATE\_PATH($\mathcal{W}^{k\mathcal{T}}_{candidate},\mathbf{u}_{nom}$)
		\If{$\zeta(s) == 0$}
		\If{\eqref{eqn:keep_suff_} == False OR \eqref{eqn:keep_d_suff_} == False}
		\State $\mathcal{W}^{k\mathcal{T}} \gets$ SPC\_PATH$(x(k\mathcal{T}),\mathcal{W}^{(k-1)\mathcal{T}})$
		\Else {}
		\State $\mathcal{W}^{k\mathcal{T}} \gets \mathcal{W}^{k\mathcal{T}}_{candidate}$
		\EndIf
		\EndIf
		\State \Return $\mathcal{W}^{k\mathcal{T}}$
	\end{algorithmic}
\end{algorithm}
In Algorithm~\ref{algo},
EVALUATE\_PATH($\mathcal{W}^{k\mathcal{T}},\mathbf{u}_{nom}$) is a function that
takes the candidate path points from the path planner and evaluates the path
length, as well as the value of $h_e$, and
SPC\_PATH$(x(k\mathcal{T}),\mathcal{W}^{(k-1)\mathcal{T}})$ is a function that
updates a path using SPC.
\begin{theorem}
	\label{thm:SPC_suff}
	For a robot described by \eqref{eqn:robot_dynamics} that applies the control
  strategy in \eqref{eqn:QP_suff} for an already existing set of waypoints
  $\mathcal{W}^{(k-1)\mathcal{T}}$ at time $(k-1)\mathcal{T}$, $k\in
  \mathbb{N}$. Suppose a path planner produces a candidate set of waypoints
  $\mathcal{W}^{k\mathcal{T}}$ at time $k\mathcal{T}$ that satisfies the
  conditions in \eqref{eqn:keep_} and provided that $\zeta(s)=0$ from
  \eqref{eqn:switch}, then Algorithm~\ref{algo} ensures energy sufficiency is
  maintained.

	
\end{theorem}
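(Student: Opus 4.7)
The plan is to reduce this theorem to a combination of the results already established, by treating the evolution in two regimes: the continuous evolution between replanning instants, and the discrete transitions at each instant $k\mathcal{T}$. First I would observe that, on any half-open interval $[(k-1)\mathcal{T}, k\mathcal{T})$, the path in use is fixed (apart from the controlled motion of $w_1$), so Theorem~\ref{thm:suff} applies verbatim and guarantees that the triple of CBFs $(h_e, h_b, h_d)$ stays non-negative and hence energy sufficiency holds throughout that interval. What remains is to argue that nothing bad happens at the replanning instant $t = k\mathcal{T}$.

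At $t = k\mathcal{T}$, Algorithm~\ref{algo} branches on whether the candidate path $\mathcal{W}^{k\mathcal{T}}_{candidate}$ satisfies the two inequalities in \eqref{eqn:keep_}. I would handle the two branches separately. In the first branch, the candidate path is accepted, and by construction both \eqref{eqn:keep_suff_} and \eqref{eqn:keep_d_suff_} hold with the new values of $L$, $\dot L$, and $h_e$ evaluated along $\mathcal{W}^{k\mathcal{T}}_{candidate}$; this is exactly the condition needed to keep $h_e \ge 0$ and the associated CBF constraint \eqref{eqn:suff_constraint_dynamic} feasible at $t = k\mathcal{T}^+$, after which Theorem~\ref{thm:suff} again takes over on the next interval. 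In the second branch, SPC is invoked; here I invoke Proposition~\ref{proposition:spc}, which, together with the length and angle invariance established in Lemma~\ref{lemma:length_invariance}, guarantees that $L$, $\dot L$, $s$, and $E$ do not jump, so that \eqref{eqn:keep_} is preserved through the discrete update and both $h_e$ and $h_d$ remain non-negative across the switch.

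I would then stitch the intervals together by induction on $k$. The base case uses the hypothesis that $\mathcal{W}^{(k-1)\mathcal{T}}$ is already an admissible path for which \eqref{eqn:QP_suff} yields a feasible safe control, and the inductive step uses the case analysis above to show that $\mathcal{W}^{k\mathcal{T}}$ is likewise admissible. The activation clause $\zeta(s) = 0$ is critical here: Proposition~\ref{proposition:freeze} tells us that once $s > \epsilon_a$ the path is frozen and we are in the static regime of Theorem~\ref{thm:static_suff}, so the algorithm only needs to rule out unsafe replans while the robot is still in the pre-commitment phase, which is exactly what the $\zeta(s) = 0$ guard in Algorithm~\ref{algo} enforces. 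Finally, combining Proposition~\ref{proposition:length_error} and Lemma~\ref{lemma:base} (via the $\delta_m$ substitution) yields $E(t) = E_{nom}$ only inside the charging region, and strict monotonicity of $E$ from $\mathcal{P}(u) > 0$ gives $E(t) < E_{nom}$ for $\|x - x_c\| > \delta$, which is the conclusion.

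The main obstacle I anticipate is the discrete transition at $t = k\mathcal{T}$ in the SPC branch: one must verify that the CBF $h_d$ associated with tracking is not invalidated when an extra waypoint is spliced in between $w_1$ and $w_2$. Since the insertion is done at $\zeta(s) = 0$, i.e.\ before the reference point has meaningfully departed from $w_1$, and since the new $w_2$ lies on the segment $[w_1, w_3^{(k-1)\mathcal{T}}]$, the reference point $x_r$ and the robot position $x$ are both near $w_1$, so $h_d$ remains comfortably positive; still, making this rigorous would require a short continuity argument on the smooth parameterization \eqref{eqn:path_point} under the waypoint insertion, which is the one piece that would need care beyond simply quoting prior results.
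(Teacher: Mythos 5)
Your proposal is correct and follows essentially the same route as the paper: handle the frozen-path/continuous regime via Theorem~\ref{thm:suff}, and at each replanning instant branch on whether \eqref{eqn:keep_} holds (accept the candidate) or not (fall back to SPC and invoke Proposition~\ref{proposition:spc} with Lemma~\ref{lemma:length_invariance}), all under the $\zeta(s)=0$ guard. The tracking-CBF issue you flag at the discrete update is the same point the paper settles with its brief observation that while $\zeta(s)=0$ the reference point $x_r$ sits at $w_1$, which tracks the robot's position, so $h_d>0$ is preserved.
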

\begin{proof}
	If the new set of augmented waypoints satisfies \eqref{eqn:keep_suff_}, then
  switching from $\mathcal{W}^{(k-1)\mathcal{T}}$ to
  $\mathcal{W}^{k\mathcal{T}}$ does not violate the energy sufficiency
  constraint encoded by $h_e$. Moreover, if said switching satisfies
  \eqref{eqn:keep_d_suff_}, then $h_e > 0$ is satisfied with $\eta_{nom}$ at $s
  = 0$, meaning that $w_1$ tracks $x$ as outlined in
  Proposition~\ref{proposition:freeze} and consequently $x_r$ tracks the
  robot's position (since $s=0$ and $w_1$ tracks robot's position), thus $h_d >
  0$ is not violated as well, which means the sufficiency and tracking
  constraints are not violated by the path update.
	
	When $s> 0$, i.e. $\zeta(s)\neq0$, the path is frozen and energy sufficiency
  is maintained by virtue of Theorem~\ref{thm:suff}. If either condition in
  \eqref{eqn:keep_} is violated, the path is updated using SPC which maintains
  energy sufficiency as discussed in Proposition~\ref{proposition:spc}.
  Therefore Algorithm~\ref{algo} ensures energy sufficiency and tracking
  constraints are not violated.
	
\end{proof}


\section{Application to unicycle-type robots}
\label{sec:corner_power}
The method described so far uses a single integrator model to describe robot
dynamics. Although such model choice is widely used in robotics and has the
advantage of versatility \citep{zhao2017defend}, applying it directly to more
specific robot models needs proper adaptation, especially considering the effects of
unmodelled modes of motion on power consumption. In this section we describe a
method to apply the proposed framework on a non-holonomic wheeled robot,
which has the added characteristic of being able to spin. More specifically we
are interested in robots with the following unicycle kinematic model
\begin{equation}
\begin{split}
\dot{x}_1&= v\cos\theta\\
\dot{x}_2&= v\sin\theta\\
\dot{\theta} &= \omega
\end{split}
\label{eqn:unicycle_kinematics}
\end{equation} 
where $x=\begin{bmatrix}
x_1 x_2
         \end{bmatrix}^T\in\mathbb{R}^2$ is robot's position, $\theta\in\mathbb{R}$ is its orientation, $v \in\mathbb{R}$ and $\omega\in\mathbb{R}$ are the linear and angular speeds respectively which act as inputs. A single integrator speed $u$ from \eqref{eqn:QP_suff} can be transformed to linear and angular speeds for a unicycle through the following relation \citep{ogren2001control}:
\begin{equation}
\begin{bmatrix}
v\\\omega
\end{bmatrix}=\begin{bmatrix}
1&0\\0&\tfrac{1}{\ell}
\end{bmatrix}\begin{bmatrix}
\cos\theta&\sin\theta\\-\sin\theta & \cos\theta
\end{bmatrix}u
\label{eqn:unicycle_transformation}
\end{equation}
where $\ell > 0$ is a distance from the robot's center to an imaginary handle
point. We also choose to be able to move backward when the value of $v$ becomes
negative by doing the following
\begin{equation}
\begin{split}
v^\prime &= v\\
\omega^\prime &= \omega \frac{v}{|v|}
\end{split}
\label{eqn:twoway}
\end{equation}
A robot described by \eqref{eqn:unicycle_transformation} consumes additional
power due to its angular speed $\omega$ in addition to what is consumed by its
linear speed $v$, which calls for augmenting \eqref{eqn:power_parabolic} with
additional terms
\begin{equation}
\begin{split}
\mathcal{P}_u(v,\omega) &= m_{u_0} + m_{u_1}|v| + m_{u_2}|v|^2 + m_{u_1}^\prime |\omega|  +m_{u_2}^\prime|\omega|^2\\
\end{split}
\label{eqn:power_unicycle}
\end{equation} 
We note that in this power model we assume no direct coupling effects between linear and angular speeds on power consumption.
\begin{remark}
	\label{remark:power_model}
	The power model in \eqref{eqn:power_unicycle} is different from that in
  \eqref{eqn:power_parabolic} and we seek to establish a relation between the
  two. When a robot is moving in a straight line, then $\mathcal{P}_u(v,0) =
  \mathcal{P}(u)$. From \eqref{eqn:unicycle_transformation}
  $||u||=\sqrt{v^2+(\omega\ell)^2}$ so when $\omega \neq 0$, $v$ decreases for
  the same $||u||$, meaning $\mathcal{P}_u(v,0)\leq\mathcal{P}_u(v,\omega)$. In
  this case we either have $\mathcal{P}(u) > \mathcal{P}_u(v,\omega)$ meaning
  that turning has no contribution to power consumption, or $\mathcal{P}(u) \leq
  \mathcal{P}_u(v,\omega)$ which means turning has significant contribution in
  power consumption. We are interested in the latter case and we consider that
	\begin{equation}
	\mathcal{P}_u(v,\omega) \leq \mathcal{P}(u) + \Delta_\omega
	\end{equation}
	where $\Delta_\omega\in\mathbb{R}$ is the change in power due to rotation.
\end{remark}
Using a power model that only accounts for linear speed is akin to having the power
consumption due to $\omega$ as a disturbance power $\Delta_\omega$, which may
lead to instability as discussed in Remark~\ref{remark:stability}. A solution
to this issue is choosing a fairly slow return speed value $v_r$ so as to
increase the stability margin $\Delta_{p}^{*}$ in \eqref{eqn:stability_margin},
however this may impose undesirable limitations on performance.

Since the path we are using is essentially a piecewise linear path with
waypoints $w_i, i=1,\dots,n_w$, robot spinning will be mostly near these
waypoints when it is changing its direction of motion. The idea behind our
proposed adaptation is to add a certain amount of power $\tilde{\delta}$ to
$\mathcal{P}(v_r)$ in \eqref{eqn:mod_roots} near the path's waypoints so that
roots of \eqref{eqn:mod_roots} always exist. In other words, if we define
$\tilde{\mathcal{P}}(v_r)=\mathcal{P}(v_r)+\tilde{\delta}$, we ensure that roots
for the following equation always exist
\begin{equation}
  ||u|| = \frac{\bar{\mathcal{P}}(u)}{\tilde{\mathcal{P}}(v_r)}v_r = \frac{\mathcal{P}(u)+\Delta_\omega}{\mathcal{P}(v_r)+\tilde{\delta}}v_r
\end{equation}

We note that choosing $\tilde{\delta} > \Delta_\omega$ has the effect of slowing
down the robot (since $\bar{\mathcal{P}}(u) - \tilde{\mathcal{P}}(u)<0$, we have
a similar effect of having a negative disturbance power $\Delta_p$ in
\eqref{eqn:mod_roots}, which slows down the robot as discussed in
Remark~\ref{remark:stability}). Therefore, choosing a constant value for
$\tilde{\delta}$ such that $\tilde{\delta} > \Delta_\omega$ is equivalent to
choosing a lower value of $v_r$.

Instead of using a constant value for $\tilde{\delta}$, our approach is to use
double sigmoid functions to make the value of $\tilde{\delta} >0$ only near
waypoints $w_i$ and zero otherwise, meaning that $\tilde{\delta}$ is only
activated near waypoints, which can be described as:
\begin{equation}
\tilde{\delta}(s) = \sum_{i=1}^{n_w-1}P_i\tilde{\sigma}_i(s)
\label{eqn:slowing_down_power}
\end{equation}
where $P_i > 0$ is a conservative estimate of power consumption due to rotation
near waypoint $w_i$ and $\tilde{\sigma}_i(s)$ is defined as
\begin{equation}
\begin{split}
\tilde{\sigma}_i(s) &= \tilde{\sigma}_i^{r} \tilde{\sigma}_i^{f}\\
\tilde{\sigma}_i^{r}(s)&= \frac{1}{1+e^{-\tilde{\beta}(s-(s_i-\tfrac{1}{2}\phi))}}\\
\tilde{\sigma}_i^{f}(s)&= \frac{1}{1+e^{\tilde{\beta}(s-(s_i+\tfrac{1}{2}\phi))}}\\
\phi&=\frac{\tilde{d}}{L}
\end{split}
\end{equation}
where $\tilde{\beta} > 0$, and is $\tilde{d}$ a distance on the path from the
start of slowing down till its end and $L$ being the path length. The expression
\eqref{eqn:slowing_down_power} aims to start activating $\tilde{\delta}(s)$ a
distance $\tfrac{\tilde{d}}{2}$ before waypoint $w_i$ along the path, and end
this activation a distance $\tfrac{\tilde{d}}{2}$ after the waypoint along the
path. Figure~\ref{fig:slowed_paths} illustrates this idea for the path example
from Figure~\ref{fig:smooth_paths}. We update the energy sufficiency candidate
CBF in \eqref{eqn:energy_suff_cbf} to be

\begin{figure}[!htb]
	\centering
	 \includegraphics[trim=80 120 90 150,clip,width=\columnwidth]{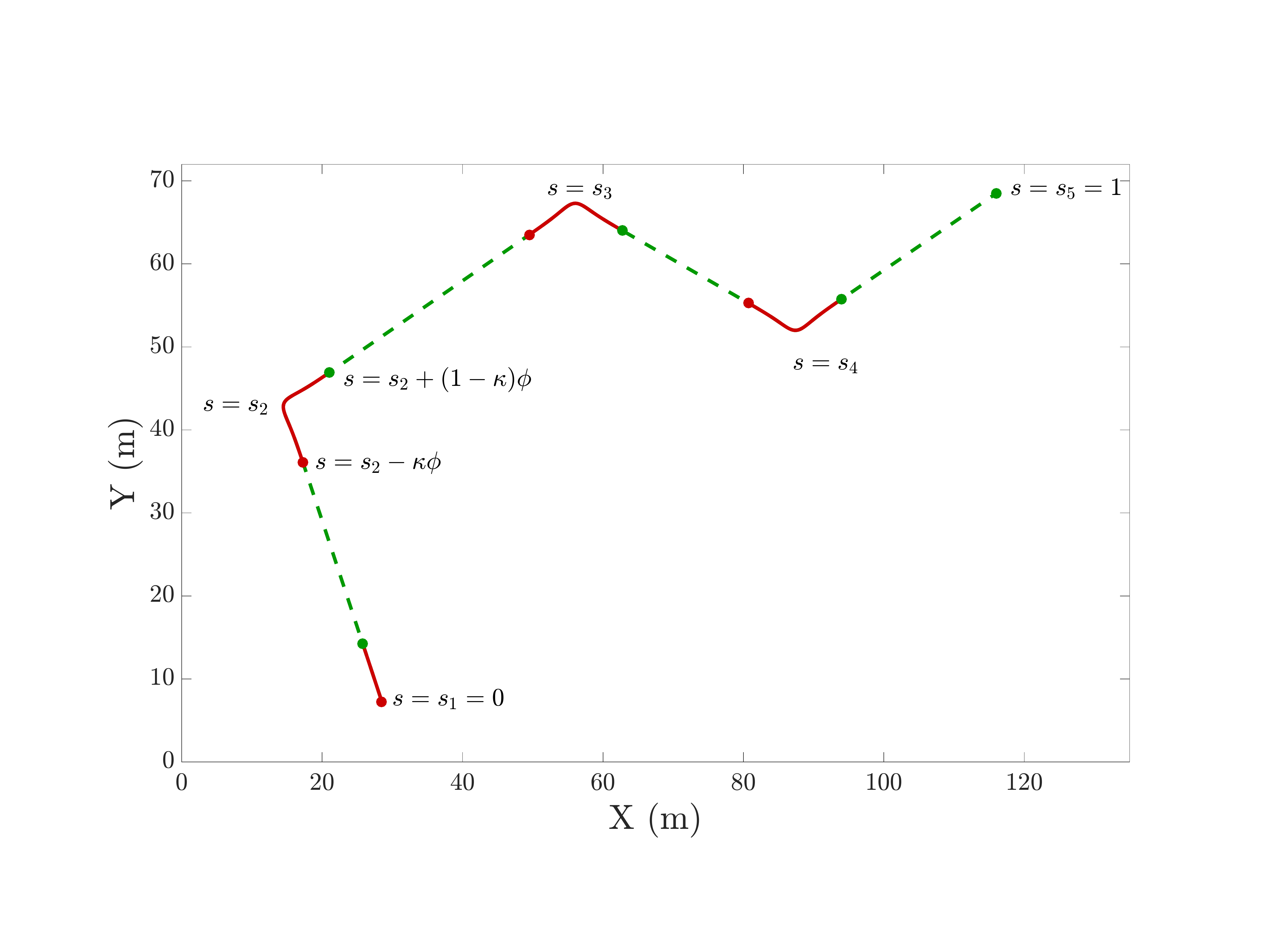}
	\caption{An example showing activation regions of $\tilde{\delta}(s)$. The path is similar to that illustrated in Figure~\ref{fig:smooth_paths}. The red segments are segments where $\tilde{\delta}(s)$ is activated. The red dots indicate the points where $s = s_i-\tfrac{1}{2}\phi$ and the green ones are points where $s = s_i+\tfrac{1}{2}\phi$. In this example we choose $\tilde{d}=15$m.}
	\label{fig:slowed_paths}
\end{figure}


\begin{equation}
h_e = E_{nom} - E - \tfrac{\mathcal{P}(v_r)}{v_r}(L(1-s) - \delta_m) - \int_{s}^{1}\tilde{\delta}(\tau)d\tau
\label{eqn:he_mod}
\end{equation}
and applying Leibniz rule to the last term the constraint associated with this
candidate CBF is
\begin{equation}
-\mathcal{P}(u)-\Delta_\omega+\tfrac{\bar{\mathcal{P}}(v_r)}{v_r}(L\eta - \dot{L}(1-s)) + \tilde{\delta}(s)\eta\geq -\gamma_e h_e
\label{eqn:mod_he_const}
\end{equation}
We note that the integrand in \eqref{eqn:he_mod} can be carried out numerically.
Before showing that \eqref{eqn:he_mod} is a CBF, we need to choose appropriate
values for $P_i$ in \eqref{eqn:slowing_down_power} and $\tilde{d}$, both calling
for an estimate for a bound on rotation speed of a robot near a waypoint.
\begin{lemma}
	\label{lemma:max_omega}
	For a unicycle type robot with kinematics described in
  \eqref{eqn:unicycle_kinematics}, applying the transformation
  \eqref{eqn:unicycle_transformation} and \eqref{eqn:twoway} to follow a single
  integrator control input of a point moving with a speed $v_r$ along the path,
  the rotation speed at a waypoint $w_i$ is
	\begin{equation}
	\omega_i \leq \frac{v_r}{\ell}\sin\psi_i
	\end{equation}
	where $\psi$ is defined in \eqref{eqn:psi}.
\end{lemma}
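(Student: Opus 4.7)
My plan is to read off $\omega$ directly from the transformation \eqref{eqn:unicycle_transformation} and express it geometrically as the sine of a misalignment angle, then bound that misalignment angle by $\psi_i$ at the waypoint.

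First, from \eqref{eqn:unicycle_transformation} I would write
\begin{equation*}
\omega = \tfrac{1}{\ell}\bigl(-u_1\sin\theta + u_2\cos\theta\bigr) = \tfrac{\|u\|}{\ell}\,\hat{u}^{T}R_{\pi/2}\hat{r},
\end{equation*}
where $\hat{r}=(\cos\theta,\sin\theta)^{T}$ is the robot's heading, $\hat{u}=u/\|u\|$ is the direction of the single integrator input, and $R_{\pi/2}$ is the rotation by $\pi/2$. Hence $\omega = (\|u\|/\ell)\sin\alpha$, where $\alpha$ denotes the signed angle from $\hat{r}$ to $\hat{u}$.

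Next, I would use Lemma~\ref{lemma:base} to obtain $\|u\|\leq v_r$ in the tracked regime (the root analysis gives $\|u\|\in\{v_r,m_0/(m_2v_r)\}$, and the reference point's speed along the path is $v_r$, which upper-bounds the single integrator input coming out of the tracking constraint \eqref{eqn:tracking_constraint}). The remaining, and main, task is to bound $|\sin\alpha|$ by $\sin\psi_i$ at/near waypoint $w_i$.

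For this geometric step I would appeal to equation \eqref{eqn:simple_sig_diff}, which gives $\partial x_r/\partial s = L\sum_i\sigma_i(s)\hat{e}_i$, together with \eqref{eqn:trans_sigs} that says the two active sigmoids sum to one in the transition region around $s=s_i$. Consequently, in a neighbourhood of $s_i$ the tangent direction $\hat{u}$ is a convex combination of $\hat{e}_{i-1}$ and $\hat{e}_i$, so its direction lies on the shorter arc between these two unit vectors. Meanwhile, the robot's heading $\hat{r}$ has been aligned with $\hat{e}_{i-1}$ along the previous straight segment (the single integrator input was along $\hat{e}_{i-1}$, so by construction of \eqref{eqn:unicycle_transformation} the alignment error $\alpha$ was zero). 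Therefore the maximum value of $|\alpha|$ attained during the passage through the transition is exactly the exterior angle $\psi_i$ defined in \eqref{eqn:psi}. Substituting this and $\|u\|\leq v_r$ into $\omega=(\|u\|/\ell)\sin\alpha$ yields $\omega_i\leq (v_r/\ell)\sin\psi_i$.

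The hard part will be the last geometric claim: that the worst-case misalignment inside the transition region equals $\psi_i$. The delicacy is that while $\hat{u}$ rotates continuously from $\hat{e}_{i-1}$ to $\hat{e}_i$ (controlled by $\beta$), $\hat{r}$ is simultaneously driven by the very $\omega$ we are trying to bound, so one must argue that $\hat{r}$ cannot lead $\hat{u}$ — it can only lag. Because $\hat{r}$ starts aligned with $\hat{e}_{i-1}$ and responds to $\hat{u}$ through \eqref{eqn:unicycle_transformation}, its angular position at any time is a causal, lagged version of $\hat{u}$'s, so the instantaneous gap $|\alpha|$ is bounded by the total swept angle $\psi_i$, completing the proof.
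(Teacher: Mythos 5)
Your opening reduction is the same as the paper's: from \eqref{eqn:unicycle_transformation}, $\omega=\tfrac{\|u\|}{\ell}\sin\alpha$ with $\alpha$ the heading--input misalignment, and the question becomes bounding $\sin\alpha$. (The detour through Lemma~\ref{lemma:base} to get $\|u\|\leq v_r$ is unnecessary --- the lemma's hypothesis already fixes the input to that of a point moving at speed $v_r$ --- but it is harmless.) However, there are two gaps in the remaining step. First, and most concretely, your argument never uses \eqref{eqn:twoway}, and it silently assumes $\psi_i\leq\tfrac{\pi}{2}$: from $|\alpha|\leq\psi_i$ you conclude $\sin\alpha\leq\sin\psi_i$, which requires monotonicity of $\sin$ on $[0,\psi_i]$. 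Since $\psi_i=\cos^{-1}(\cdot)\in[0,\pi]$, sharp corners with $\psi_i>\tfrac{\pi}{2}$ are possible, and there a misalignment sweeping up to $\psi_i$ passes through $\tfrac{\pi}{2}$, where $\sin\alpha=1>\sin\psi_i$; your stated bound would then fail for forward motion through the corner. The paper closes exactly this case by invoking \eqref{eqn:twoway}: for $\tfrac{\pi}{2}<\psi<\pi$ the transformation gives $v<0$, the robot backs up, $\omega^\prime=\tfrac{v_r}{\ell}\sin((\psi-\pi)-\theta)$, and the effective misalignment $|\psi-\pi|<\tfrac{\pi}{2}$ with $\sin(\pi-\psi)=\sin\psi$ restores the claimed bound. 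You must add this case (and the sign-flip case $\psi<0$) for the lemma as stated.

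Second, your central claim that the heading ``cannot lead, only lag'' the rotating tangent is asserted via a causality heuristic rather than proved; you yourself flag it as the hard part. The paper avoids the moving-target formulation altogether: it freezes the input direction at the new segment direction (robot at $w_i$, $\theta=0$, $u=v_r[\cos\psi\;\;\sin\psi]^T$), so $\dot{\theta}=\tfrac{v_r}{\ell}\sin(\psi-\theta)$ with constant $\psi$, and a Lyapunov/LaSalle argument on $V=\alpha^2$ shows $\alpha=\psi-\theta$ decreases monotonically from $\psi$ to $0$; hence $\omega$ is maximal at the initial instant, giving $\omega\leq\tfrac{v_r}{\ell}\sin\psi_i$ directly. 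Your version can be repaired for $\psi_i\leq\tfrac{\pi}{2}$ by a comparison argument on $\dot{\alpha}=\dot{\phi}_u-\tfrac{\|u\|}{\ell}\sin\alpha$ (at $\alpha=0$ one has $\dot{\alpha}=\dot{\phi}_u\geq 0$, so $\alpha$ stays in $[0,\phi_u(t)]\subseteq[0,\psi_i]$), but as written the lag claim is a sketch, and the $\psi_i>\tfrac{\pi}{2}$ case is genuinely missing.
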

\begin{proof}
	We start by considering $0\leq\psi \leq \tfrac{\pi}{2}$. Without loss of
  generality, suppose there is a robot at origin with $\theta = 0$ (aligned with
  x-axis), and $u = v_r\begin{bmatrix} \cos\psi& \sin\psi
	\end{bmatrix}^T$.  From \eqref{eqn:unicycle_transformation} we have
	\begin{equation}
	\omega = \frac{v_r}{\ell}\left(\cos\theta\sin\psi-\sin\theta\cos\psi\right)= \frac{v_r}{\ell}\sin(\psi-\theta)
	\label{eqn:omegaa}
	\end{equation}
	Consider the Lyapunov function $V=\alpha^2$ where $\alpha = \psi-\theta$, then
  $\dot{V} = -2\alpha\dot{\theta}=-2\tfrac{v_r}{\ell}\alpha\sin\alpha \leq 0$
  and since $\dot{V}=0$ only at $\alpha=0$ then $\alpha$ converges to $\alpha=0$
  by virtue of Lasalle's invariance principle, meaning that $\alpha$, and
  hence $\omega$, monotonically decrease for $\psi \leq \tfrac{\pi}{2}$. We are
  interested in finding the maximum value of $\omega$, so by differentiating
  \eqref{eqn:omegaa}
	\begin{equation}
	\frac{d\omega}{d\theta} = -\frac{v_r}{\ell}\cos(\psi-\theta) = 0 \Rightarrow \theta^* = \psi-\frac{\pi}{2}.
	\end{equation}
	Since we are considering $\psi\leq\tfrac{\pi}{2}$, it follows that $\theta^* \leq 0$.
  However, since we consider that the robot starts at $\theta = 0$ and that $\omega$
  monotonically decreases, then we
  can take $\theta^*=0$, meaning $\omega$ is maximum at $\theta=0$. Thus
	\begin{equation}
	\omega\leq\omega^*=\tfrac{v_r}{\ell}\sin\psi
	\label{eqn:semi_omega}
	\end{equation}
  If the robot is at waypoint $w_i$ pointed to the direction of vector $\Delta
  w_{i-1}^{i} = w_i-w_{i-1}$, there is always a rotation of axes from the global
  axes to new ones where $\theta = 0$ and translation of axes to place $w_i$ at
  the origin, and we reach the same result in \eqref{eqn:semi_omega}. The lemma
  follows by choosing $\psi = \psi_i$. We note that the same result holds for
  $-\tfrac{\pi}{2}\leq\psi<0$, but $\omega$ changes sign by virtue of
  \eqref{eqn:twoway}. For $ \tfrac{\pi}{2} < \psi < \pi$, from
  \eqref{eqn:unicycle_transformation} $v < 0$ and therefore $\omega^\prime =
  -\tfrac{v_r}{\ell}\sin(\psi-\theta)=\tfrac{v_r}{\ell}\sin((\psi-\pi)-\theta)$.
  Using the same procedure but for angle $(\psi-\pi) < 0$ we get the same
  result.
\end{proof}

We can use the upper bound estimate of rotation speed near a waypoint $w_i$ to
estimate an upper bound for the power consumed during a rotation $\Delta_\omega$
\begin{equation}
\Delta_\omega = \mathcal{P}_u(0,\tfrac{v_r}{\ell}\sin\psi)
\end{equation} 
In the following we estimate the activation distance $\tilde{d}$ near a waypoint $w_i$.

\begin{proposition}
	For a robot with model \eqref{eqn:unicycle_kinematics} and applying
  \eqref{eqn:unicycle_transformation} and \eqref{eqn:twoway} to follow a single
  integrator control input for a point moving with speed $v_r$ along the path,
  the distance needed till attenuation of angular speed, i.e.
  $\omega\leq\tfrac{v_r}{\ell}\epsilon_\omega$ with $\epsilon_\omega$ being an
  arbitrarily small number, is
	\begin{equation}
	d_a \leq \ell \frac{\pi}{2}\log\frac{\psi_i}{\epsilon_\omega}
	\end{equation}
\end{proposition}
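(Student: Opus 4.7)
The plan is to reduce the analysis to a scalar ODE in the angular error $\alpha = \psi_i - \theta$, apply Jordan's inequality to obtain an exponential decay estimate, and then convert the resulting time bound into a distance bound using the robot's linear speed along the path.

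First, I would reuse the intermediate computation from the proof of Lemma~\ref{lemma:max_omega}: after the axis alignment described there (WLOG $\theta(0) = 0$ and $0 \leq \psi_i \leq \pi/2$), equation \eqref{eqn:omegaa} gives $\omega = (v_r/\ell)\sin(\psi_i - \theta)$, so that $\alpha = \psi_i - \theta$ obeys $\dot{\alpha} = -(v_r/\ell)\sin\alpha$ with $\alpha(0) = \psi_i$. Since $\alpha$ monotonically decreases and remains in $[0, \pi/2]$, I can apply Jordan's inequality $\sin\alpha \geq (2/\pi)\alpha$ on this interval, obtaining $\dot{\alpha} \leq -(2 v_r)/(\pi\ell)\,\alpha$. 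The comparison principle then yields $\alpha(t) \leq \psi_i\,\exp\bigl(-2 v_r t/(\pi\ell)\bigr)$. The cases $\psi_i < 0$ and $\pi/2 < \psi_i < \pi$ are handled exactly as in Lemma~\ref{lemma:max_omega}, via the backward-motion convention \eqref{eqn:twoway} and a further axis reduction.

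Next, I would observe that $\omega = (v_r/\ell)\sin\alpha \leq (v_r/\ell)\alpha$, so the attenuation condition $\omega \leq (v_r/\ell)\epsilon_\omega$ is guaranteed once $\alpha(t) \leq \epsilon_\omega$. Inverting the exponential bound gives a sufficient time $t_a \leq (\pi\ell)/(2 v_r)\,\log(\psi_i/\epsilon_\omega)$. Finally, the linear speed of the robot under the transformation \eqref{eqn:unicycle_transformation} is $v = v_r\cos\alpha \leq v_r$, so the distance traveled along the path during this time is $d_a = \int_0^{t_a} v\,dt \leq v_r\,t_a = (\pi\ell/2)\log(\psi_i/\epsilon_\omega)$, which is the claimed bound.

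The main obstacle is the tightness of the linear comparison: the factor $\pi/2$ arises directly from Jordan's inequality, which is tight near $\alpha = 0$ (where most of the transient occurs) but loose near $\alpha = \pi/2$, so no simpler exponential comparison can remove it. A secondary subtlety is that the bound $v \leq v_r$ hides the fact that the robot actually slows while rotating; an integrand-level estimate $v = v_r\cos\alpha$ could sharpen the result, but would not affect the asymptotic form of the bound and is unnecessary since we only need an upper bound on $d_a$ for the activation width of $\tilde{\delta}(s)$ in \eqref{eqn:slowing_down_power}.
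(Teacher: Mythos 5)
Your proof is correct and follows essentially the same route as the paper: the paper's choice of $k_3=\tfrac{4v_r}{\ell\pi}$ by intersecting the parabola with $\tfrac{2v_r}{\ell}\alpha\sin\alpha$ at $\alpha=\tfrac{\pi}{2}$ is exactly Jordan's inequality, so both arguments yield the decay rate $\tfrac{2v_r}{\pi\ell}$, the same time bound, and the same conversion to distance via $v_r$. Your handling of the final steps is marginally more careful (using $\sin\alpha\leq\alpha$ instead of the paper's approximation $\sin\epsilon_\omega\approx\epsilon_\omega$, and justifying $\int v\,dt\leq v_r t_a$ explicitly), but the substance is identical.
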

\begin{proof}
	We use the candidate Lyapunov function $V = \alpha^2 = (\psi_i-\theta)^2$, so
  $\dot{V} = 2\tfrac{v_r}{\ell}\alpha\sin\alpha$. One result of
  \eqref{eqn:twoway} is that $\alpha\in[-\tfrac{\pi}{2},\tfrac{\pi}{2}]$ as
  discussed in proof of Lemma~\ref{lemma:max_omega}. We can prove exponential
  stability for the candidate Lyapunov function if we can find $k_1,k_2,k_3 > 0$
  such that \citep{khalil2002nonlinear}
	\begin{equation}
	\begin{split}
	&k_1\alpha^2 \leq V \leq k_2\alpha_2\\
	&\dot{V}\leq -k_3\alpha^2
	\end{split}
	\end{equation}
	Since $V=\alpha^2$, then $k_1=k_2=1$. We can estimate $k_3$ by letting the
  parabola $f(\alpha)=k_3\alpha^2$ and
  $g(\alpha)=\tfrac{2v_r}{\ell}\alpha\sin\alpha$ intersect at $\alpha =
  \tfrac{\pi}{2}$, which gives $k_3 = \tfrac{4v_r}{\ell\pi}$. By virtue of $V$
  being exponentially stable on $\alpha\in[-\tfrac{\pi}{2},\tfrac{\pi}{2}]$,
  then
	\begin{equation}
	\alpha\leq \psi_i e^{-\tfrac{k_3}{2}t}
	\end{equation}
	then at time $\tilde{t}$ the right hand side of the last inequality is equal
  to $\epsilon_\omega$
	\begin{equation}
	\psi_i e^{-\tfrac{k_3}{2}\tilde{t}}=\epsilon_\omega\Rightarrow \tilde{t}=\tfrac{2}{k_3}\log\tfrac{\psi_i}{\epsilon_\omega}
	\end{equation}
	and we note that $\alpha<\epsilon_\omega$ at $t>\tilde{t}$. The attenuation
  distance then is
	\begin{equation}
	d_a\leq\tilde{d}_a=\tilde{t}v_r=\ell \frac{\pi}{2}\log\frac{\psi_i}{\epsilon_\omega}
	\label{eqn:attenuation_dist}
	\end{equation}
	We note that at $t=\tilde{t}$ the angular speed will be 
	\begin{equation}
	\omega \leq \frac{v_r}{\ell}\sin\epsilon_\omega\approx\frac{v_r}{\ell}\epsilon_\omega
	\end{equation}
\end{proof}

\begin{theorem}
	For a robot with unicycle kinematics, applying
  \eqref{eqn:unicycle_transformation} and \eqref{eqn:twoway}, and provided that
  $\tilde{\delta}$ in \eqref{eqn:slowing_down_power} is formed such that
	\begin{equation}
	\tilde{\delta} \geq\frac{L}{v_r}\left(\sum_{i=1}^{n_w-1}\mathcal{P}_u(0,\tfrac{v_r}{\ell}\sin\psi_i)\tilde{\sigma}(s)+\Delta_\epsilon\right)
	\end{equation}
	and $\tilde{d}=2\max\{\tilde{d}_a,d\}$, where $d$ is the tracking distance
  from \eqref{eqn:tracking_cbf} and $\Delta_\epsilon$ is robot's power
  consumption when $\omega=\epsilon_\omega$, and if $v_{r}^{*} =
  \sqrt{\frac{m_0}{m_2}}\leq u_{\text{max}} $, with $u_{max}>0$ being maximum robot speed, then $h_e$ in \eqref{eqn:he_mod}
  is a ZCBF. Moreover, if \eqref{eqn:mod_he_const} is applied in
  \eqref{eqn:QP_suff} instead of \eqref{eqn:suff_constraint}, energy sufficiency
  is guaranteed.
\end{theorem}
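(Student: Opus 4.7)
The plan is to adapt the proof strategy of Lemma~\ref{lemma:base} to the unicycle case, using $\tilde{\delta}(s)$ to absorb the rotation-induced power disturbance $\Delta_\omega$ exactly in the path regions where it appears (near waypoints), and using the choice $\tilde{d} \geq 2d$ to guarantee that the tracking CBF $h_d$ is also not violated during rotation.

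First I would note that, since $\eta \in \mathbb{R}$ is unconstrained in sign and magnitude, the constraint \eqref{eqn:mod_he_const} can always be satisfied by choosing $\eta$ sufficiently large; hence showing that $h_e$ in \eqref{eqn:he_mod} is a ZCBF reduces, as in Lemma~\ref{lemma:base}, to showing that the equality versions of \eqref{eqn:mod_he_const} and \eqref{eqn:tracking_constraint} admit a feasible pair $(\eta,u)$ on the boundary $h_e \approx 0$, $h_d \approx 0$. Solving the equality version of \eqref{eqn:mod_he_const} for $\eta$ and substituting into the equality version of \eqref{eqn:tracking_constraint} (using $\|\partial x_r/\partial s\|\leq L$ from Lemma~\ref{lemma:diff_sum} and \eqref{eqn:simple_sig_diff}) yields a scalar root-finding problem analogous to \eqref{eqn:abs_speed}, but with the modified right-hand side
\begin{equation*}
\|u\| = \frac{\mathcal{P}(u)+\Delta_\omega}{\mathcal{P}(v_r)+\tilde{\delta}(s)}\,v_r.
\end{equation*}

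Next I would invoke Remark~\ref{remark:stability}: real roots of this equation exist precisely when the discriminant $\mathcal{D}$ remains non-negative, which is ensured by $\tilde{\delta}(s) \geq \Delta_\omega$ pointwise. The assumed lower bound on $\tilde{\delta}$, together with Lemma~\ref{lemma:max_omega} giving $\omega_i \leq (v_r/\ell)\sin\psi_i$, provides exactly this pointwise domination in every activation band around a waypoint $w_i$; outside those bands we have $\Delta_\omega \approx 0$ (the robot has attenuated rotation to within $\epsilon_\omega$, by the previous proposition), so the bound holds trivially with $\tilde{\delta}\approx 0$. Together with the hypothesis $v_r^\ast = \sqrt{m_0/m_2}\leq u_{\max}$, this guarantees an admissible $u$ with $\|u\|\leq u_{\max}$ satisfying both equality constraints, making $h_e$ a ZCBF.

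The main obstacle, and what I would address with the choice $\tilde{d}=2\max\{\tilde{d}_a,d\}$, is a transient compatibility issue: the rotation CBF analysis in Lemma~\ref{lemma:max_omega} bounds $\omega$ only once the robot is oriented along the next path segment, but during the reorientation itself $h_d$ could temporarily approach zero. Choosing the activation half-width at least $d$ on each side of $w_i$ ensures that the robot's position has time to close within the tracking tolerance before the slow-down region ends; choosing it at least $\tilde{d}_a$ ensures $\omega$ has decayed to $\epsilon_\omega$, so the residual power mismatch outside the band is absorbed by the $\Delta_\epsilon$ margin built into the bound on $\tilde{\delta}$. These two together preserve $h_d \geq 0$ while $h_e$ is maintained.

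Finally, having established that $h_e$ in \eqref{eqn:he_mod} is a ZCBF and is compatible with $h_d$ and $h_b$, the second claim follows by the same chain used in Theorem~\ref{thm:suff}: the strict positivity of $\bar{\mathcal{P}}$ makes $E(t)$ strictly increasing, and $h_e \geq 0$ forces $E = E_{nom}$ to occur only at $L(1-s) = \delta_m$, which by Proposition~\ref{proposition:length_error} lies inside the charging region $\|x-x_c\|\leq \delta$. Substituting \eqref{eqn:mod_he_const} for \eqref{eqn:suff_constraint} in the QP \eqref{eqn:QP_suff} therefore preserves energy sufficiency for the unicycle kinematics \eqref{eqn:unicycle_kinematics} driven via \eqref{eqn:unicycle_transformation}--\eqref{eqn:twoway}.
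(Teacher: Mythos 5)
Your proposal is correct and follows essentially the same route as the paper's proof: reduce to the boundary root-finding problem obtained from the equalities of \eqref{eqn:mod_he_const} and \eqref{eqn:tracking_constraint} with $\left\lVert\tfrac{\partial x_r}{\partial s}\right\rVert\leq L$, use the hypothesis on $\tilde{\delta}$ together with the discriminant argument of Remark~\ref{remark:stability} to guarantee roots (and hence a feasible $u$ with $v_r^*\leq u_{\max}$), use $\tilde{d}=2\max\{\tilde{d}_a,d\}$ so the activation band covers the rotation given the tracking lag $d$ and the attenuation distance $\tilde{d}_a$, and conclude energy sufficiency exactly as in Theorem~\ref{thm:suff}. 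One small correction: since $\tilde{\delta}$ multiplies $\eta$ rather than $L\eta$ in \eqref{eqn:mod_he_const}, the root equation is $\lVert u\rVert=\tfrac{\mathcal{P}(u)+\Delta_\omega}{\mathcal{P}(v_r)+\tilde{\delta}\,v_r/L}\,v_r$, so the domination actually needed is $\tilde{\delta}\tfrac{v_r}{L}\geq\Delta_\omega$ (and $\geq\Delta_\epsilon$ after attenuation) rather than $\tilde{\delta}\geq\Delta_\omega$ as you wrote --- this is precisely why the theorem's lower bound on $\tilde{\delta}$ carries the factor $L/v_r$, and with that scaling restored your argument goes through without the implicit extra assumption $L\geq v_r$.
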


\begin{proof}
	We start by considering when the robot moves on a straight line and away from
  waypoints, i.e. $s_{i-1}+\tfrac{\phi}{2} < s < s_i-\tfrac{\phi}{2}$, in which
  case the proof is similar to proof of lemma~\ref{lemma:base} since
  $\mathcal{P}(u) = \mathcal{P}_u(v,0)$ as pointed out in
  remark~\ref{remark:power_model}.
	
	We note that when $h_d\approx 0$, the robot is following the reference point $x_r$ along a straight
  line and it starts rotating after $x_r$ passes waypoint $w_i$, i.e. $s\geq
  s_i$, and since $||x-x_r||\approx d$ it means that the robot will start
  rotation a distance $d$ away from $w_i$, but since
  $\tilde{d}=2\max\{\tilde{d}_a,d\}$, $\tilde{\delta}$ will be activated at a
  distance greater than or equal $d$ before $w_i$ along the path, i.e. before
  the robot starts spinning. Also when $s_i <s \leq s_i+\tfrac{\phi}{2}$ and due
  to choice of $\tilde{d}$, $s = s_i+\tfrac{\phi}{2}$ happens at least a
  distance $\tilde{d}_a$ after $w_i$ along the path and at this point $\omega <
  \tfrac{v_r}{\ell}\epsilon_\omega$, i.e. $\tilde{\delta}$ will be deactivated
  after $\omega$ has been attenuated.
	
	When $\tilde{\delta}$ is activated, i.e. $s_{i}-\tfrac{\phi}{2} < s <
  s_i+\tfrac{\phi}{2}$, and considering the critical case where $h_e \approx 0$
  and $h_d\approx 0$, similar to what we did in the proof of
  Lemma~\ref{lemma:base}, we consider the equality of
  \eqref{eqn:tracking_constraint} and \eqref{eqn:mod_he_const}, so from
  \eqref{eqn:mod_he_const} (and noting that $\dot{L}=0$ when the robot is
  moving along the path by virtue of proposition~\ref{proposition:freeze})
	\begin{equation}
	\eta = \frac{\mathcal{P}(u)+\Delta_\omega}{\mathcal{P}(v_r)+\tilde{\delta}\frac{v_r}{L}}\frac{v_r}{L}
	\end{equation}
	and doing the same steps to obtain \eqref{eqn:abs_speed}
	\begin{equation}
	||u||=\frac{\mathcal{P}(u)+\Delta_\omega}{\mathcal{P}(v_r)+\tilde{\delta}\frac{v_r}{L}}v_r
	\label{eqn:mod_abs_speed}
	\end{equation}
	which is a similar root finding problem to \eqref{eqn:abs_speed}. Since
  $\tilde{\delta}\geq
  \mathcal{P}_u(0,\tfrac{v_r}{\ell}\sin\psi_i)\frac{L}{v_r}$, then
  $\tilde{\delta}\tfrac{v_r}{L}\geq\Delta_\omega$, which ensures roots for
  \eqref{eqn:mod_abs_speed} exist and that $||u||$ will converge to a slower
  speed than $v_r$ as discussed in Remark~\ref{remark:stability} (since having
  $\tilde{\delta}\tfrac{v_r}{L}-\Delta_\omega < 0$ has a similar effect as
  having $\Delta_p<0$ in Remark~\ref{remark:stability}). Moreover, when
  $\omega<\tfrac{v_r}{\ell}\epsilon_\omega$ , \eqref{eqn:mod_abs_speed} will
  become
	\begin{equation}
	||u||=\frac{\mathcal{P}(u)+\Delta_\epsilon}{\mathcal{P}(v_r)+\tilde{\delta}\frac{v_r}{L}}v_r
	\label{eqn:mod_abs_speed_2}
	\end{equation}
	which is guaranteed to have roots since
  $\tilde{\delta}\tfrac{v_r}{L}\geq\Delta_\epsilon$. Thus provided that
  $\sqrt{\frac{m_0}{m_2}}\leq u_{\text{max}}$ there is always a value of $u$
  that satisfies \eqref{eqn:mod_he_const}. Since \eqref{eqn:tracking_cbf} and
  \eqref{eqn:he_mod} are ZCBFs and if $\delta$ in \eqref{eqn:he_mod} is equal to
  $\delta_m$ from \eqref{eqn:delta_m} then the robot's energy satisfies $E(t) =
  E_{nom}$ only when $||x-x_c|| < \delta$ as discussed in the proof of
  Theorem~\ref{thm:suff}, thus ensuring energy sufficiency.
\end{proof}

We can apply the same quadratic program in \eqref{eqn:QP_suff}, but with
replacing the definition of energy sufficiency CBF, and for that the $A$ and $B$
matrices in \eqref{eqn:QP_suff} will be
\begin{equation}
\begin{split}
\mathbf{A} &= \begin{bmatrix}
\tfrac{\mathcal{P}(v_r)}{v_r}L+\tilde{\delta} &\mathbf{0}_{1\times 2}\\1&\mathbf{0}_{1\times 2}\\
(x-x_r)^T\tfrac{\partial x_r}{\partial s}&-(x-x_r)^T
\end{bmatrix}\\ 
\mathbf{B} &= \begin{bmatrix}
-\gamma_e h_e + \mathcal{P}(u)+\Delta_\omega+\dot{L}(1-s)\\-\gamma_bh_b\\
-\gamma_dh_d
\end{bmatrix}\\
\mathbf{u}_{nom}&=\begin{bmatrix}
0&u_{nom}
\end{bmatrix}
\end{split}
\label{eqn:defs_uni}
\end{equation}
We can follow the same steps as in Theorem~\ref{thm:suff} to show that energy
sufficiency is maintained solving this QP problem over a fixed path, with the same
path freezing idea as in Proposition~\ref{proposition:freeze}.

The treatment thus far concerns a unicycle robot moving around, with a fixed
path back to charging station. A path planner could be used to update the path
in the same manner discussed in Section~\ref{sec:dynamic}. We can use a similar
sequence as in Algorithm~\ref{algo}, but we need to show that the SPC method is
a valid backup for the proposed unicycle adaptation.

\begin{proposition}
  Sequential Path Construction does not violate energy sufficiency for a robot
  described by \eqref{eqn:unicycle_kinematics} when applying \eqref{eqn:QP_suff}
  with transformation \eqref{eqn:unicycle_transformation}, \eqref{eqn:twoway},
  and with $A$ and $B$ matrices described in \eqref{eqn:defs_uni}.
\end{proposition}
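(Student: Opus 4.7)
The plan is to mirror Proposition~\ref{proposition:spc}, adapted to the richer CBF \eqref{eqn:he_mod} and the constraint \eqref{eqn:mod_he_const}. Concretely, I need to show that after an SPC update at time $k\mathcal{T}$ we still have $h_e \geq 0$ in the sense of \eqref{eqn:he_mod}, and that \eqref{eqn:mod_he_const} evaluated at $(u_{nom}, \eta_{nom})$ still holds. Since Algorithm~\ref{algo} invokes SPC only under $\zeta(s) = 0$, the freezing mechanism of Proposition~\ref{proposition:freeze} is not yet active, $w_1$ is still tracking $x$, and both $L$ and $\dot L$ are well-defined across the update.

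First, I would reuse Lemma~\ref{lemma:length_invariance} to obtain $L^{k\mathcal{T}} = L^{(k-1)\mathcal{T}}$, which leaves the term $\tfrac{\mathcal{P}(v_r)}{v_r}(L(1-s)-\delta_m)$ unchanged. Continuity of $\mathbf{u}_{nom}$, and hence of $u_{nom}$, $\eta_{nom}$, $E(t)$, $s(t)$, and $\mathcal{P}(u_{nom})$, is inherited from the single-integrator argument in Proposition~\ref{proposition:spc}, so the only genuinely new quantities to control are $\tilde\delta(s)$, its tail integral $\int_s^1 \tilde\delta(\tau)d\tau$, and the instantaneous rotational penalty $\Delta_\omega$.

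Next, I would invoke the angle-invariance statement of Lemma~\ref{lemma:length_invariance} together with the explicit SPC construction $w_2^{k\mathcal{T}} = \kappa w_1^{k\mathcal{T}} + (1-\kappa)w_3^{k\mathcal{T}}$: the inserted waypoint is collinear with its neighbours, so $\psi_2^{k\mathcal{T}} = 0$, and, after re-indexing, the angles at the remaining waypoints match those of the pre-update path. Since the lower bound on $\tilde\delta$ from the preceding theorem scales each $P_i$ with $\mathcal{P}_u(0, \tfrac{v_r}{\ell}\sin\psi_i)$, the newly inserted waypoint contributes $\sin(0) = 0$ to the rotational budget, and after recomputing the parameter nodes $s_i$ from the preserved total length $L$, the function $\tilde\delta(s)$ and its tail integral coincide with their pre-update counterparts. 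The value $\Delta_\omega$, which reflects the robot's current direction of travel well away from the inserted waypoint, is likewise unaffected by the insertion.

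The hard part will be the bookkeeping at the inserted waypoint: the baseline term $\mathcal{P}_u(0,0) = m_{u_0}$ is technically nonzero, so a naive sum $\sum P_i \tilde\sigma_i$ over the enlarged waypoint set could increase $\tilde\delta$ by a small amount. I would resolve this either by absorbing the excess into the $\Delta_\epsilon$ slack already present in the lower bound for $\tilde\delta$, or by reading $P_i$ strictly as a rotational penalty that vanishes at a collinear insertion. Once this is cleared up, both the CBF positivity and its derivative constraint are identically satisfied before and after the update, and the proposition follows along the same lines as Proposition~\ref{proposition:spc}.
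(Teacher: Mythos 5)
Your proposal follows essentially the same route as the paper's proof: SPC's path-length and path-angle invariance (Lemma~\ref{lemma:length_invariance}) together with continuity of the nominal inputs gives no jumps in $E$, $\mathcal{P}(u)$, $\Delta_\omega$, $L$, or $\tilde{\delta}$, so both conditions in \eqref{eqn:uni_spc_cond} survive the update. Your extra care about $\mathcal{P}_u(0,0)=m_{u_0}>0$ at the collinear insertion addresses a point the paper simply asserts away (it claims the added waypoint contributes zero because $\psi_2=0$), and your resolution -- reading $P_i$ as a purely rotational penalty that vanishes at a collinear waypoint, or absorbing the small excess into the $\Delta_\epsilon$ slack -- is consistent with the paper's intent and arguably tightens its argument.
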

\begin{proof}
	Similar to proposition~\ref{proposition:spc}, provided that there exists a
  value of $u=u_{nom}$ and $\eta=\eta_{nom}$ satisfying following inequalities
  \small
	\begin{subequations}
		\begin{equation}
		h_e = E_{nom} - E - \tfrac{\mathcal{P}(v_r)}{v_r}(L(1-s) - \delta) - \int_{s}^{1}\tilde{\delta}(\tau)d\tau
		\label{eqn:he_mod_}
		\end{equation} 
		\begin{equation}
		-\mathcal{P}(u)-\Delta_\omega+\tfrac{\bar{\mathcal{P}}(v_r)}{v_r}(L\eta_{nom} - \dot{L}(1-s)) + \tilde{\delta}(s)\eta_{nom}\geq -\gamma_e h_e
		\label{eqn:mod_he_const_}
		\end{equation}
		\label{eqn:uni_spc_cond}
	\end{subequations}
	\normalsize
	
	we need to show that \eqref{eqn:uni_spc_cond} is not violated at a path
  update. Similar to proof of Proposition~\ref{proposition:spc}, provided that
  nominal control inputs are continuous and satisfying \eqref{eqn:uni_spc_cond},
  then there are no jumps (i.e. instantaneous changes) for
  $E,\mathcal{P}(u),\Delta_\omega$ and $v_r$. Moreover since SPC is path length
  invariant, $L$ does not change. Also since SPC is path angle invariant from
  Lemma~\ref{lemma:length_invariance}, then the increase in power due to the
  addition of the new waypoint is equal to zero (because $\psi_2$ for the new
  path is equal to zero) and no change occurs for the power consumption along
  the path, therefore $\tilde{\delta}$ does not jump as well, meaning
  \eqref{eqn:uni_spc_cond} is not violated under SPC.
\end{proof}

We can apply Algorithm~\ref{algo} and the same logic in
Theorem~\ref{thm:SPC_suff} to show that energy sufficiency is maintained under
discrete path updates.

\begin{remark}
	The adaptation we are using for the method based on
  single integrator dynamics (in Section~\ref{sec:dynamic}) to unicycle dynamics
  is versatile and can go beyond accounting for excess power consumption near
  waypoints. This is due to the fact that the estimated
  excess power, e.g. \eqref{eqn:slowing_down_power}, is modelled as a summation
  of double sigmoid functions, activated along different segments of the path.
  Moreover, this excess in the estimated power consumption is incorporated in
  the energy sufficiency CBF \eqref{eqn:he_mod} through numerical integration,
  making it easier to account for different types of ``resistance'' along the
  path. For example, effects like surface inclinations, variability in friction
  and increased processing power, among many others, could be modelled in a
  similar way to \eqref{eqn:slowing_down_power} through identifying ranges of
  the path parameter $s$ corresponding to different segments on the path, each
  associated with a double sigmoid function multiplied by the estimated power
  consumption related to the effect being modelled. This idea allows to adapt
  the methods based on single integrator dynamics to a wide range of scenarios,
  environments and robot types.
\end{remark}

\section{Results \label{sec:results}}
\subsection{Simulation Setup}
We present the simulation results that highlight the ability of our proposed
framework to ensure energy sufficiency during an exploration mission. The
considered experimental scenario allows the robots to perform the exploration
mission while ensuring robot's energy consumption is within the dedicated energy
budget $E_{nom}$.
 
We evaluate the approach using a physics-based
simulator~\citep{pinciroli2012argos}. We use a simulated KheperaIV robot
equipped with a 2D lidar with a field of view of 210 degrees and a 4m range as
the primary perception sensor. The architecture of the autonomy software used in
simulations is shown in Figure~\ref{fig:arc}A. The autonomy software allows the
robot to explore and map the environment. Each robot is equipped with a
volumetric mapping system \citep[Voxblox]{oleynikova2017voxblox} using Truncated
Signed Distance Fields to map the environment. A graph-based exploration planner
\citep[GBPlanner,][]{dang2019graph} uses the mapping system to plan both the
exploration and homing trajectories. We carry out a path shortening procedure as
described in \citep[Algorithm 1]{cimurs2017bezier} to eliminate redundant and
unnecessary points from the original path planner output, making the final path
straighter and shorter. Our proposed framework is implemented as a Buzz~\citep{pinciroli2016buzz}
script that periodically queries the exploration planner for a
path and applies the required control commands to the robot.

\begin{figure*}[!htb]
	\centering
	\includegraphics[scale=0.68]{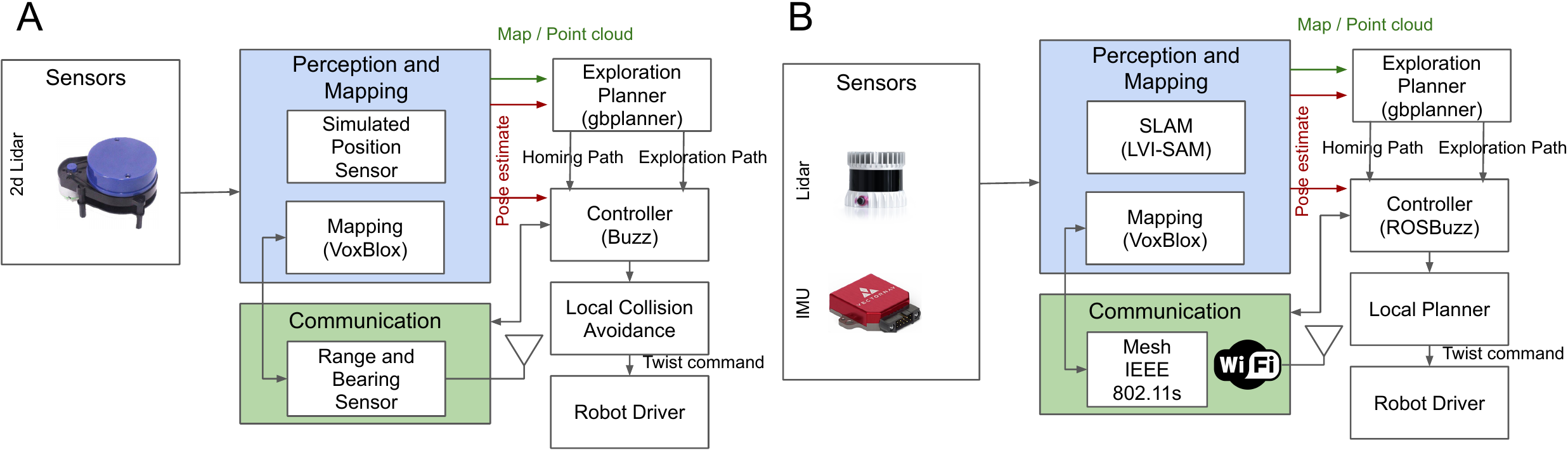}
	\caption{Software Architecture used during the simulation study (A) and on the experimental hardware (B).\label{fig:arc}}
\end{figure*}

We use the maze map benchmarks from \citep{sturtevant2012benchmarks} as a
blueprint for obstacles in the environment, and each map is scaled so that it
fits a square area of $30\times 30$ meters. In each simulation one robot maps
the unexplored portions of the map to maximize its volumetric gain
\citep{dang2019graph}. We run four groups of experiments for three different
maps from the benchmarking dataset~\citep{sturtevant2012benchmarks}, and for
each case we run 50 simulations with randomized configurations to obtain a
statistically valid dataset.
\begin{figure*}
	\centering
	\includegraphics{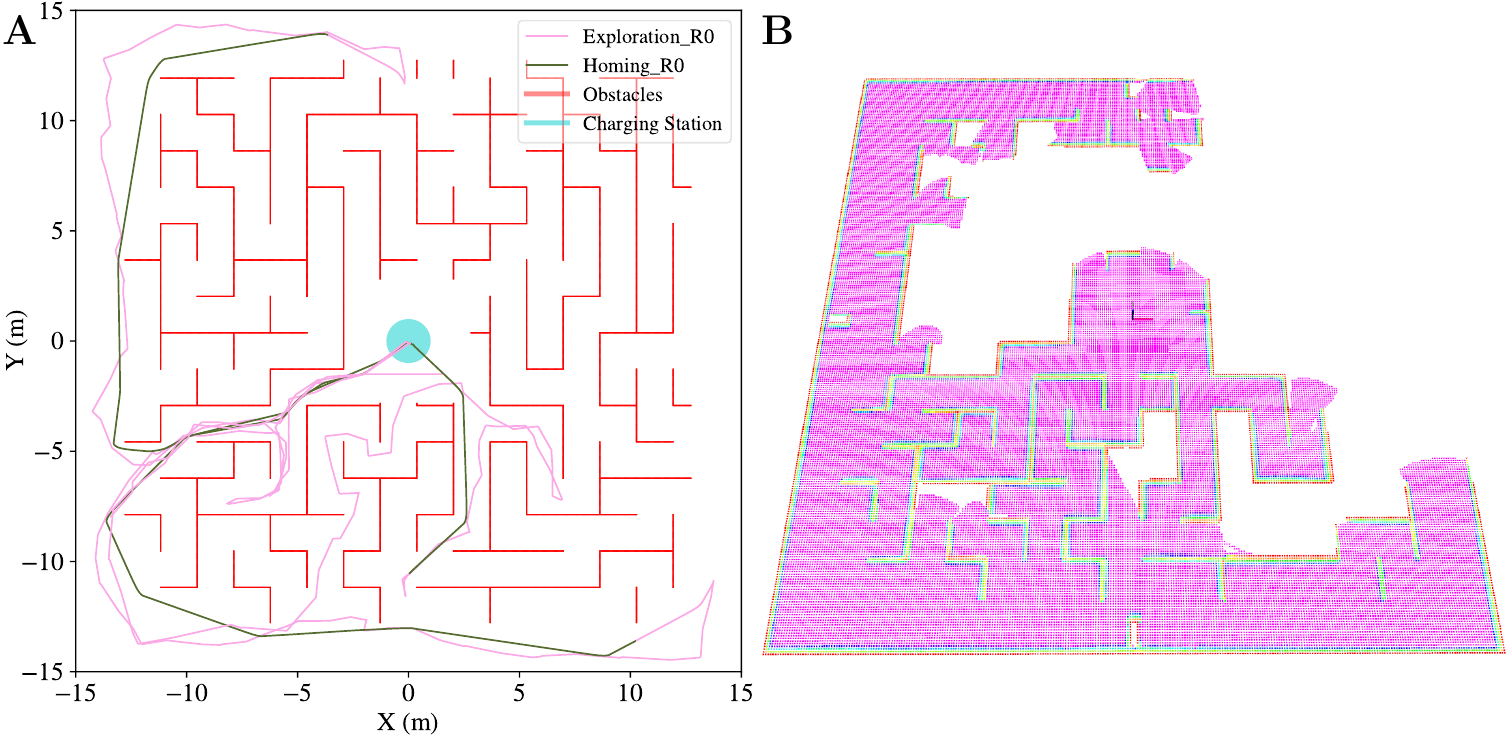}
	\caption{A sample result for the trajectories generated by the robot (A) and the map constructed in the same simulation run (B) for an exploration task in a maze environment (maze-4 from \citep{sturtevant2012benchmarks}), while using our proposed approach to maintain energy sufficiency.
    \label{fig:traj}}
\end{figure*}

We use a polynomial power model to describe power consumption of the robots in
simulation. We derive this model by collecting power consumption readings from a
physical AgileX Scout Mini~\citep{scout} robot at different values of linear
and angular speed, then we fit a surface through these readings to obtain out power
model. Figure~\ref{fig:power} shows the fitted power model, along with the
actual collected power readings from the robot. We interface the single
integrator output $\mathbf{u}^*$ of \eqref{eqn:QP_suff} to the unicycle model of
the robots using the transformation \eqref{eqn:unicycle_transformation} and the
modification \eqref{eqn:twoway}. We use the robot's linear and angular speeds to
estimate the robot's power according to the following polynomial
\begin{equation}
\begin{split}
\mathcal{P}_u(v,\omega) &= 27.8126||v||^2 - 107.7343|\omega|^2\\& + 31.4578||v|| + 179.9095|\omega| + 1.234
\end{split}
\label{eqn:power_model}
\end{equation}
and the power model is depicted in Figure~\ref{fig:power}. We also add to this
model an additional power of $\mathcal{P}_{payload}=20$W to account for payload
power consumption.
\begin{figure}[!h]
	\centering
	\includegraphics[width=1.\columnwidth]{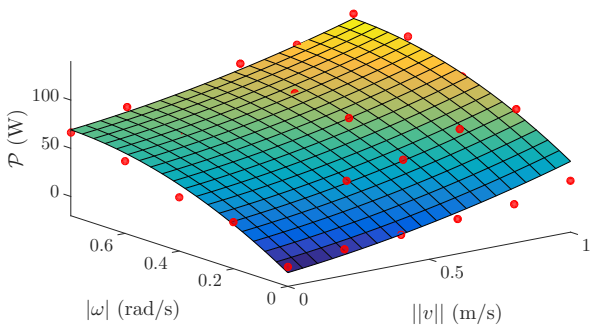}
	\caption{Surface plot of the power model used in simulation. The red dots are the actual measured power values at different values of linear and angular speeds ($v$ and $\omega$) and is fitted by a 3 dimensional surface to minimize the mean least square error between the model and the real data points.}
	\label{fig:power}
\end{figure}

\subsection{Simulation Results}

Figure~\ref{fig:traj} shows an example of the robots' trajectory during
exploration and returning to the charging station for one simulation run in one
of the maze environments (maze-4). Figure~\ref{fig:traj} also shows the map
built by the robot during this simulation run.
As observed in the trajectory plot, any given robot's exploration trajectory is
always accompanied by a homing trajectory to the charging station satisfying the
energy constraints.

For all simulation runs we measure the estimated Total Area Covered (TAC) and
the Energy On Arrival (EOA), which is the amount of energy consumed by the robot
by the time it arrives back to the station, and we use these values as metrics
for performance. The TAC serves as a measure of the mission execution quality,
and the EOA is a measure of the extent the available energy budget has been
used. We run all test cases at two desired values of return speed: a slow speed
of $v_r=0.1$m/s and a faster speed of $v_r=0.5$m/s. We highlight the efficacy of
our approach by comparing the aforementioned metrics to the results of a
baseline method in which a robot returns back on the path when the available
energy reaches a certain fixed threshold percentage of the total nominal energy
(as it is a standard procedure with commercial robots). For the baseline we use
only the tracking CBF \eqref{eqn:tracking_cbf} in a QP problem similar to
\eqref{eqn:QP_suff} and we change the path parameter $s$ according to the
following relation
\begin{equation}
\dot{s} = \begin{cases}
\frac{v_r}{L},\quad \text{if} \quad \tfrac{E}{E_{nom}} > \tau\\
0,\quad \text{otherwise}
\end{cases}
\end{equation}
where $0 < \tau < 1$ is a threshold return energy ratio, and $L$ is the total
path length at the point when the robot starts moving back towards the charging
station. We show the results of our comparison for the aggregated
values of TAC for different simulation scenarios in Figure~\ref{fig:box_area}.
To highlight the relation between TAC and EOA we use red dots in
Figure~\ref{fig:box_area} to indicate TAC values corresponding to simulation
runs during which the energy budget is violated, i.e. EOA is less than zero at
least once, indicating the robot's failure to recharge before its energy budget is
fully consumed. Figure~\ref{fig:box_eoa} and ~\ref{fig:box_eoa2} show histograms
of EOA values distribution for our method as well as baseline at different
values of $\tau$ for $v_r=0.5$m/s and $v_r=0.1$m/s respectively. In all
simulation runs the total energy budget is set to be $12$kJ.

\begin{figure*}[!htb]
	\centering
	\begin{subfigure}[b]{0.9\textwidth}
		\centering
		\hspace*{-2cm}
		\includegraphics[width = \columnwidth]{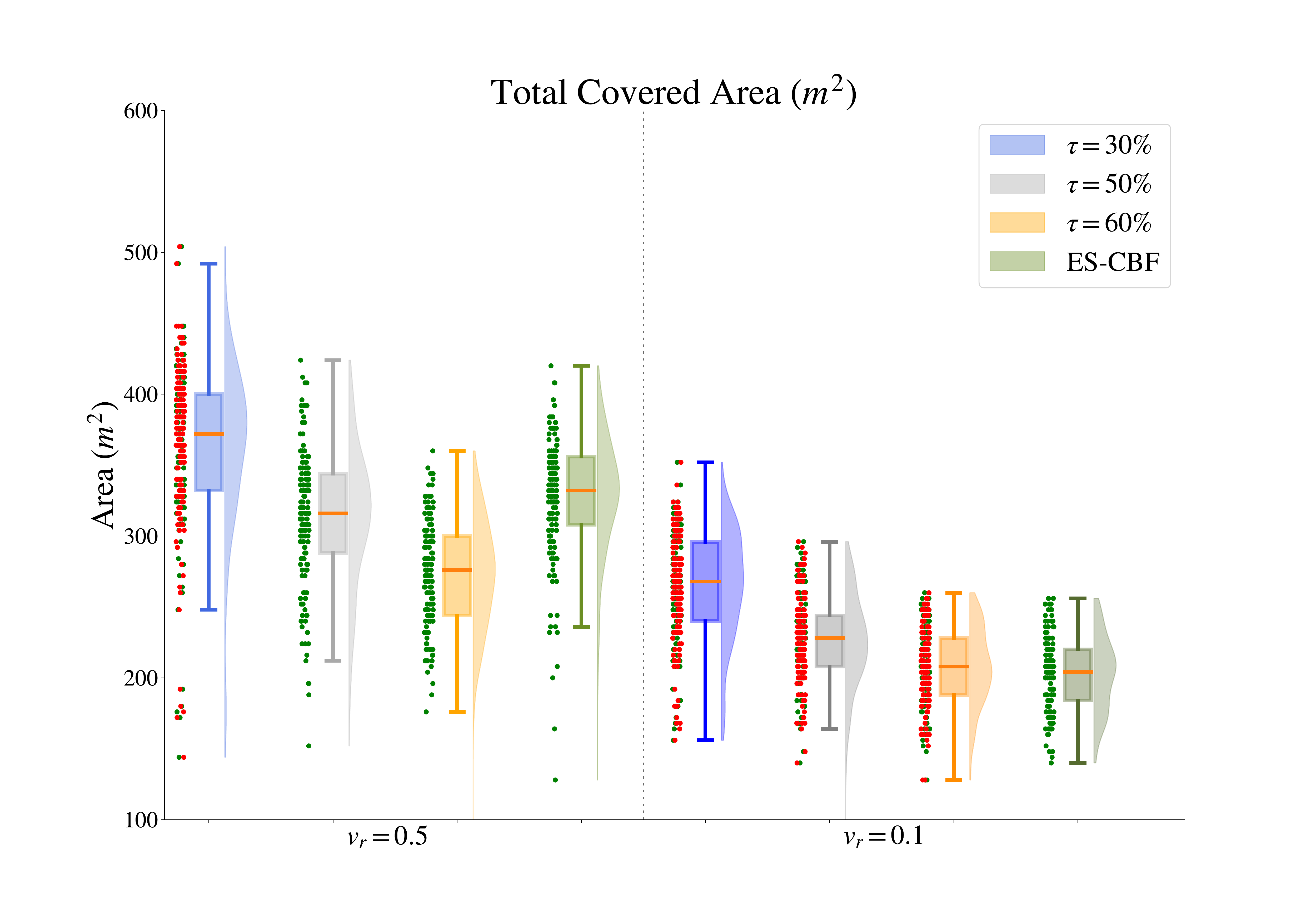}
		\caption{Estimated total area covered}
		\label{fig:box_area}
	\end{subfigure}
	\begin{subfigure}[b]{0.49\textwidth}
		\centering
		\hspace*{-2cm}
		\includegraphics[width = \columnwidth]{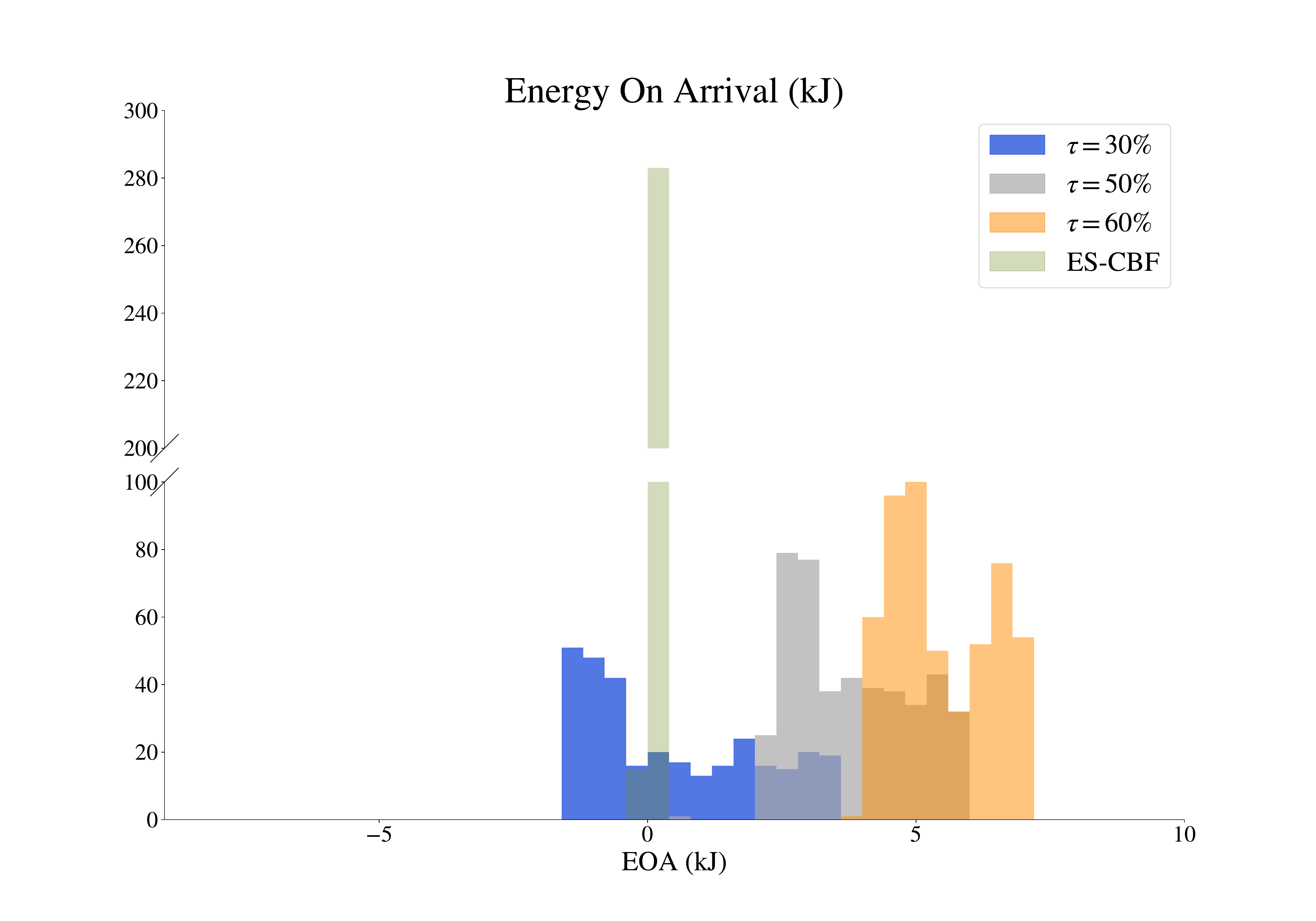}
		\caption{Energy on arrival for simulations with $v_r=0.5$m/s.}
		\label{fig:box_eoa}
	\end{subfigure}
	\begin{subfigure}[b]{0.49\textwidth}
		\centering
		\hspace*{-2cm}
		\includegraphics[width = \columnwidth]{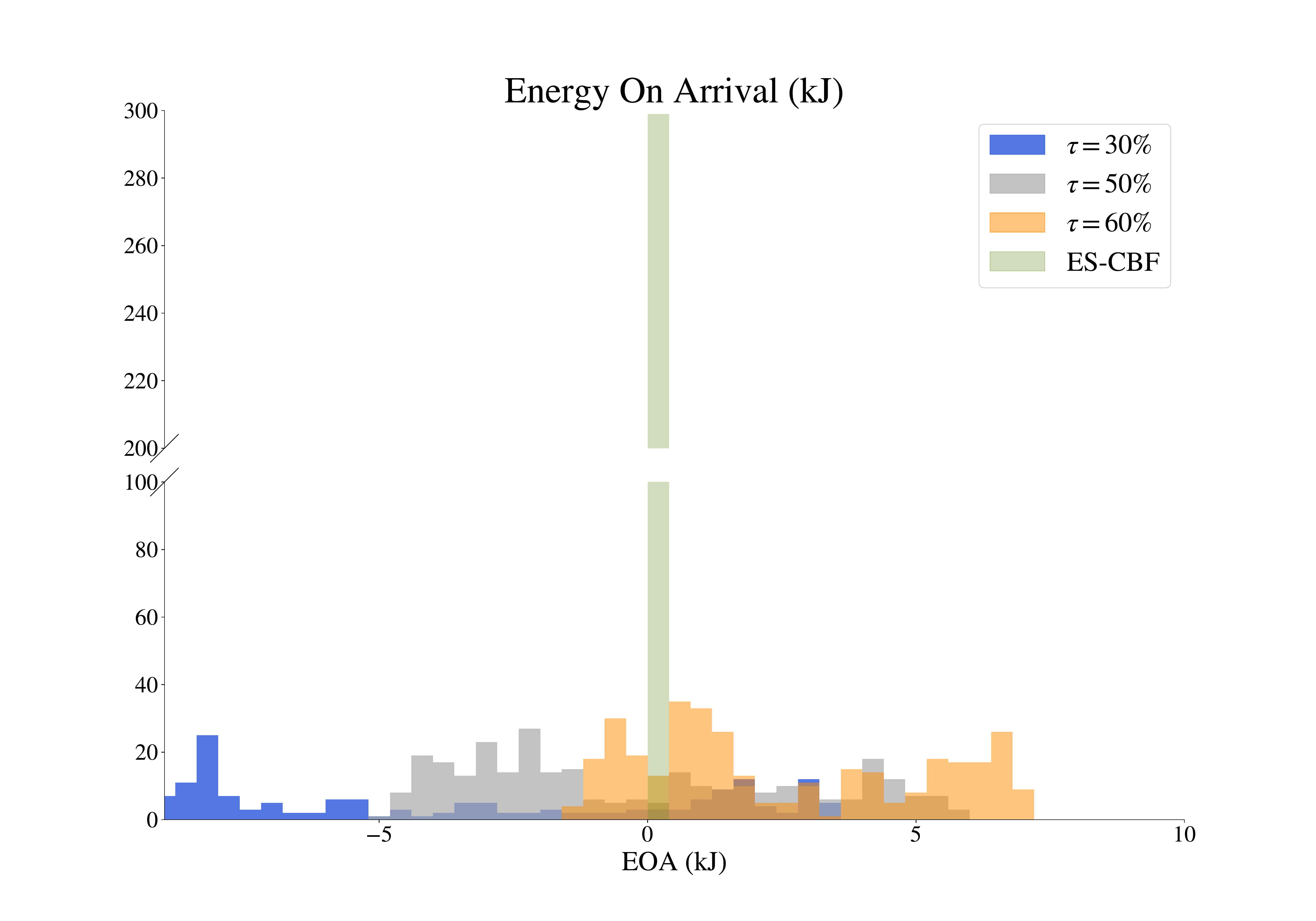}
		\caption{Energy on arrival for simulations with $v_r=0.1$m/s.}
		\label{fig:box_eoa2}
	\end{subfigure}
	\caption{Comparison between baseline method for three different threshold percentages $\tau$ and our CBF-based approach for energy sufficiency, denoted \emph{ES-CBF}. Simulation data for total area covered and energy values upon arrival to charging station is collected for three test environments and two different desired return speeds ($v_r=0.5$m/s and $v_r=0.1$m/s), each run for 50 instances with different random seeds. The red dots in Figure~\ref{fig:box_area} indicate area values corresponding to simulation instances where the energy budget is violated at least once, while green dots indicate no violation of energy budget. Histograms ~\ref{fig:box_eoa} and ~\ref{fig:box_eoa2} show distribution of energy on arrival (EOA) values for $v_r=0.5$m/s and $v_r=0.1$m/s respectively.}
	\label{fig:box}
\end{figure*} 

We note that in Figure~\ref{fig:box_area} for $v_r = 0.1$m/s the area covered
consistently increases with decreasing return energy threshold percentages, i.e.
when a robot starts returning to recharge at $\tau=0.3$ it typically covers more
area than when it needs to return at $\tau=0.5$ as it uses more of its energy to
carry out its mission. Although for this case the area covered using our
proposed method is less than baseline (box plot median value of 268m$^2$ for
$\tau=0.3$ and 204m$^2$ for ES-CBF, meaning a 24\% reduction in TAC in the worst
case), baseline results have significantly more red dots than ES-CBF, indicating
significantly more violations of energy budget than ES-CBF, so although TAC is
more for baseline the energy budget is violated for most test runs.

For $v_r = 0.5$m/s, Figure~\ref{fig:box_area} shows an overall increase in TAC
for both baseline and ES-CBF compared to the case where $v_r = 0.1$m/s.
Moreover, we notice an increase in TAC in case of ES-CBF over baseline with
$\tau=0.5$ and $\tau=0.6$ (5\% and 20\% increase in TAC respectively), while
there is a decrease of 10\% in TAC between baseline with $\tau=0.3$ and ES-CBF.
For baseline cases with $\tau=0.5$ and $\tau=0.6$ there are no red dots at
$v_r$=0.5m/s in Figure~\ref{fig:box_area}, but there are numerous violations of
energy sufficiency for baseline with $\tau=0.3$. Overall, choosing a threshold
value to return to the charging station depends on the map and task at hand, and
does not provide guarantees for either optimal mission success or return or
respecting the energy budget. On the contrary, our method guarantees that the
energy budget is fully exploited, without affecting mission performance.

It is also worth noting from Figure~\ref{fig:box_eoa} and \ref{fig:box_eoa2}
that the distribution of EOA values is very tight around zero, meaning that
robots applying ES-CBF framework arrive to the station without violating the
energy budget allocated and without wasting energy, i.e. robots do not arrive
too late or too early, which means full utilization of the energy allocated. On
the other hand for the baseline method we can see in Figure~\ref{fig:box_eoa2}
for $v_r=0.1$ that EOA values are more widely dispersed around zero, with a
significant portion of the values being positive or negative, indicating robots
arriving to station either too early or too late, which is a direct result of
not considering needed energy to return back to station (e.g. a robot could
reach the return threshold $\tau$ when it is relatively close to the station so
it will eventually arrive back with a significant amount of energy, and it may
reach $\tau$ when it is far away so that the energy budget is fully depleted on
the way back). We also note that for $v_r=0.5$m/s the values of EOA are mostly
positive for baseline with $\tau=0.5$ and $\tau=0.6$ indicating significant
non-utilized energy when the robot returns back to recharge. Therefore for these
two baseline cases the robots utilize less energy for exploration and this
explains the advantage that ES-CBF has in TAC over these two baseline cases at
$v_r=0.5$m/s.


\subsection{Hardware setup}
We study the performance of the energy-sufficiency approach using an AgileX
Scout Mini rover equipped with a mission payload to perform exploration and
mapping missions, shown in Figure~\ref{fig:rover_setup}. The robot has an Ouster
OS0-64 lidar as the primary perception sensor and a high-performance Inertial
Measurement Unit (IMU) from VectorNav. A mesh communication router implements
IEEE802.11s to communicate with the base station.

\begin{figure}[!htb]
	\centering
	\includegraphics[width=0.8\columnwidth]{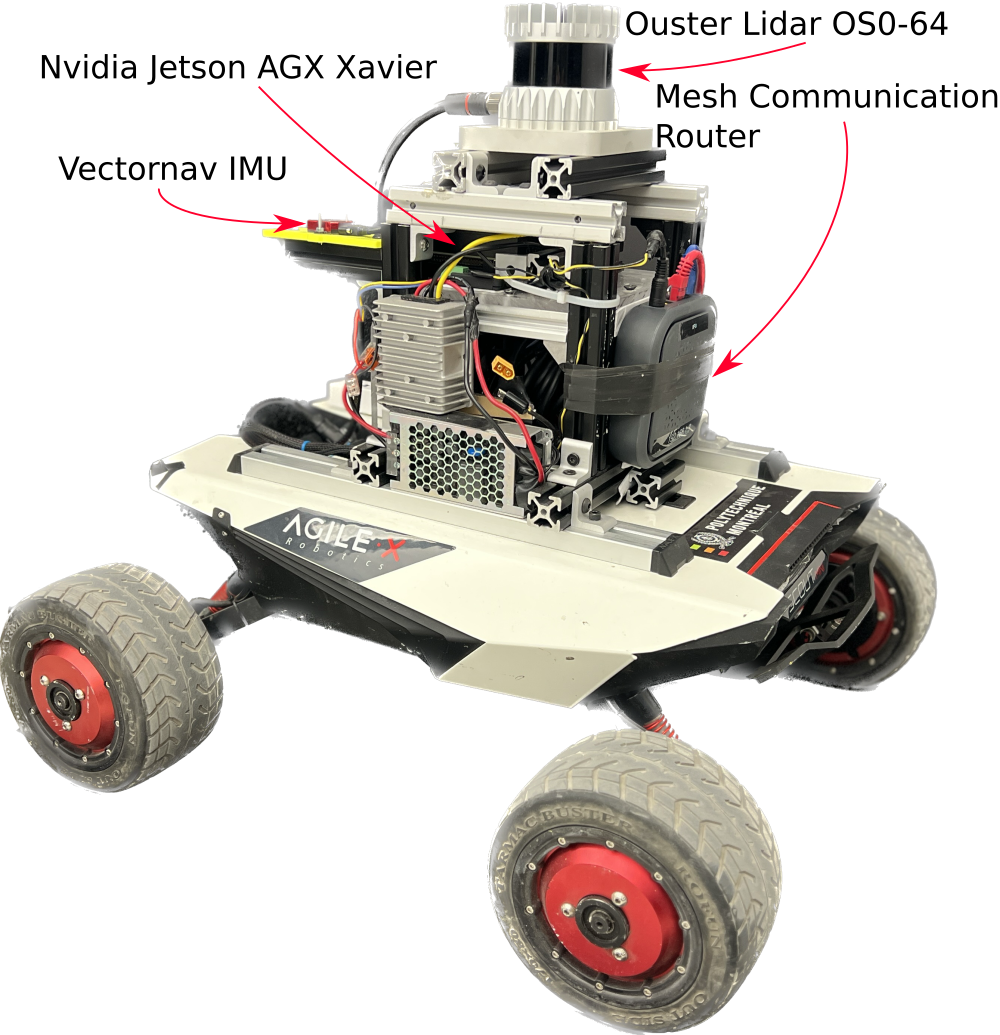}
	\caption{Experimental setup we use to perform the exploration mission while
    maintaining energy sufficiency. It consists of an AgileX Scout Mini rover with a mission payload mounted on top as demonstrated above.}
	\label{fig:rover_setup}
\end{figure}

Figure~\ref{fig:arc}B shows the software architecture deployed on the Nvidia
Jetson AGX Xavier of the rover. We implement a full stack Simultaneous
Localization And Mapping (SLAM) system, mesh communication system, and a local
planner for collision avoidance. Unlike the simulation, the rover performs a
full-stack 3D localization and mapping using a variant of
LVI\_SAM~\citep{lvisam2021shan} with a front-end generating pose graphs and a
back-end performing map optimization. The mapping \citep[Voxblox]{oleynikova2017voxblox} and planning
(Gbplanner\citation) modules and controller were the same for both simulation and
hardware. We apply the path shortening procedure in \citep[Algorithm
1]{cimurs2017bezier} on the output of the path planner, as we do in the
simulation setup.

We estimate the robot's power consumption using voltage and electric current
values. Upon arrival to charging region the robot executes a simple docking
manoeuvre to enter the charging region and carries out a simulated battery swap
operation to replenish the robot's energy. We point out that such setup does not
affect the validity of the experiment and could be justified by the fact that
the energy consumed by the robot is consistent, meaning that the power needed to
move the robot at a certain speed does not depend on the battery, but rather
depends on the robot's mechanical properties and the environment which are both
static.

\subsection{Hardware results}
We apply the proposed method on our experimental setup and we show the results
in Figure~\ref{fig:exp_result}, as well as the point cloud map for the
experimental run in Figure~\ref{fig:rover_explore}. In this experiment the robot
is tasked with exploring and mapping a set of corridors and hallways while
returning back to a charging spot. The map generated by the robot and the
trajectories taken by the robot during an experimental run are shown in
Figure~\ref{fig:rover_explore}.



\begin{figure*}[!htb]
	\centering
	\begin{subfigure}{0.49\textwidth}
		\centering
		\includegraphics[width=\linewidth]{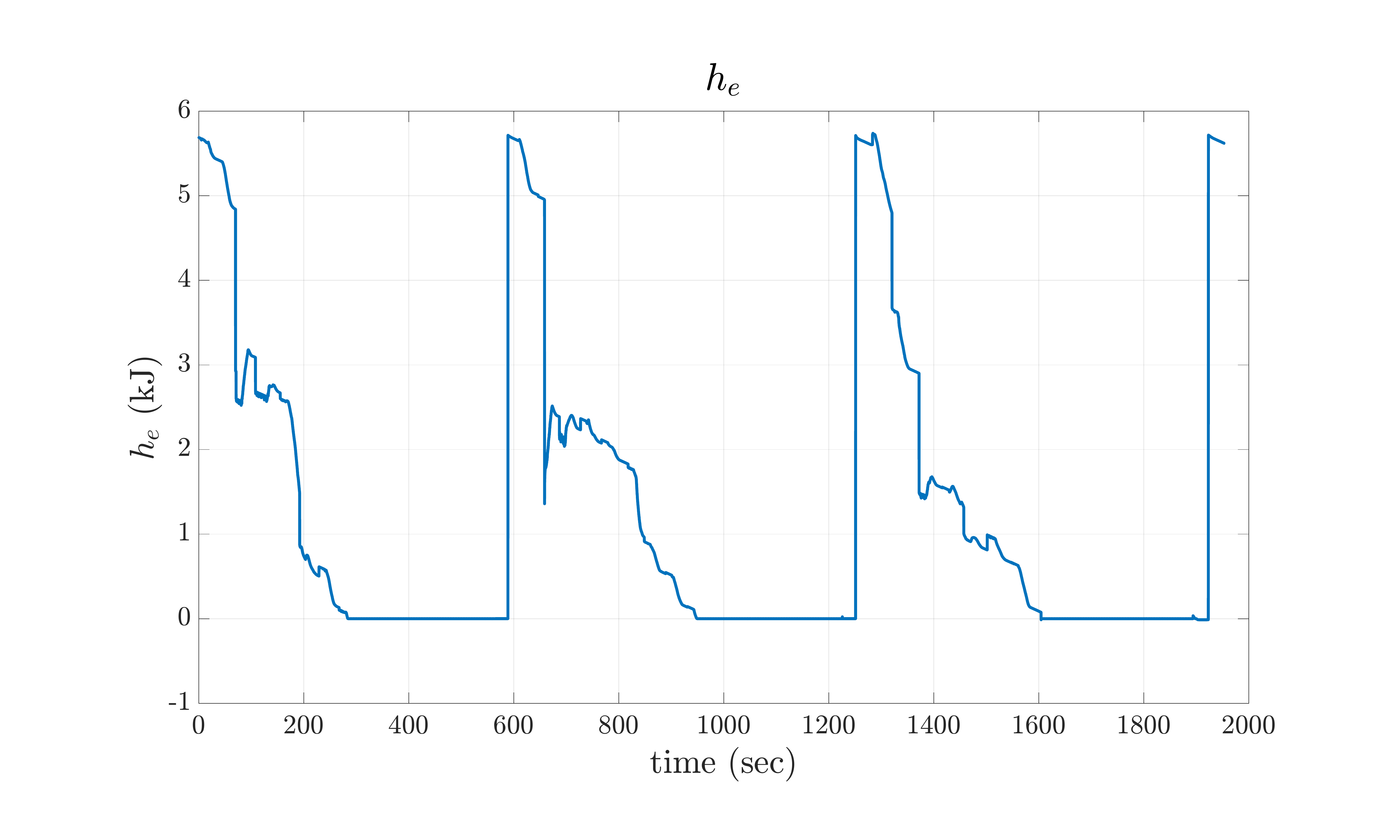}
		\caption{Energy sufficiency CBF ($h_e$)}
		\label{fig:rover_he}
	\end{subfigure}
	\begin{subfigure}{0.49\textwidth}
		\centering
		\includegraphics[width=\linewidth]{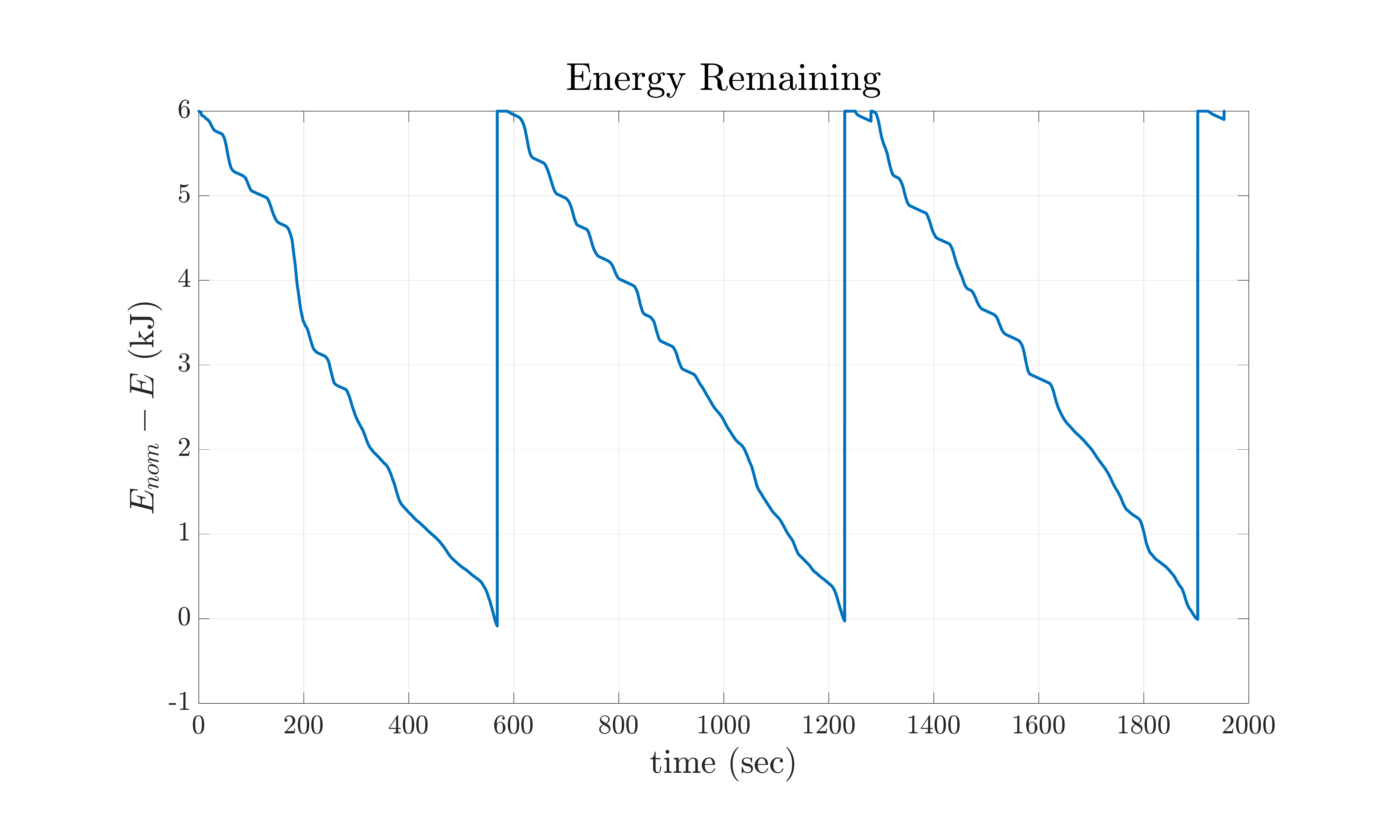}
		\caption{Remaining energy}
		\label{fig:rover_energy}
	\end{subfigure}
	\begin{subfigure}{0.49\textwidth}
		\centering
		\includegraphics[width=\linewidth]{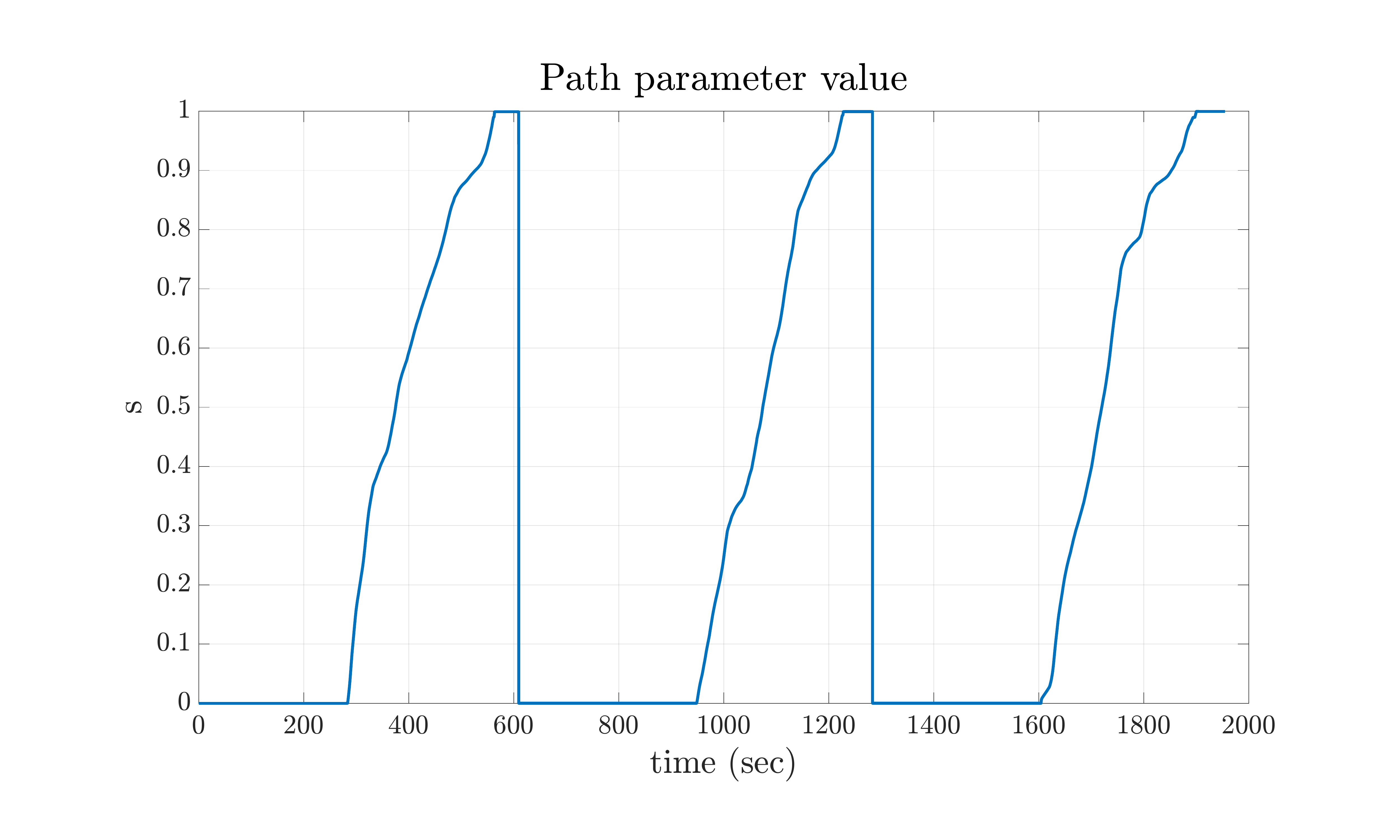}
		\caption{Path parameter}
		\label{fig:rover_s}
	\end{subfigure}
	\begin{subfigure}{0.49\textwidth}
		\centering
		\includegraphics[width=\linewidth]{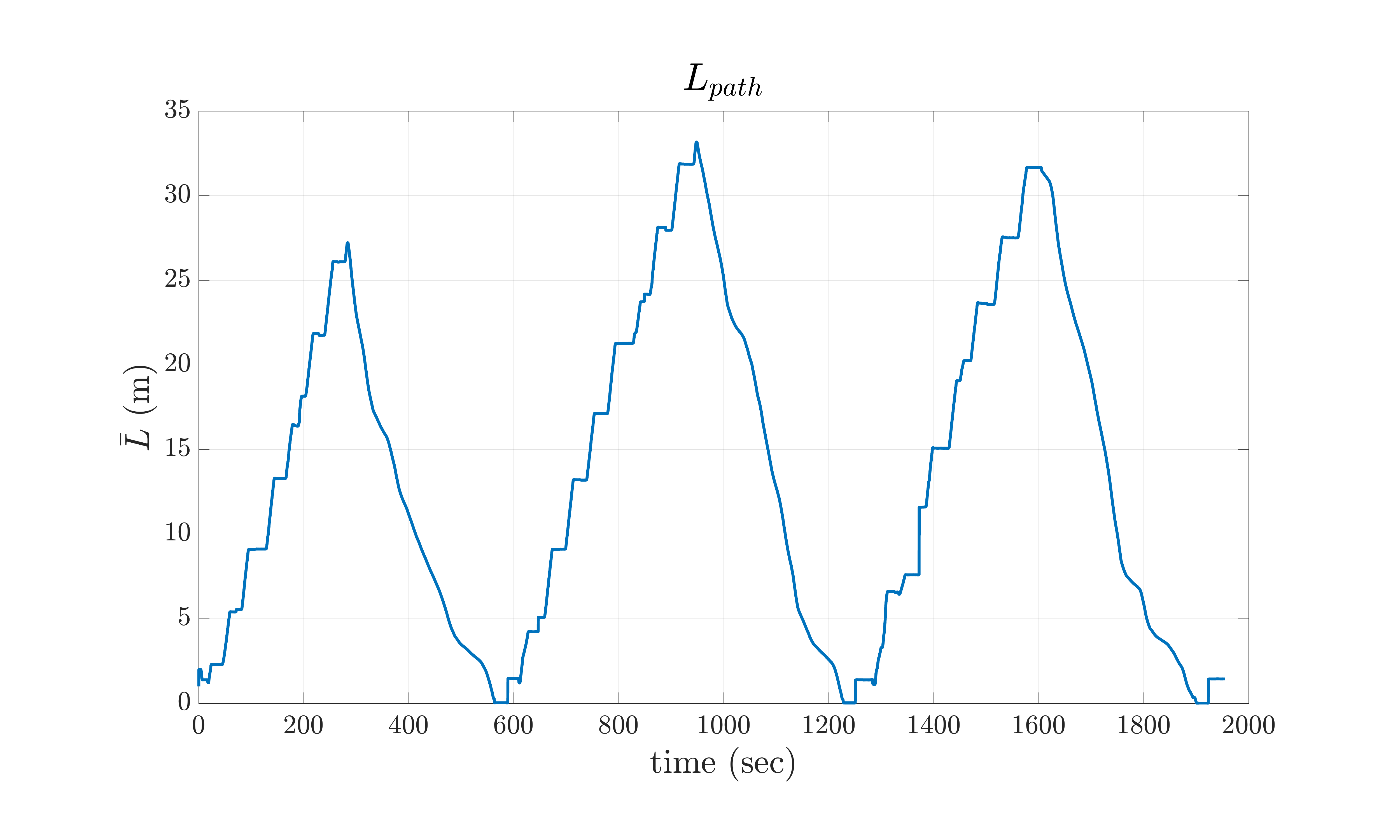}
		\caption{Path length}
		\label{fig:rover_L}
	\end{subfigure}
	\caption{Results from robot experiment. We emphasize the fact that the robot returns back to station with a depleted battery, meaning that our proposed framework is capable of maintaining energy sufficiency.}
	\label{fig:exp_result}
\end{figure*}

For this experiment the allocated energy budget is 7kJ and the desired return
speed was set to $v_r = 0.2$m/s. We note from Figure~\ref{fig:rover_energy} that
the robot consumes the energy budget fully by the time it arrives back to
recharge, which shows the ability of our proposed approach to maintain energy
sufficiency in cluttered environments. From Figure~\ref{fig:rover_s} the path
parameter value is equal to zero as long as the energy sufficiency constraint
\eqref{eqn:suff_constraint} is not violated, then when $h_e \approx 0$ (in
Figure~\ref{fig:rover_he}, indicating energy sufficiency being close to the
boundary of its safe set) it starts to increase and drive the robot back towards
the station along the path. Figure~\ref{fig:rover_explore} shows examples of
paths taken by the robot while exploring its environment.

\section{Conclusions \label{sec:conc}}
In this work we present a CBF based method that provides guarantees on energy
sufficiency of a ground robot in an unknown and unstructured environment. Our
approach is to augment a sampling based path planner \cite[like
GBplanner,][]{dang2019graph} by a CBF layer, extending our work
\citep{fouad2022energy} to endow a robot with the ability to move along a path
in an energy aware manner such that the total energy consumed does not exceed a
predefined threshold. We described a continuous representation for piecewise
continuous paths produced by a path planner. We define a reference point that
slides along this continuous path depending on robot's energy. We
show the relationship between the constraints for controlling both the reference
point and robot's position and show conditions for these constraints to
complement each other. We demonstrate how these ideas are valid for dynamic
cases in which the path planner updates the path frequently and the robot is
carrying out a mission. Finally we demonstrate a method for adapting our
framework, based on a single integrator model, to a unicycle model. We highlight
through simulation and experimental results the ability of our method to deal
with unknown and unstructured environments while maintaining energy sufficiency.

Our proposed framework has the advantage of flexibility and adaptability to
different types of robot models and environments. Such framework can be useful
in many application where long term autonomy is needed, e.g. underground and
cave exploration, robot reinforcement learning, self driving cars in urban
environments, and many others.

As a future work we plan to extend our framework to be able to handle
coordination between multiple robots to share a charging station in the same
spirit as \citealt{fouad2022energy}, while being able to deal with
unstructured and complex environments. Another direction could be using online
estimation and learning techniques to handle power models that are variable by
nature and need constant adaptation, such as wind fields, snowy conditions, etc.

\section{Acknowledgements}
The authors would like to thank Koresh Khateri for his useful insights and fruitful discussions, as well as Sameh Darwish and Karthik Soma for their help with the experiments.

\section{Declaration of conflicting interests}
The Authors declare that there is no conflict of interest.
\section{Funding}
This work was supported by the National Science and Engineering Research Council of Canada [Discovery Grant number 2019-05165].


\bibliographystyle{SageH.bst}
\bibliography{refs_ral}

\end{document}